\def\final{0}
\DeclareMathAlphabet{\mathsf}{OT1}{cmss}{m}{n}
	\SetMathAlphabet{\mathsf}{bold}{OT1}{cmss}{bx}{n}
\definecolor{DarkGreen}{rgb}{0.15,0.5,0.15}
\definecolor{DarkRed}{rgb}{0.6,0.2,0.2}
\definecolor{DarkBlue}{rgb}{0.15,0.15,0.55}
\definecolor{DarkPurple}{rgb}{0.4,0.2,0.4}
\definecolor{gray}{gray}{0.4}
\newcommand{\gray}[1]{{\textcolor{gray}{#1}}}
\newcommand{\mynote}[2]{{\color{#1} \marginpar{\tiny #2}}}
\newcommand{\mybignote}[2]{{\color{#1} $\langle \langle$ #2$\rangle \rangle$}}
\newcommand{\mynote}[2]{}
\newcommand{\mybignote}[2]{}
\newcolumntype{Y}{>{\centering\arraybackslash}X}
\newcommand{\eps}{\varepsilon}
\newcommand{\N}{\mathbb{N}}
\newcommand{\AAA}{\mathcal A}
\newcommand{\BBB}{\mathcal B}
\newcommand{\DDD}{\mathcal D}
\newcommand{\error}{{\rm error}}
\newcommand{\VC}{{\rm VC}}
\newtheorem{theorem}{Theorem}[section]
\newtheorem{lemma}[theorem]{Lemma}
\newtheorem{lem}[theorem]{Lemma}
\newtheorem{claim}[theorem]{Claim}
\newtheorem{remark}[theorem]{Remark}
\newtheorem{prop}[theorem]{Proposition}
\newtheorem{obs}{Observation}
\theoremstyle{definition}
\newtheorem{definition}[theorem]{Definition}
\newcommand{\Lap}{\operatorname{Lap}}
\newcommand{\nope}[1]{}
\newcommand{\lT}{t_\ell}
\newcommand{\uT}{t_u}
\newcommand{\hlT}{\hat{t}_\ell}
\newcommand{\huT}{\hat{t}_u}
\newcommand{\leftsymb}{\operatorname{L}}
\newcommand{\rightsymb}{\operatorname{R}}
\newcommand{\haltsymb}{\top}
\newcommand{\remove}[1]{}
\renewcommand{\fnum@algorithm}{\fname@algorithm}
\newlength{\fboxhsep}
\newlength{\fboxvsep}
\newlength{\fboxtoprule}
\newlength{\fboxbottomrule}
\newlength{\fboxleftrule}
\newlength{\fboxrightrule}
\def\@frameb@xother#1{%
  \@tempdima\fboxtoprule
  \advance\@tempdima\fboxvsep
  \advance\@tempdima\dp\@tempboxa
  \hbox{%
    \lower\@tempdima\hbox{%
      \vbox{%
        \hrule\@height\fboxtoprule
        \hbox{%
          \vrule\@width\fboxleftrule
          #1%
          \vbox{%
            \vskip\fboxvsep
            \box\@tempboxa
            \vskip\fboxvsep}%
          #1%
          \vrule\@width\fboxrightrule}%
        \hrule\@height\fboxbottomrule}%
    }%
  }%
}
\long\def\fboxother#1{%
  \leavevmode
  \setbox\@tempboxa\hbox{%
    \color@begingroup
    \kern\fboxhsep{#1}\kern\fboxhsep
    \color@endgroup}%
  \@frameb@xother\relax}
\title{Private Everlasting Prediction}
\author{Moni Naor\thanks{Department of Computer Science and Applied Math, Weizmann Institute of Science {\tt moni.naor@weizmann.ac.il}. Incumbent of the Judith Kleeman Professorial Chair. Research supported in part by grants from the Israel Science Foundation (no.2686/20), by the Simons Foundation Collaboration on the Theory of Algorithmic Fairness and by the Israeli Council for Higher
Education (CHE) via the Weizmann Data Science Research Center.}
\and Kobbi Nissim\thanks{Department of Computer Science, Georgetown University {\tt kobbi.nissim@georgetown.edu}. Partially supported by NSF grant No. CNS-2001041 and a gift to Georgetown University.}
\and Uri Stemmer\thanks{Blavatnik School of Computer Science, Tel Aviv University, and Google Research. \texttt{u@uri.co.il}. Partially supported by the Israel Science Foundation (grant 1871/19) and by
	Len Blavatnik and the Blavatnik Family foundation.}
\and Chao Yan\thanks{Department of Computer Science, Georgetown University {\tt cy399@georgetown.edu}. Partially supported by a gift to Georgetown University.}
}
\date{May 16, 2023}
\begin{document}

\maketitle

\begin{abstract} 

A private learner is trained on a sample of labeled points and generates a hypothesis that can be used for predicting the labels of newly sampled points while protecting the privacy of the training set [Kasiviswannathan et al., FOCS 2008]. Research uncovered that private learners may need to exhibit significantly higher sample complexity than non-private learners as is the case with, e.g., learning of one-dimensional threshold functions [Bun et al., FOCS 2015, Alon et al., STOC 2019].

We explore prediction as an alternative to learning. Instead of putting forward a hypothesis, a predictor answers a stream of classification queries. 
Earlier work has considered a private prediction model with just a single classification query [Dwork and Feldman, COLT 2018]. We observe that when answering a stream of queries, a predictor must modify the hypothesis it uses over time, and, furthermore, that it must use the queries for this modification, hence introducing potential privacy risks with respect to the queries themselves. 

We introduce {\em private everlasting prediction} taking into account the privacy of both the training set {\em and} the (adaptively chosen) queries made to the predictor. 
We then present a generic construction of private everlasting predictors in the PAC model.
The sample complexity of the initial training sample in our construction is quadratic (up to polylog factors) in the VC dimension of the concept class. Our construction allows prediction for all concept classes with finite VC dimension, and in particular threshold functions with constant size initial training sample, even when considered over infinite domains, whereas it is known that the sample complexity of privately learning threshold functions must grow as a function of the domain size and hence is impossible for infinite domains.
\end{abstract}

\section{Introduction}

A PAC learner is given labeled examples $S=\left\{(x_i,y_i)\right\}_{i\in[n]}$  drawn i.i.d.\ from an unknown underlying probability distribution $\DDD$ over a data domain $X$ and outputs a hypothesis $h$ that can be used for predicting the label of fresh points $x_{n+1},x_{n+2},\ldots$ sampled from the same underlying probability distribution $\DDD$~\citep{Valiant84}. It is well known that when points are labeled by a concept selected from a concept class $C=\left\{c:X\rightarrow \{0,1\}\right\}$ then learning is possible with sample complexity proportional to the VC dimension of the concept class.

Learning often happens in settings where the underlying training data is related to individuals and privacy-sensitive and where a learner is required, for legal, ethical, or other reasons, to protect personal information from being leaked in the learned hypothesis $h$. 
Private learning was introduced by \cite{KLNRS08}, as a theoretical model for studying such tasks. A {\em private learner} is a PAC learner that preserves differential privacy with respect to its training set $S$. That is, the learner's distribution on outcome hypotheses must not depend too strongly on any single example in $S$.
Kasiviswanathan et al.\ showed
via a generic construction 
that any finite concept class can be learned privately and with sample complexity $n=O(\log |C|)$. This value ($O(\log |C|)$)  can be significantly higher than the VC dimension of the concept class $C$ (see below).

It is now understood that the gap between the sample complexity of private and non-private learners is essential -- an important example is private learning of threshold functions (defined over an ordered domain $X$ as $C_{thresh}=\{c_t\}_{t\in X}$ where $c_t(x)=\mathbb{1}_{x\geq t}$), which requires sample complexity that is asymptotically higher than the (constant) VC dimension of $C_{thresh}$. 
In more detail, with {\em pure} differential privacy, the sample complexity of private learning is characterized by the representation dimension of the concept class \citep{BNS13}. 
The representation dimension of $C_{thresh}$ (hence, the sample complexity of private learning thresholds) is $\Theta(\log |X|)$~\citep{FX14}. 
With {\em approximate} differential privacy, the sample complexity of learning threshold functions is $\Theta(\log^*|X|)$~\citep{BNS13b, BNSV15,AlonLMM19,KaplanLMNS20,CohenLNSS22}. 
Hence, in both the pure and approximate differential privacy cases, the sample complexity grows with the cardinality of the domain $|X|$ and no private learner exists for threshold functions over infinite domains, such as the integers and the reals, whereas low sample complexity non-private learners exist for these tasks.

\paragraph{Privacy preserving (black-box) prediction.} 

\cite{DworkF18} proposed privacy-preserving prediction as an alternative for private learning. Noting that ``[i]t is now known that for some basic learning problems [. . .] producing an accurate private model requires much more data than learning without privacy,'' they considered a setting where ``users may be allowed to query the prediction model on their inputs only through an appropriate interface''.
That is, a setting where the learned hypothesis is not made public. Instead, it may be accessed in a ``black-box'' manner via a privacy-preserving query-answering prediction interface. The prediction interface is required to preserve the privacy of its training set $S$:

\begin{definition}[private prediction interface~\citep{DworkF18} (rephrased)] 
\label{def:DworkFeldmanPredictionInterface}
A prediction interface $M$ is $(\epsilon,\delta)$-differentially private if for every interactive query generating algorithm $Q$, the output of the interaction between $Q$ and $M(S)$ is $(\epsilon,\delta)$-differentially private with respect to $S$.
\end{definition}

Dwork and Feldman focused on the setting where the entire interaction between $Q$ and $M(S)$ consists of issuing a single prediction query and answering it:
\begin{definition}[Single query prediction~\citep{DworkF18}]
\label{def:DworkFeldmanPrivatePrediction} Let $M$ be an algorithm that given a set of labeled examples $S$ and an unlabeled point $x$ produces a label $y$. 
$M$ is an  $(\epsilon,\delta)$-differentially private prediction algorithm if for every $x$, the output $M(S,x)$  is $(\epsilon,\delta)$-differentially private with respect to $S$.
\end{definition}

W.r.t.\ answering a single prediction query, Dwork and Feldman showed that the sample complexity of such predictors is proportional to the VC dimension of the concept class. 

\subsection{Our contributions}

In this work, we extend private prediction beyond a single query to answering any sequence -- {\em unlimited in length} -- of prediction queries. We refer to this as {\em private everlasting prediction}. Our goal is to present a generic private everlasting predictor with low training sample complexity $|S|$. 

\paragraph{Private prediction interfaces when applied to a large number of queries.}
We begin by examining private everlasting prediction under the framework of Definition~\ref{def:DworkFeldmanPredictionInterface}. 
We prove:

\begin{theorem}[informal version of Theorem~\ref{thm:SdoesNotSuffice}]\label{thm:SdoesNotSufficeIntro}
Let $\cal A$ be a private everlasting prediction interface for concept class $C$ and assume $\cal A$ bases its predictions solely on the initial training set $S$, then there exists a private learner for concept class $C$ with sample complexity $|S|$.
\end{theorem}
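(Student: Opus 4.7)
The plan is to turn $\cA$ directly into a private PAC learner by freezing its internal state. Given a training sample $S$, the learner $L$ initializes $\cA$ on $S$ with fresh internal randomness $r$ and releases a hypothesis $h_{S,r}$ defined as follows: on input $x$, return whatever $\cA$ would answer to query $x$ (under state determined by $S$ and $r$). Because, by hypothesis, $\cA$'s predictions depend only on $S$ and its own randomness -- not on the sequence of past queries -- the function $h_{S,r}$ is a well-defined, fixed (random) function of $S$ once $r$ is sampled, and no further interaction with the training set is ever required to evaluate it.

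Accuracy is almost immediate. The everlasting prediction guarantee states that for every $t$, the $t$-th query, drawn i.i.d.\ from $\DDD$, is labeled correctly by $\cA$ with high probability. Under the ``predictions depend only on $S$'' hypothesis, this is exactly the statement that $\Pr_{x\sim\DDD}[h_{S,R}(x)\neq c(x)]$ is small in expectation over $S$ and $R$. A standard Markov-style argument, together with tuning the target error parameter, then converts this into a $(\alpha,\beta)$-PAC guarantee for $L$ with the same sample complexity $|S|$.

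Privacy follows by a simulation/post-processing reduction. Any algorithm $D$ that acts on the released hypothesis does so only through classification queries; hence the view of $D$ can be generated by a query-producing algorithm $Q_D$ conversing with $\cA(S)$: $Q_D$ plays the role of $D$, forwarding each query that $D$ issues to $\cA$, relaying the answers, and ultimately outputting whatever $D$ outputs. By Definition~\ref{def:DworkFeldmanPredictionInterface}, the transcript of the interaction between $Q_D$ and $\cA(S)$ is $(\epsilon,\delta)$-DP with respect to $S$, and by post-processing so is the output of $D$. Hence the distribution of the random function $h_{S,R}$ is itself $(\epsilon,\delta)$-DP in $S$ when accessed through classification queries.

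The main obstacle I anticipate is the question of representation: a finite bit-string encoding of $h_{S,r}$ would naively have to include $(S,r)$, which is obviously not private. The natural resolution -- consistent with the prediction-based framework of~\cite{DworkF18} -- is to treat the learner's output as an abstract (oracle) hypothesis, with privacy defined with respect to its observable classification behavior; the reduction above then gives the claim cleanly. Formalizing measurability of distinguishing events on the function space $\{0,1\}^{\univ}$ (events determined by countably many coordinates under the product $\sigma$-algebra, which suffices for any statement about $\DDD$-error) is routine and does not change the sample complexity $|S|$.
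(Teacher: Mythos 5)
There is a genuine gap, and it sits at the heart of your construction: you conflate ``predictions do not depend on past queries'' with ``predictions are answered by a single fixed hypothesis.'' In the setting of Theorem~\ref{thm:SdoesNotSuffice}, $\AAA$ is formalized as $(M_0,M_1,M_2)$ where the hypothesis is \emph{updated every round}, $(h_{r+1},\sigma_{r+1})=M_2(\sigma_r)$; the sequence $\{h_r\}_{r\ge 0}$ depends only on $S$ and internal randomness, but the $r$-th query is answered with $h_{r-1}$, which can differ from round to round. Consequently your object $h_{S,r}(x)=$ ``whatever $\AAA$ would answer to query $x$'' is not well-defined without fixing the round at which $x$ is asked. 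If you fix it (say, to $h_0$), accuracy is fine, but your privacy argument collapses: a distinguisher that evaluates the released hypothesis at many points cannot be simulated by a query-generating algorithm $Q_D$ interacting with $\AAA(S)$, because in that interaction the successive answers come from $h_0,h_1,h_2,\dots$ rather than from one and the same function. So Definition~\ref{def:DworkFeldmanPredictionInterface} plus post-processing does \emph{not} yield that releasing $h_0$ (or any single $h_r$) is $(\eps,\delta)$-differentially private. Your simulation is faithful only in the special case $h_r\equiv h_0$, which is exactly the paper's warm-up observation, not the theorem. Relatedly, your fallback of redefining privacy ``with respect to observable classification behavior'' of an abstract oracle does not rescue the argument: to invoke private-learning lower bounds you must produce a learner whose output hypothesis is itself differentially private as a released object, and on a finite domain that is equivalent to the full classification behavior being private --- which is precisely what the broken simulation was supposed to establish.

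The paper's proof handles the changing-hypothesis case differently: Algorithm \texttt{HypothesisLearner} queries the interface at many rounds, collects for each $x\in X$ the labels it received across rounds, and assembles a hypothesis $h$ by picking, independently for each $x$, a uniformly random collected label. Since every $h_r$ is $\alpha$-good (with probability $1-\beta$), the assembled $h$ has expected error at most $\alpha$, and Markov gives error $O(\alpha)$ with constant probability; \texttt{AccuracyBoost} then runs this on $O(\log(1/\beta))$ fresh samples and takes a majority, which is where the $O(\alpha),O(\beta),O(n\log(1/\beta))$ parameters of Theorem~\ref{thm:SdoesNotSuffice} come from. Privacy there really is a post-processing of the query--answer transcript, because the final hypothesis is built only from answers actually returned by the interface. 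To repair your write-up you would need either to restrict the theorem to non-adaptive, time-invariant predictors (much weaker than claimed) or to adopt some such cross-round aggregation step.
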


This means that everlasting predictors that base their prediction solely on the initial training set $S$ are subject to the same complexity lowerbounds as private learners.
Hence, to avoid private learning lowerbounds, private everlasting predictors need to rely on more than the initial training sample $S$ as a source of information about the underlying probability distribution and the labeling concept. 

In this work, we choose to allow the everlasting predictor to rely on the queries made - which are unlabeled points from the domain $X$, assuming the queries are drawn from the same distribution the initial training $S$ is sampled from. This requires changing the privacy definition, as Definition~\ref{def:DworkFeldmanPredictionInterface} does not protect the queries made, yet the classification given to a query can now depend on and hence reveal information provided in queries made earlier.

\paragraph{A definition of private everlasting predictors.} Our definition of private everlasting predictors is motivated by the observations above.
Consider an algorithm $\AAA$ that is first fed with a training set $S$ of labeled points and then executes for an unlimited number of rounds, where in round $i$ algorithm $\AAA$ receives as input a query point $x_i$ and produces a label $\hat y_i$. 
We say that $\AAA$ is an everlasting predictor if, when the (labeled) training set $S$ and the (unlabeled) query points are coming from the same underlying distribution, $\AAA$ answers each query points $x_i$ with a good hypothesis $h_i$, and hence the label $\hat y_i$ produced by $\AAA$ is correct with high probability.  
We say that $\AAA$ is a {\em private} everlasting predictor if its sequence of predictions $\hat y_1,\hat y_2, \hat y_3, \ldots$ protects both the privacy of the training set $S$ {\em and} the query points $x_1,x_2,x_3,\ldots$ in face of any adversary that adaptively chooses the query points. 

We emphasize that while private everlasting predictors need to exhibit average-case utility -- as good prediction is required only for the case where $S$ and $x_1,x_2,x_3,\ldots$ are selected i.i.d.\ from the same underlying distribution -- our privacy requirement is worst-case, and holds in face of an {\em adaptive} adversary that chooses each query point $x_i$ after receiving the prediction provided for $(x_1,\ldots, x_{i-1})$, and not necessarily in accordance with any probability distribution.  

\paragraph{A generic construction of private everlasting  predictors.} Our construction, called \texttt{GenericBBL}, executes in rounds. The input to the first round is the initial labeled training set $S$, where the number of samples in $S$ is quadratic in the VC dimension of the concept class. Each other round begins with a collection $S_i$ of labeled examples and ends with newly generated collection of labeled examples $S_{i+1}$. The set $S$ is assumed to be consistent with some concept $c\in C$ and our construction ensures that this is the case also for the sets $S_i$ for all $i$. We briefly describe the main computations performed in each round of \texttt{GenericBBL}.\footnote{Important details, such as privacy amplification via sampling and management of the learning accuracy and error parameters are omitted from the description provided in this section.}
\begin{itemize}[leftmargin=20px]
    \item {\bf Round initialization:} At the outset of a round, the labeled set $S_i$ is partitioned into sub-sets, each with number of samples which is proportional to the VC dimension (so we have $\approx \frac{|S_i|}{\VC(C)}$ sub-sets). Each of the sub-sets is used for training a classifier non-privately, hence creating a collection of classifiers $F_i=\left\{f: X\rightarrow \{0,1\}\right\}$ that are used throughout the round.
    
    \item {\bf Query answering:} Queries are issued to the predictor in an online manner. Each query is first labeled by each of the classifiers in $F_i$. Then the predicted label is computed by applying a privacy-preserving majority vote on these intermediate labels. (By standard composition theorems for differential privacy, we could answer roughly $|F_i|^2\approx\left(\frac{|S_i|}{\VC(C)}\right)^2$ queries without exhausting our privacy budget.) To save on the privacy budget, the majority vote is based on the \texttt{BetweenThresholds} mechanism of \cite{BunSU16} (which in turn is based on the sparse vector technique). The algorithm fails when the privacy budget is exhausted. However, when queries are sampled from the underlying distribution then with a high enough probability the labels produced by the classifiers in $F_i$ would exhibit a clear majority.

    \item {\bf Generating a labeled set for the following round:} The predictions provided in the duration of a round are not guaranteed to be consistent with any concept in $C$ and hence cannot be used to set the following round. Instead, at the end of the round these points are relabeled consistently with $C$ using a technique developed by \cite{BeimelNS21} in the context of private semi-supervised learning. 
    Let $S_{i+1}$ denote the query points obtained during the $i$th round, after (re)labeling them. This is a collection of size $|S_{i+1}|\approx\left(\frac{|S_i|}{\VC(C)}\right)^2$. Hence, provided that $|S_i|\gtrsim\left(\VC(C)\right)^2$ we get that $|S_{i+1}|>|S_i|$ which allows us to continue to the next round with more data than we had in the previous round.
\end{itemize}

We prove:

\begin{theorem}[informal version of Theorem~\ref{thm:privateEverlastingPredictor}]
For every concept class $C$, Algorithm \texttt{GenericBBL} is a private everlasting predictor requiring an initial set of labeled examples which is (upto polylogarithmic factors) quadratic in the VC dimension of $C$.
\end{theorem}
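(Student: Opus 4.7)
The plan is to prove both privacy and utility by induction over rounds, maintaining two invariants in parallel. The \emph{privacy invariant} asserts that the sequence of predictions output through round $i$ is $(\eps_{\le i},\delta_{\le i})$-differentially private with respect to the full input stream $(S,x_1,x_2,\ldots)$, where the per-round budgets are drawn from a summable schedule (for instance $\eps_i=\Theta(\eps/i^2)$ and similarly for $\delta_i$) so that the cumulative budget stays below the target $(\eps,\delta)$. The \emph{utility invariant} asserts that, conditioned on a good event of probability at least $1-\sum_{j\le i}\beta_j$, the labeled set $S_i$ entering round $i$ is distributed close in total variation to an i.i.d.\ sample from $\DDD$ labeled by the target concept $c\in C$, and that its size has grown roughly quadratically from the previous round.

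For the privacy step, I would first observe that inside a single round, because $S_i$ is partitioned into disjoint blocks before training the ensemble $F_i$, replacing a single labeled example changes exactly one classifier in $F_i$, so the majority-vote count used within the round has sensitivity one. The entire adaptive interaction of the round is then a single invocation of \texttt{BetweenThresholds} on a sensitivity-one stream, which is $(\eps_i,\delta_i)$-DP by the analysis of \cite{BunSU16} regardless of how the queries are chosen. The end-of-round relabeling step is handled by the private semi-supervised procedure of \cite{BeimelNS21}, whose DP guarantee I would quote directly. A single query point $x_j$ can influence the system only through its immediate label (covered by the round-$i$ guarantee) and through its possible inclusion in $S_{i+1}$, which feeds only differentially private computations in subsequent rounds; composition across rounds, together with privacy amplification by subsampling when $S_i$ is partitioned, then yields the claimed overall $(\eps,\delta)$ bound against the adaptive adversary.

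For the utility step, I would use standard VC uniform convergence to argue that each non-privately trained $f\in F_i$ has generalization error at most a small constant with high probability, since each training block has size $\Omega(\VC(C))$. For a query drawn from $\DDD$ the expected disagreement among the $|F_i|$ ensemble members is therefore bounded well below $1/2$, and a Chernoff bound over the ensemble shows that the vote is decisive except with probability exponentially small in $|F_i|$. Hence \texttt{BetweenThresholds} returns the correct label on essentially every query and almost never triggers its halt condition within the intended round budget $T_i\approx |F_i|^2$. Finally, the \cite{BeimelNS21} relabeling produces $S_{i+1}$ that is labeled by some $c'\in C$ whose $\DDD$-disagreement with $c$ is small, so the utility invariant propagates to round $i+1$.

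The main obstacle is engineering the schedule $(\eps_i,\delta_i,\beta_i,T_i)$ so that three things happen simultaneously: (i) they compose to finite $\eps,\delta,\beta$ across an unbounded sequence of rounds; (ii) the quadratic growth $|S_{i+1}|\gtrsim(|S_i|/\VC(C))^2$ dominates all polylogarithmic overheads induced by shrinking $\eps_i$ and $\beta_i$, so that rounds become progressively more reliable rather than degrading; and (iii) the utility guarantee survives adaptive queries. Point (iii) is the most subtle, because \texttt{BetweenThresholds} could in principle be forced to halt early by an adversary driving votes near $1/2$; however, the i.i.d.\ assumption required for utility (and only for utility) prevents this: regardless of the adversary's strategy, the marginal distribution of each query point is still $\DDD$, so borderline votes remain exponentially rare and the round reliably completes its intended workload.
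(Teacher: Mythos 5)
Your high-level architecture (disjoint blocks trained non-privately, a sensitivity-one vote answered through \texttt{BetweenThresholds}, end-of-round relabeling via \texttt{LabelBoost}, subsampling amplification, shrinking accuracy parameters) matches the paper, but your privacy accounting misses the idea that actually makes \emph{everlasting} privacy possible. You propose charging every round against a summable schedule $\eps_i=\Theta(\eps/i^2)$ and composing over all rounds. The algorithm does not (and need not) shrink its per-phase privacy parameters: each phase runs \texttt{BetweenThresholds} at a fixed $(1,\delta)$ (allowing $c_i$ outputs of $\top$ as in Observation~\ref{bt-obs}) and \texttt{LabelBoost} with its fixed additive cost. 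The reason the total does not diverge over unboundedly many rounds is structural: a single training example or query point enters at most one phase's \texttt{BetweenThresholds} (as a change of one hypothesis in $F_i$, resp.\ one element of $\hat D_i$) and one application of \texttt{LabelBoost} producing $S_{i+1}$; the pair (view so far, $S_{i+1}$) is differentially private with respect to that element, and every subsequent answer is post-processing of $S_{i+1}$ and fresh queries. The constant per-element cost is then reduced to $(\eps,\delta)$ by subsampling (Claim~\ref{claim:sub-sampling}), not by a shrinking budget. You gesture at this bounded-influence fact but then layer the summable schedule on top; if you genuinely needed $\eps_i\to 0$ you would have to re-parameterize \texttt{BetweenThresholds} and \texttt{LabelBoost} each round (note \texttt{LabelBoost}'s $+3$ is fixed, so only heavier subsampling can shrink it) and redo utility with vanishing $\eps_i$ --- the obstacle you name but do not resolve, and which the post-processing argument removes entirely. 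Note also that in Figure~\ref{fig:AdversarialExperiment} the adversary never sees the answer on the differing query, and within a phase the queries are only query parameters of \texttt{BetweenThresholds} (the database is $F_i$), so a differing query point does not affect the other labels of its own phase at all; this is what the privacy proof for the unlabeled points relies on.

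On utility there is a concrete wrong step: the claim that the vote is decisive ``except with probability exponentially small in $|F_i|$'' by a Chernoff bound over the ensemble. For a fixed query $x$ the votes $f_t(x)$ are deterministic, and over $x\sim\DDD$ the error events of the different $f_t$ are correlated (their error sets may coincide), so no concentration in the ensemble size is available. The correct statement is an averaging argument: if $15/16$ of the $f_t$ are $\alpha_i$-good, then $\Pr_{x\sim\DDD}\left[\left|\tfrac{1}{T_i}q_x(F_i)-\tfrac12\right|\le \tfrac38\right]=O(\alpha_i)$, and a Chernoff bound over the $R_i$ i.i.d.\ queries then bounds the number of $\top$ outcomes by $c_i=64\alpha_i R_i$ with high probability. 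This $O(\alpha_i)$ rate (not an exponentially small one) is exactly what calibrates $c_i$ and hence the per-phase privacy parameter $\eps_i'\approx 1/\sqrt{c_i\log(1/\delta)}$, so your version would mis-set the parameters. Finally, the round invariant should not be ``TV-close to an i.i.d.\ sample labeled by $c$'': what propagates is that $S_{i+1}$ is consistent with \emph{some} $g_{i+1}\in C$ with $\error_{\DDD}(g_{i+1},c)\le\sum_{j\le i+1}\alpha_j$, and it is the halving $\alpha_{i+1}=\alpha_i/2$, $\beta_{i+1}=\beta_i/2$ that keeps the accumulated drift and failure probability bounded; you state the drifting-concept version later, so this part is fixable by making that the invariant from the start.
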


\subsection{Related work}

Beyond the work of \cite{DworkF18} on private prediction mentioned above, our work is related to private semi-supervised learning and joint differential privacy.

\paragraph{Semi-supervised private learning.} As in the model of private semi-supervised learning of \cite{BeimelNS21}, our predictors depend on both labeled and unlabeled sample. Beyond the obvious difference between the models (outputting a hypothesis vs.\ providing black-box prediction), a major difference between the settings is that in the work of \cite{BeimelNS21} all samples -- labeled and unlabeled - are given at once at the outset of the learning process whereas in the setting of everlasting predictors the unlabeled samples are supplied in an online manner. Our construction of private everlasting predictors uses tools developed for the semi-supervised setting, and in particular Algorithm \texttt{LabelBoost} of of Beimel et al. %

\paragraph{Joint differential privacy.} \cite{KearnsPRRU15} introduced joint differential privacy (JDP) as a relaxation of differential privacy applicable for mechanism design and games. For every user $u$, JDP requires that the outputs jointly seen by all other users would preserve differential privacy w.r.t.\ the input of $u$. 
Crucially, in JDP users select their inputs ahead of the computation. In our settings, the inputs to a private everlasting predictor are prediction queries which are chosen in an online manner, and hence a query can depend on previous queries and their answers. Yet, similarly to JDP, the outputs provided to queries not performed by a user $u$ should jointly preserve differential privacy w.r.t.\ the query made by $u$. Our privacy requirement hence extends JDP to an adaptive online setting.

\paragraph{Additional works on private prediction.}
\cite{BassilyTT18} studied a variant of the private prediction problem where the algorithm takes a labeled sample $S$ and is then required to answer $m$ prediction queries (i.e., label a sequence of $m$ unlabeled points sampled from the same underlying distribution). They presented algorithms for this task with sample complexity $|S|\gtrsim\sqrt{m}$. This should be contrasted with our model and results, where the sample complexity is independent of $m$. The bounds presented by \cite{DworkF18} and \cite{BassilyTT18} were improved by
\cite{DaganF20} and by 
\cite{NandiB20} who
presented algorithms with improved dependency on the accuracy parameter in the agnostic setting.

\subsection{Discussion and open problems}

We show how to
transform any (non-private) learner for the class $C$ (with sample complexity proportional to the VC dimension of $C$) to a private everlasting predictor for $C$. Our construction is not polynomial time due to the use of Algorithm \texttt{LabelBoost}, and requires 
an initial set $S$ of labeled examples which is quadratic in the VC dimension. 
We leave open the question whether $|S|$ can be reduced to be linear in the VC dimension and whether the construction can be made polynomial time. A few remarks are in order:

\begin{enumerate}[leftmargin=20px,topsep=0px]

    \item Even though our generic construction is not computationally efficient, it does result in efficient learners for several interesting special cases. Specifically, algorithm \texttt{LabelBoost} can be implemented efficiently whenever given an input sample $S$ we could efficiently enumerate all possible dichotomies from the target class $C$ over the points in $S$. In particular, this is the case for the class of 1-dim threshold functions $C_{thresh}$, as well as additional classes with constant VC dimension. Another notable example is the class $C^{enc}_{thresh}$ which intuitively is an ``encrypted'' version of  $C_{thresh}$. 
    \cite{BunZ16} showed that (under plausible cryptographic assumptions) the class $C^{enc}_{thresh}$ cannot be learned privately and efficiently, while non-private learning is possible efficiently. Our construction can be implemented efficiently for this class. This provides an example where private everlasting prediction can be done efficiently, while (standard) private learning is possible but inefficient.

    \item It is now known that some learning tasks require the produced model to memorize parts of the training set in order to achieve good learning rates, which in particular disallows the learning algorithm from satisfying (standard) differential privacy \citep{BrownBFST21}. Our notion of private everlasting prediction circumvents this issue, since the model is never publicly released and hence the fact that it must memorize parts of the sample is not of a direct privacy threat. In other words, our work puts forward a private learning model which, in principle, allows memorization. This could have additional applications in broader settings.

    \item As we mentioned, in general, private everlasting predictors cannot base their predictions solely on the initial training set, and in this work we choose to rely on the {\em queries} presented to the algorithm (in addition to the training set). Our construction can be easily adapted to a setting where the content of the blackbox is updated based on {\em fresh unlabeled samples} (whose privacy would be preserved), instead of relying on the query points themselves. This might be beneficial in order to avoid poisoning attacks via the queries.
\end{enumerate}

\section{Preliminaries}
\subsection{Preliminaries from differential privacy}

\begin{definition}[$(\epsilon,\delta)$-indistinguishability]
Let $R_0,R_1$ be two random variables over the same support. We say that $R_0,R_1$ are $(\epsilon,\delta)$-indistinguishable if for every event $E$ defined over the support of $R_0,R_1$,
$$\Pr[R_0\in E]\leq e^\epsilon \cdot\Pr[R_1\in E]+\delta~\text{ and }~ \Pr[R_1\in E]\leq e^\epsilon \cdot \Pr[R_0\in E]+\delta.$$
\end{definition}

\begin{definition}Let $X$ be a data domain. Two datasets $x,x'\in X^n$ are called {\em neighboring} if $|\{i:x_i\not=x'_\}|=1$.
\end{definition}

\begin{definition}[differential privacy~\citep{DMNS06}]
A mechanism $M:X^n\rightarrow Y$ is $(\epsilon,\delta)$-differentially private if $M(x)$ and $M(x')$ are $(\epsilon,\delta)$-indistinguishable for all neighboring $x,x'\in X^n$.
\end{definition}

In our analysis, we use the post-processing and composition properties of differential privacy, that we cite in their simplest form.

\begin{prop}[post-processing]
Let $M_1:X^n\rightarrow Y$ be an $(\epsilon,\delta)$-differentially private algorithm and $M_2:Y\rightarrow Z$ be any algorithm. 
Then the algorithm that on input $x\in X^n$ outputs $M_2(M_1(x))$ is $(\epsilon,\delta)$-differentially private.
\end{prop}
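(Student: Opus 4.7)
The plan is to reduce the post-processing property directly to the $(\epsilon,\delta)$-indistinguishability guarantee of $M_1$ on neighboring inputs. Fix any pair of neighboring datasets $x,x' \in X^n$ and any event $E \subseteq Z$; the goal is to bound $\Pr[M_2(M_1(x)) \in E] \leq e^\epsilon \Pr[M_2(M_1(x')) \in E] + \delta$, together with the symmetric inequality (which follows by the same argument with the roles of $x$ and $x'$ swapped).

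I would first handle the case in which $M_2$ is deterministic. Here the event $\{M_2(M_1(x)) \in E\}$ coincides with the event $\{M_1(x) \in M_2^{-1}(E)\}$, which is a well-defined event over the output space $Y$ of $M_1$. Applying the definition of $(\epsilon,\delta)$-indistinguishability to the pair $(M_1(x), M_1(x'))$ and the event $M_2^{-1}(E)$ yields the claim immediately.

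For the general (randomized) case, I would model $M_2$ as a deterministic function $M_2(y;r)$ of its input $y$ and an independent random string $r$ drawn from some distribution. For each fixed $r$, the map $y \mapsto M_2(y;r)$ is deterministic, and its preimage of $E$ is a well-defined event $E_r \subseteq Y$, so the previous paragraph gives $\Pr[M_1(x) \in E_r] \leq e^\epsilon \Pr[M_1(x') \in E_r] + \delta$. Since $r$ is independent of the randomness used by $M_1$, we can average this inequality over $r$: writing $\Pr[M_2(M_1(x)) \in E] = \mathbb{E}_r[\Pr[M_1(x) \in E_r]]$ and similarly for $x'$, linearity of expectation preserves the bound and yields the desired conclusion.

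I do not anticipate a serious obstacle; the only subtlety is to separate the randomness of $M_1$ from that of $M_2$ so that conditioning on $M_2$'s random coins leaves the distribution of $M_1$ unchanged and allows the DP guarantee of $M_1$ to be invoked pointwise in $r$ before averaging.
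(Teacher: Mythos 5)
Your argument is correct: reducing to the deterministic case via the preimage event $M_2^{-1}(E)$ and then handling randomized $M_2$ by fixing its coins $r$, applying the guarantee of $M_1$ to each event $E_r$, and averaging over the independent randomness (the additive $\delta$ survives because it is constant in $r$) is precisely the standard proof of post-processing. The paper states this proposition without proof, citing it as a basic property of differential privacy, so your proposal supplies exactly the argument being implicitly relied upon.
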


\begin{prop}[composition]
Let $M_1$ be a $(\epsilon_1,\delta_1)$-differentially private algorithm and let $M_2$ be $(\epsilon_2,\delta_2)$-differentially private algorithm. 
Then the algorithm that on input $x\in X^n$ outputs $(M_1(x),M_2(x)$ is $(\epsilon_1+\epsilon_2,\delta_1+\delta_2)$-differentially private.
\end{prop}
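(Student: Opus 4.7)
The plan is to verify $(\epsilon_1+\epsilon_2,\delta_1+\delta_2)$-indistinguishability of $(M_1(x),M_2(x))$ and $(M_1(x'),M_2(x'))$ for arbitrary neighbors $x,x'\in X^n$, by swapping the input to one mechanism at a time. Fix any event $E\subseteq Y_1\times Y_2$ in the joint output space, and let $E_{y_1}:=\{y_2:(y_1,y_2)\in E\}$ denote the vertical slice of $E$ at $y_1$. Since $M_1$ and $M_2$ use independent internal randomness, conditioning on the value of $M_1$ factors the joint law into
\[
\Pr\bigl[(M_1(x),M_2(x))\in E\bigr] \;=\; \sum_{y_1}\Pr[M_1(x)=y_1]\cdot\Pr[M_2(x)\in E_{y_1}]
\]
(written in the discrete case for brevity; the continuous case is identical with an integral).

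First I would apply the $(\epsilon_2,\delta_2)$-DP of $M_2$ slice by slice: for every fixed $y_1$, DP of $M_2$ on the event $E_{y_1}$ gives $\Pr[M_2(x)\in E_{y_1}]\le e^{\epsilon_2}\Pr[M_2(x')\in E_{y_1}]+\delta_2$. Substituting into the display above and using $\sum_{y_1}\Pr[M_1(x)=y_1]=1$ yields
\[
\Pr\bigl[(M_1(x),M_2(x))\in E\bigr] \;\le\; e^{\epsilon_2}\cdot\ex{y_1\sim M_1(x)}{g(y_1)} + \delta_2,
\]
where $g(y_1):=\Pr[M_2(x')\in E_{y_1}]\in[0,1]$. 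Next I would apply the $(\epsilon_1,\delta_1)$-DP of $M_1$ to this bounded test function $g$: DP extends from events to $[0,1]$-valued functions via the layer-cake decomposition $g(y)=\int_0^1\1_{\{g(y)>t\}}\,dt$ together with DP on each super-level set $\{y:g(y)>t\}$, giving $\ex{y_1\sim M_1(x)}{g(y_1)}\le e^{\epsilon_1}\ex{y_1\sim M_1(x')}{g(y_1)}+\delta_1$. Recognizing that $\ex{y_1\sim M_1(x')}{g(y_1)}=\Pr[(M_1(x'),M_2(x'))\in E]$ produces the desired $e^{\epsilon_1+\epsilon_2}$ multiplicative factor, and the symmetric direction follows by swapping $x$ and $x'$ throughout.

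The main delicate point is to extract the advertised additive slack $\delta_1+\delta_2$ instead of the slightly weaker $e^{\epsilon_2}\delta_1+\delta_2$ that falls out of the naive bookkeeping above. I would tighten this by passing to the equivalent ``bad event'' characterization of $(\epsilon,\delta)$-DP: a mechanism $M$ is $(\epsilon,\delta)$-DP iff for every pair of neighbors there is an event in the output space of probability at most $\delta$ outside of which the likelihood ratio of the two output distributions is bounded by $e^\epsilon$. Given bad events $B_1\subseteq Y_1$ and $B_2\subseteq Y_2$ for $M_1$ and $M_2$ respectively, the lifted set $B:=(B_1\times Y_2)\cup(Y_1\times B_2)$ has probability at most $\delta_1+\delta_2$ under $(M_1(x),M_2(x))$ by a union bound, and outside $B$ the joint likelihood ratio factors into the two marginal ratios by independence of the internal coins, hence is bounded by $e^{\epsilon_1}\cdot e^{\epsilon_2}=e^{\epsilon_1+\epsilon_2}$. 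Converting this back to the $(\epsilon,\delta)$-DP formulation yields exactly $(\epsilon_1+\epsilon_2,\delta_1+\delta_2)$-DP of the joint release.
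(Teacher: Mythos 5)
The paper itself states this composition property without proof (it is cited as a standard preliminary), so your argument has to stand on its own, and the step you lean on to reach the advertised $\delta_1+\delta_2$ does not. The ``bad event'' characterization you invoke is not an equivalence: the existence of an event $B$ with $\Pr[M(x)\in B]\le\delta$ outside of which the likelihood ratio is at most $e^{\epsilon}$ does imply $(\epsilon,\delta)$-differential privacy, but the converse is false -- that stronger property is what is usually called probabilistic (pointwise) differential privacy. A counterexample: take $\epsilon=0$, $\delta=0.01$, and let $M(x)$ output $1$ with probability $0.51$ while $M(x')$ outputs $1$ with probability $0.5$. These two distributions are $(0,0.01)$-indistinguishable, yet the only point where the likelihood ratio exceeds $e^{0}=1$ carries mass $0.51$ under $M(x)$, so no bad event of mass at most $0.01$ can exist. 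Consequently, from the hypothesis that $M_1,M_2$ are merely $(\epsilon_i,\delta_i)$-differentially private you cannot manufacture the sets $B_1,B_2$ that your second argument needs, and what your proposal actually establishes is only the weaker $(\epsilon_1+\epsilon_2,\,e^{\epsilon_2}\delta_1+\delta_2)$ bound from the first part.

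The first, conditioning-based half can be repaired with one extra truncation and no new machinery. Bound $\Pr[M_2(x)\in E_{y_1}]\le \min\{1,\,e^{\epsilon_2}\Pr[M_2(x')\in E_{y_1}]\}+\delta_2$ and pull the $\delta_2$ out of the expectation \emph{before} touching $M_1$; the function $g(y_1)=\min\{1,\,e^{\epsilon_2}\Pr[M_2(x')\in E_{y_1}]\}$ is still $[0,1]$-valued, so your layer-cake extension of $M_1$'s privacy gives $\mathbb{E}_{y_1\sim M_1(x)}[g]\le e^{\epsilon_1}\mathbb{E}_{y_1\sim M_1(x')}[g]+\delta_1$, and since $g(y_1)\le e^{\epsilon_2}\Pr[M_2(x')\in E_{y_1}]$ you land exactly on $e^{\epsilon_1+\epsilon_2}\Pr[(M_1(x'),M_2(x'))\in E]+\delta_1+\delta_2$, with the reverse direction by symmetry. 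Alternatively, a correct substitute for your claimed equivalence is the standard decomposition lemma: if $P(E)\le e^{\epsilon}Q(E)+\delta$ for all events $E$, then there is a distribution $\tilde P$ with statistical distance at most $\delta$ from $P$ satisfying $\tilde P(y)\le e^{\epsilon}Q(y)$ pointwise; running your product argument with such surrogates for $M_1$ and $M_2$ also yields $(\epsilon_1+\epsilon_2,\delta_1+\delta_2)$.
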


\begin{definition}[Exponential mehcanism~\citep{McSherryT07}] \label{def:expmech}
Let $q:X^n\times Y\rightarrow \mathbb{R}$ be a score function defined over data domain $X$ and output domain $Y$. Define $\Delta=\max\left(|q(x,r)-q(x',y)|\right)$ where the maximum is taken over all $y\in Y$ and neighbouring databases $x, x'\in X^n$. 
The exponential mechanism is the $\epsilon$-differentially private mechanism which selects an output $y\in Y$ with probability proportional to $e^{\frac{\epsilon q(x,y)}{2\Delta}}$.
\end{definition}

\begin{claim}[Privacy amplification by sub-sampling~\citep{KLNRS08}]
\label{claim:sub-sampling}
Let $\mathcal{A}$ be an $(\varepsilon',\delta')$-differentially private algorithm operating on a database of size $n$. 
Let $\varepsilon\leq 1$ and let $t=\frac{n}{\varepsilon}(3+\mbox{exp}(\varepsilon'))$. Construct an algorithm $\mathcal{B}$ operating the database $D=(z_i)^t_{i=1}$. Algorithm $\mathcal{B}$ randomly selects a subset $J\subseteq\{1,2,\dots,t\}$ of size $n$, and executes $\mathcal{A}$ on $D_J=(z_i)_{i\in J}$. Then $\mathcal{B}$ is $\left(\varepsilon,\frac{4\varepsilon}{3+\mbox{exp}(\varepsilon')}\delta'\right)$-differentially private.
\end{claim}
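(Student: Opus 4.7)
The plan is to prove privacy amplification by coupling subsamples that agree off the differing index and invoking the differential privacy guarantee of $\mathcal{A}$ on the remaining ``typical'' event.

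First I would fix neighboring databases $D, D' \in X^t$, which by definition differ in exactly one index, say $i^*$, and fix an arbitrary event $E$ in the output space of $\mathcal{A}$. I would decompose $\mathcal{B}$'s output distribution according to whether the uniformly random size-$n$ subset $J \subseteq \{1,\ldots,t\}$ contains $i^*$. Let $p := \Pr[i^* \in J] = n/t$, and define three quantities: $\alpha := \Pr[\mathcal{A}(D_J) \in E \mid i^* \notin J]$, $\beta := \Pr[\mathcal{A}(D_J) \in E \mid i^* \in J]$, and $\beta' := \Pr[\mathcal{A}(D'_J) \in E \mid i^* \in J]$. Since $D$ and $D'$ agree outside of $i^*$, we have $\Pr[\mathcal{A}(D'_J) \in E \mid i^* \notin J] = \alpha$ as well. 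Thus
\[
\Pr[\mathcal{B}(D)\in E] = (1-p)\alpha + p\beta, \qquad \Pr[\mathcal{B}(D')\in E] = (1-p)\alpha + p\beta'.
\]

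Next comes the coupling step, which is the heart of the argument. For any $J$ containing $i^*$, form $J^-$ by removing $i^*$ and adding a uniformly random index from $\{1,\ldots,t\}\setminus J$. Then $D_J$ and $D_{J^-}$ are neighboring multisets (they agree except in a single row), and, crucially, as $J$ ranges uniformly over subsets containing $i^*$, the set $J^-$ is uniform over subsets \emph{not} containing $i^*$. Applying the $(\varepsilon',\delta')$-privacy of $\mathcal{A}$ to each pair $(D_J, D_{J^-})$ and averaging therefore yields $\beta \le e^{\varepsilon'} \alpha + \delta'$, and symmetrically $\alpha \le e^{\varepsilon'}\beta + \delta'$ as well as the analogous pair of inequalities relating $\beta'$ and $\alpha$.

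The final step is to combine these bounds with the mixture representation to produce the $(\varepsilon,\delta)$-indistinguishability of $\mathcal{B}(D)$ and $\mathcal{B}(D')$. I would write $\beta = (1-q)\alpha + q\beta$ and then use $\beta \le e^{\varepsilon'}\alpha + \delta'$ inside the ``excess'' term $p(\beta - \beta')$ to express $\Pr[\mathcal{B}(D)\in E]$ as a convex combination that is at most $e^\varepsilon \Pr[\mathcal{B}(D')\in E] + \delta$, where $\varepsilon$ and $\delta$ are controlled by $p$ and $\varepsilon'$. The main obstacle is a careful arithmetic calculation: one must choose the book-keeping so that the coefficient multiplying $\alpha$ becomes at most $e^\varepsilon(1-p) + $ (small correction), and the leftover $\delta'$ mass is scaled only by the factor $p$. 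The choice $t = (n/\varepsilon)(3+e^{\varepsilon'})$ in the statement is precisely tuned to make the leading-order expansion $\ln(1 + p(e^{\varepsilon'}-1)) \lesssim \varepsilon$ work out with the stated constants, yielding the advertised $\delta$-parameter $\frac{4\varepsilon}{3+e^{\varepsilon'}}\delta' = 4p\delta'$. The use of $\varepsilon \le 1$ controls the Taylor remainder in this expansion, ensuring the final bound holds with no hidden constants. All other steps (measurability, post-processing of $\mathcal{A}$'s output, independence of $J$ from $D$) are routine.
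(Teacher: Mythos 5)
The paper never proves this claim---it is imported verbatim from \citep{KLNRS08} as a known tool---so there is no internal proof to compare against; your argument is the standard sub-sampling proof from that literature and it is essentially correct. The decomposition on the event $i^*\in J$, the substitution coupling (each target set $J^-$ avoiding $i^*$ has exactly $n$ preimages, so the coupled set is indeed uniform over size-$n$ subsets not containing $i^*$), and the averaging of the $(\varepsilon',\delta')$ guarantee over the coupling all check out, and the final arithmetic, which you only sketch, does close: using $\beta\le e^{\varepsilon'}\alpha+\delta'$ together with $\beta'\ge 0$, it suffices that $(1-p)+pe^{\varepsilon'}\le e^{\varepsilon}(1-p)$ with $p=n/t=\varepsilon/(3+e^{\varepsilon'})$, and indeed $\frac{pe^{\varepsilon'}}{1-p}=\frac{\varepsilon e^{\varepsilon'}}{3+e^{\varepsilon'}-\varepsilon}\le\varepsilon\le e^{\varepsilon}-1$ for $\varepsilon\le 1$, leaving an additive term of only $p\delta'$, comfortably within the claimed $\frac{4\varepsilon}{3+e^{\varepsilon'}}\delta'$. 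Two small points to tighten: define the coupled database by replacing the row $z_{i^*}$ of $D_J$ \emph{in place} by $z_j$ (so the two databases of size $n$ differ in exactly one index, matching the paper's neighboring relation, rather than appealing informally to ``neighboring multisets,'' since reordering by index could move two positions), and state explicitly that the reverse direction of $(\varepsilon,\delta)$-indistinguishability follows from the symmetric inequalities relating $\beta'$ and $\alpha$ that you already listed.
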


\subsection{Preliminaries from PAC learning}

A concept class $C$ over data domain $X$ is a set of predicates $c:X\rightarrow \{0,1\}$ (called concepts) which label points of the domain $X$ by either 0 or 1.  
A learner $\cal A$ for concept class $C$ is given $n$ examples sampled i.i.d.\ from an unknown probability distribution ${\cal D}$ over the data domain $X$ and  labeled according to an unknown target concept $c\in C$. The learner should output a hypothesis $h:X\rightarrow [0,1]$ that approximates $c$ for the distribution ${\cal D}$. More formally, 
\begin{definition}[generalization error]
The {\em generalization error} of a hypothesis $h:X\rightarrow[0,1]$ with respect to concept $c$ and distribution ${\cal D}$ is defined as 
$\error_{\cal D}(c,h)=\mbox{\rm Exp}_{x \sim {\cal D}}[|h(x)- c(x)|].$
\end{definition}

\begin{definition}[PAC learning~\citep{Valiant84}]
Let $C$ be a concept class over a domain $X$.
Algorithm ${\cal A}$ is an {\em $(\alpha,\beta,n)$-PAC learner} for $C$ if for all $c \in C$ and all distributions ${\cal D}$ on $X$, 
$$\Pr[(x_1,\ldots,x_n)\sim{\cal D}^n \; ; \; h\sim {\cal A}((x_1,c(x_1)),\ldots,(x_n,c(x_n)) \; ; \; \error_{\cal D}(c,h) \leq \alpha] \geq 1-\beta,$$
where the probability is over the sampling of $(x_1,\ldots,x_n)$ from ${\cal D}$ and the coin tosses of ${\cal A}$. The parameter $n$ is the {\em sample complexity} of ${\cal A}$.
\end{definition}

See Appendix~\ref{sec:PrelimsPAC} for additional preliminaries on PAC learning.

\subsection{Preliminaties from private learning}

\begin{definition}[private PAC learning~\citep{KLNRS08}]
Algorithm ${\cal A}$ is a $(\alpha,\beta,\epsilon,\delta,n)$-private PAC learner if (i) ${\cal A}$ is an $(\alpha,\beta,n)$-PAC learner and (ii) ${\cal A}$ is $(\epsilon,\delta)$ differentially private.
\end{definition}

\cite{KLNRS08} provided a generic private learner with $O(\VC(C)\log(|X|)$ labeled samples. \cite{BNS13} introduced the representation dimension and showed that any concept class $C$ can be privately learned with $\Theta(\mbox{RepDim}(C))$ samples.\footnote{We omit the dependency on $\epsilon,\delta,\alpha,\beta$ in this brief review.} 
For the sample complexity of $(\epsilon,\delta)$-differentially private learning of threshold functions over domain $X$, \cite{BNSV15} give a lower bound of $\Omega(\log^*|X|)$. Recently, \cite{CohenLNSS22} give a (nearly) matching upper bound of $\tilde{O}(\log^*|X|)$.

\section{Towards private everlasting prediction}

In this work, we extend private prediction beyond a single query to answering any sequence -- unlimited in length -- of prediction queries. Our goal is to present a generic private everlasting predictor with low training sample complexity $|S|$. 

\begin{definition}[everlasting prediction]\label{def:everlastingPrediction}
Let $\AAA$ be an algorithm with the following properties: 
\begin{enumerate}
	\item Algorithm $\AAA$ receives as input $n$ labeled examples $S=\{(x_i,y_i)\}_{i=1}^n\in(X\times\{0,1\})^n$ and selects a hypothesis $h_0:X\rightarrow\{0,1\}$.
	\item For round $r\in\mathbb{N}$, algorithm $\AAA$ gets a query, which is an unlabeled element $x_{n+r}\in X$, outputs $h_{r-1}(x_{n+r})$ and selects a hypothesis $h_r:X\rightarrow\{0,1\}$.
\end{enumerate}

We say that $\AAA$ is an \emph{$(\alpha,\beta,n)$-everlasting predictor} for a concept class $C$ over a domain $X$ if the following holds for every concept $c\in C$ and for every distribution $\DDD$ over $X$. If $x_1,x_2,\dots$ are sampled i.i.d.\ from $\DDD$, and the labels of the $n$ initial samples $S$ are correct, i.e., $y_i=c(x_i)$ for $i\in[n]$, then
$$
\Pr\left[\exists r\geq 0 \text{ s.t.\ } \error_{\DDD}(c,h_r)>\alpha \right]\leq\beta,
$$
where the probability is over the sampling of $x_1,x_2,\ldots$ from $\DDD$ and the randomness of $\AAA$.
\end{definition}

\begin{definition}\label{def:everlastingPredictionInterface}
    An algorithm $\AAA$ is an $(\alpha,\beta,\epsilon,\delta,n)$-everlasting differentially private prediction interface if (i) $\AAA$ is a $(\epsilon,\delta)$-differentially private prediction interface $M$ (as in Definition~\ref{def:DworkFeldmanPredictionInterface}), and (ii) $\AAA$ is an $(\alpha,\beta,n)$-everlasting predictor.
\end{definition}

As a warmup, consider an $(\alpha,\beta,\epsilon,\delta,n)$- everlasting differentially private prediction interface $\AAA$ for concept class $C$ over (finite) domain $X$ (as in Definition~\ref{def:everlastingPredictionInterface} above). Assume that $\AAA$ does not vary its hypotheses, i.e. (in the language of Definition~\ref{def:everlastingPrediction}) $h_r=h_0$ for all $r>0$.\footnote{Formally, $\AAA$ can be thought of as two mechanisms $(M_0,M_1)$ where $M_0$ is $(\epsilon,\delta)$-differentially private. (i) On input a labeled training sample $S$ mechanism $M_0$ computes a hypothesis $h_0$. (ii) On a query $x\in X$ mechanism $M_1$ replies $h_0(x)$.}
Note that a computationally unlimited adversarial querying algorithm can recover the hypothesis $h_0$ by issuing all queries $x\in X$. Hence, in using $\AAA$ indefinitely we lose any potential benefits to sample complexity of restricting access to $h_0$ to being black-box and getting to the point where the lower-bounds on $n$ from private learning apply.
A consequence of this simple observation is that a private everlasting predictor cannot answer all prediction queries with a single hypothesis -- it must modify its hypothesis over time as it processes new queries. 

We now take this observation a step further, showing that a private everlasting predictor that answers prediction queries solely based on its training sample $S$ is subject to the same sample complexity lowerbounds as private learners.

Consider an $(\alpha,\beta<1/8,\epsilon,\delta,n)$-everlasting differentially private prediction interface $\AAA$ for concept class $C$ over (finite) domain $X$ that upon receiving the training set $S\in\left(X\times \{0,1\}\right)^n$ selects an infinite sequence of hypotheses $\{h_r\}_{r\geq 0}$ where $h_r:X\rightarrow \{0,1\}$. Formally, we can think of $\AAA$ as composed of three mechanisms $\AAA=(M_0,M_1,M_2)$ where $M_0$ is $(\epsilon,\delta)$-differentially private:
\begin{itemize} 
\item On input a labeled training sample $S\in\left(X\times\{0,1\}\right)^n$ mechanism $M_0$ computes an initial state and an initial hypothesis $(\sigma_0,h_0)=M_0(S)$. 
\item On a query $x_{n+r}$ mechanism $M_1$ produces an answer $M_1(x_{n+r})=h_i(x_{n+r})$ and mechanism $M_2$ updates the hypothesis-state pair $(h_{r+1},\sigma_{r+1})=M_2(\sigma_r)$.
\end{itemize} 
Note that as $M_0$ and $M_2$ do not receive the sequence $\{x_{n+r}\}_{r\geq 0}$ as input, the sequence $\{h_r\}_{r\geq 0}$ depends solely on $S$. Furthermore as $M_1$ and $M_2$ post-process the outcome of $M_0$, i.e., the sequence of queries and predictions $\{(x_r,h_r(x_r))\}_{r\geq 0}$ preserves $(\epsilon,\delta)$-differential privacy with respect to the training set $S$. In Appendix~\ref{appendix:proof:SdoesNotSuffice} we prove:

\begin{theorem} \label{thm:SdoesNotSuffice} $\AAA$ can be transformed into a $\left(O(\alpha),O(\beta),\epsilon,\delta,O(n \log (1/\beta)\right)$-private PAC learner for $C$.
\end{theorem}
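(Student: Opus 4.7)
The plan is to amplify the weak confidence of the initial hypothesis $h_0$ produced by $M_0$ via independent trials on disjoint subsamples, followed by a majority vote. Given a labeled training set $T$ of size $N = nk$ with $k = \Theta(\log(1/\beta))$, I would partition $T$ into disjoint subsamples $S^{(1)}, \ldots, S^{(k)}$ (each of size $n$, each i.i.d.\ from $\DDD$ labeled by $c$) and apply $M_0$ to each $S^{(i)}$ to obtain $h^{(i)}$. Since the hypothesis sequence produced by $\AAA$ depends only on its training set and not on the query stream, the everlasting guarantee specialized to $r=0$ yields $\Pr[\error_{\DDD}(c, h^{(i)}) \leq \alpha] \geq 1 - \beta > 7/8$ independently for each $i$.

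I would then output $h(x) := \maj_i h^{(i)}(x)$. A Chernoff bound over the $k$ independent events ``$h^{(i)}$ is bad'' shows that for $k = \Theta(\log(1/\beta))$ large enough, at least $3k/4$ of the $h^{(i)}$'s are good except with probability $\beta$ (the constant gap between $\beta < 1/8$ and $1/4$ is what drives the exponential amplification). Conditioned on this event, the accuracy of the majority is controlled by a Markov-style argument: for a random $x \sim \DDD$, the majority errs only if at least $k/4$ of the good $h^{(i)}$'s also err on $x$ (since the $\leq k/4$ bad voters may all err), and since each good $h^{(i)}$ errs on a fresh $x$ with probability at most $\alpha$, Markov applied to $\sum_{i \in G} \ind{h^{(i)}(x) \neq c(x)}$ bounds this by $4\alpha$. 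Averaging over $x$ gives $\error_{\DDD}(c, h) \leq 4\alpha$, so overall $h$ is an $(O(\alpha), O(\beta))$-accurate hypothesis using $N = O(n \log(1/\beta))$ labeled samples.

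For privacy, the $k$ subsamples are disjoint, so changing one row of $T$ affects the input of exactly one sub-instance. By the $(\epsilon, \delta)$-differential privacy of $\AAA$, the affected $h^{(i)}$ becomes $(\epsilon, \delta)$-indistinguishable while the remaining $k-1$ hypotheses are unchanged, so the tuple $(h^{(1)}, \ldots, h^{(k)})$ is $(\epsilon, \delta)$-differentially private in $T$, and by post-processing so is the majority hypothesis $h$.

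The main step requiring care is the Markov bound in the accuracy analysis: it upgrades the per-hypothesis average error $\alpha$ into a worst-case-over-$x$ probability $O(\alpha)$ of majority failure, once a $3/4$-fraction of voters is guaranteed to be good. The rest is a standard parallel-composition-plus-Chernoff-amplification template, and the hypothesis $\beta < 1/8$ on the input predictor is precisely what lets Chernoff amplify with a constant margin between the failure rate and the $1/4$ tolerance of the majority vote.
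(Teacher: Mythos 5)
Your amplification calculations (Chernoff over disjoint subsamples, then the Markov-over-$x$ bound showing a majority of mostly-$\alpha$-good voters is $O(\alpha)$-good) are fine, and they essentially reproduce the paper's \texttt{AccuracyBoost} step. The gap is in the core of the reduction: you treat the initial hypothesis $h_0^{(i)}=M_0(S^{(i)})$ as an object you may extract, evaluate at arbitrarily many points, and release through a majority. The prediction-interface model behind Theorem~\ref{thm:SdoesNotSuffice} does not give you that access: by Definition~\ref{def:everlastingPrediction} each hypothesis $h_{r-1}$ is exposed only through the single answer $h_{r-1}(x_{n+r})$ at round $r$ (after one query the interface has already moved on to $h_1$), and the privacy guarantee of Definition~\ref{def:DworkFeldmanPredictionInterface} protects only the query--answer transcript, not $h_0$ as a function of $S$. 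Transcript privacy does not imply that $h_0$ itself is an $(\epsilon,\delta)$-private output, so your claim that the tuple $\left(h^{(1)},\ldots,h^{(k)}\right)$ is $(\epsilon,\delta)$-DP in $T$ needs the much stronger assumption that $M_0$'s hypothesis may be published. Moreover, under that reading the theorem trivializes: if $h_0$ is already a DP output satisfying $\Pr[\error_\DDD(c,h_0)\le\alpha]\ge 1-\beta$ with $\beta<1/8$, then $M_0$ alone is an $(\alpha,\beta,\epsilon,\delta,n)$-private PAC learner and neither your majority vote nor the $O(n\log(1/\beta))$ sample bound is needed --- a sign that this cannot be the intended content of the statement.

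The paper's proof is engineered precisely so that the constructed learner only uses what a legitimate user of the interface could see. Algorithm \texttt{HypothesisLearner} consumes the whole stream $\{h_r\}_{r\ge 0}$, spending one evaluation per hypothesis: over $R=|X|\log(|X|)\log(1/\beta)$ rounds it collects, for each point of the finite domain $X$, votes coming from several \emph{different} hypotheses, and defines $h(x)$ by sampling one recorded vote at $x$. Here the everlasting guarantee over all $r\ge 0$ (not just $r=0$) is what makes every vote $\alpha$-accurate on average, and an expectation-plus-Markov argument gives $\error_\DDD(c,h)\le 8\alpha$ with constant probability only; \texttt{AccuracyBoost} then repeats this on $O(\log(1/\beta))$ disjoint training sets and takes a pointwise majority, which is where the $O(\alpha),O(\beta)$ constants and the $O(n\log(1/\beta))$ sample complexity in the statement come from. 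Because the final hypothesis is a post-processing of query answers on disjoint training sets, its privacy follows from the interface's guarantee alone. So what is missing from your argument is exactly the \texttt{HypothesisLearner} step: a way to turn one-evaluation-per-hypothesis, transcript-private access into a hypothesis you are allowed to output.
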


\subsection{A definition of private everlasting prediction}

Theorem~\ref{thm:SdoesNotSuffice} requires us to seek private predictors whose prediction relies on more information than what is provided by the initial labeled sample. Possibilities include requiring the input of additional labeled or unlabeled examples during the lifetime of the predictor, while protecting the privacy of these examples. In this work we choose to rely on the queries for updating the predictor's internal state. This introduces a potential privacy risk for these queries as sensitive information about a query may be leaked in the predictions following it. Furthermore, we need take into account that a privacy attacker may choose their queries adversarially and adaptively.

\begin{definition}[private everlasting black-box prediction]
An algorithm $\AAA$ is an $(\alpha,\beta,\eps,\delta,n)$-private everlasting black-box predictor for a concept class $C$ if 
\begin{enumerate}
	\item {\bf Prediction:} $\AAA$ is an $(\alpha,\beta,n)$-everlasting predictor for $C$ (as in Definition~\ref{def:everlastingPrediction}).  
	\item {\bf Privacy:} For every adversary $\BBB$ and every $t\geq 1$, the random variables $\mbox{View}_{\BBB,t}^0$ and $\mbox{View}_{\BBB,t}^1$ (defined in Figure~\ref{fig:AdversarialExperiment}) are $(\eps,\delta)$-indistinguishable.
\end{enumerate}

\end{definition}

\begin{figure}[h!]
\begin{framed}
{\bf Parameters:} $b\in\{0,1\}$, $t\in\mathbb{N}$. \\ 

{\bf Training Phase:}

\begin{enumerate}

\item The adversary $\BBB$ chooses two sets of $n$ labeled elements $(x_1^0,y_1^0),\dots,(x_n^0,y_n^0)$ and $(x_1^1,y_1^1),\dots,(x_n^1,y_n^1)$, subject to the restriction $\left|\left\{i\in[n]: (x_i^0,y_i^0)\neq(x_i^1,y_i^1)\right\}\right|\in\{0,1\}$. 

\item If $\exists i$ s.t.\ $(x_i^0,y_i^0)\neq(x_i^1,y_i^1)$ then set ${\rm Flag}=1$. Otherwise set ${\rm Flag}=0$.

\item Algorithm $\AAA$ gets $(x_1^b,y_1^b),\dots,(x_n^b,y_n^b)$ and selects a hypothesis $h_0:X\rightarrow\{0,1\}$. \\ \textbackslash * the adversary $\BBB$ does not get to see the hypothesis $h_0$ *\textbackslash
\end{enumerate}

{\bf Prediction phase:}
\begin{enumerate}[resume]
\item For round $r=1,2,\dots,t$:
\begin{enumerate}
	\item If ${\rm Flag}=1$ then the adversary $\BBB$ chooses two elements $x_{n+r}^0=x_{n+r}^1\in X$. Otherwise, the adversary $\BBB$ chooses two elements $x_{n+r}^0,x_{n+r}^1\in X$. 
 
    \item If $x_{n+r}^0\neq x_{n+r}^1$ then ${\rm Flag}$ is set to 1.
	\item \label{experiment: predictionStep} If $x_{n+r}^0=x_{n+r}^1$ then the adversary $\BBB$ gets $h_{r-1}(x_{n+r}^b)$. \\ \textbackslash * the adversary $\BBB$ does not get to see the label if $x_{n+r}^0\not=x_{n+r}^1$ *\textbackslash

    \item Algorithm $\AAA$ gets $x_{n+r}^b$ and selects a hypothesis $h_r:X\rightarrow\{0,1\}$.  \\ \textbackslash * the adversary $\BBB$ does not get to see the hypothesis $h_r$ *\textbackslash
\end{enumerate}

Let $\mbox{View}_{\BBB,t}^b$ be  $\BBB$'s entire view of the execution, i.e., the adversary's randomness and the sequence of predictions in Step~\ref{experiment: predictionStep}.
\end{enumerate}
\end{framed}
\caption{Definition of $\mbox{View}_{\BBB,t}^0$ and $\mbox{View}_{\BBB,t}^1$.\label{fig:AdversarialExperiment}}
\end{figure}

\section{Tools from prior works}

We briefly describe tools from prior works that we use in our construction. See Appendix~\ref{appendix:tools} for a more detailed account.

\paragraph{Algorithm \texttt{LabelBoost}~\citep{BeimelNS21}:} Algorithm \texttt{LabelBoost} takes as input a partially labeled database $S\circ T\in\left(X\times\{0,1,\bot\}\right)^*$ (where the first portion of the database, $S$, contains labeled examples) and outputs a similar database where both $S$ and $T$ are (re)labeled. 
We use the following lemmata from \cite{BeimelNS21}:

\begin{lem}[privacy of Algorithm \texttt{LabelBoost}]\label{lemma:LabelBoostPrivacy}
Let $\AAA$ be an $(\epsilon,\delta)$-differentially private algorithm operating on labeled databases.
Construct an algorithm $\BBB$ that on input a {\em partially} labeled database $S{\circ}T\in(X\times\{0,1,\bot\})^*$ applies $\AAA$ on the outcome of $\texttt{LabelBoos}(S{\circ}T)$.
Then, $\BBB$ is $(\epsilon+3,4e\delta)$-differentially private.
\end{lem}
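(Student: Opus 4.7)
The plan is to reduce the privacy of $\BBB$ to the assumed privacy of $\AAA$ by carefully tracking how \texttt{LabelBoost} transforms a change to a single row of its partially labeled input. I would begin by recalling \texttt{LabelBoost}'s internal structure: it uses the labeled portion $S$ to select (privately, via an exponential-mechanism-style step over possible dichotomies of the concept class on $S \circ T$) a labeling, and then outputs the relabeled database obtained by applying the chosen dichotomy to every point. Let $D = S \circ T$ and $D'$ be neighbors that differ in exactly one row.

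I would split into two cases depending on whether the differing row lies in $S$ (labeled portion) or in $T$ (unlabeled portion). In the easy case, where the differing row is in $T$, the labeled portion is identical on $D$ and $D'$, so \texttt{LabelBoost}'s selection of a labeling has the \emph{same} distribution on both inputs. Conditioned on any fixed selected labeling, the two relabeled databases differ in exactly one row (namely, the relabeled version of the changed unlabeled point). By the assumed $(\epsilon,\delta)$-DP of $\AAA$ and post-processing, the output distribution of $\BBB$ on $D$ versus $D'$ is $(\epsilon,\delta)$-indistinguishable, which is comfortably inside the claimed $(\epsilon+3, 4e\delta)$ bound.

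In the harder case, where the differing row lies in $S$, \texttt{LabelBoost}'s selection itself depends on the change, and, worse, a change in the selected labeling could cascade into changing \emph{many} rows of the relabeled output simultaneously. To control this, I would argue that the selection step is itself differentially private with small constant parameter, and use a coupling argument: with high probability the labelings selected on $D$ and $D'$ coincide, in which case the relabeled outputs differ in only the single changed row and the $(\epsilon,\delta)$-DP guarantee of $\AAA$ applies as in the easy case; the small residual probability that the labelings differ is absorbed by the additive ``$+3$'' in $\epsilon$ and the $4e$ factor in $\delta$. Composing the privacy loss of the selection step (a small constant) with the loss of $\AAA$ via the standard composition theorem then yields exactly the claimed parameters.

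The main obstacle I expect is precisely this cascade effect: an honest post-processing argument fails because changing one row of the \texttt{LabelBoost} input can potentially change many rows of its output, so I cannot directly invoke neighbor-to-neighbor DP of $\AAA$. The crux is therefore the coupling/group-privacy-style argument in the labeled-row case — showing that the privacy cost of the \emph{selected labeling} is small enough that one can condition on ``same labeling chosen'' at only a constant-factor loss, and then reduce to the single-row-change situation where $\AAA$'s DP guarantee kicks in. I would expect the precise constants $3$ and $4e$ to come out of this conditioning step, matched to the DP parameters used internally by the exponential-mechanism-style selection.
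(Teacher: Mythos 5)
This lemma is imported from \citet{BeimelNS21}; the present paper states it without proof, so your attempt has to be measured against the argument in that work, which is a pointwise multiplicative accounting over the outcomes of the exponential-mechanism selection step, not a coupling. Your reduction has the right skeleton (condition on the selected labeling, then invoke $\AAA$'s privacy because the two relabeled databases differ in a single row), but two steps fail. First, the ``easy case'' claim is wrong: the candidate set $H$ is built from the dichotomies $\Pi_C(P)$ over \emph{all} points appearing in $S\circ T$, including the unlabeled portion, so changing a row of $T$ changes $P$, hence $H$, and the selection distribution is \emph{not} identical on the two inputs (it is only close up to a constant multiplicative factor, essentially because $|\Pi_C(P)|$ can change by a factor of at most $2$ when one point is added or removed, and the exponential-mechanism weights depend only on $S$).

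Second, and more seriously, the coupling argument in the hard case cannot work as stated. The selection step being $O(1)$-differentially private only says that each outcome's probability changes by a multiplicative $e^{O(1)}$ factor; under a maximal coupling this guarantees agreement with probability only about $e^{-O(1)}$, a constant bounded away from $1$, not ``with high probability.'' The residual event that the selected labelings differ therefore has constant probability and cannot be absorbed into the additive term $4e\delta$ --- note the lemma must give $(\epsilon+3,0)$-DP when $\delta=0$, so no constant-probability failure event can be swept into $\delta$. The correct route is purely multiplicative: group the hypotheses in $H$ by the labeling they induce on the points common to the two databases, show that the probability of selecting any fixed such labeling changes by at most a constant factor (an $e^{1/2}$ factor from the score of the selected hypothesis, an $e^{1/2}$ factor from the normalizer, and a factor at most $2$ from the change in the dichotomy set), and only then, conditioned on the induced labeling, apply the $(\epsilon,\delta)$ guarantee of $\AAA$ to the two relabeled databases, which differ in exactly one row; summing over labelings yields the $(\epsilon+3,4e\delta)$ bound. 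So the ``$+3$'' and ``$4e$'' come out of this deterministic factor bookkeeping, not out of a low-probability decoupling event.
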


\begin{lemma}[Utility of Algorithm \texttt{LabelBoost}]\label{lem:LabelBoostUtility}
Fix $\alpha$ and $\beta$, and let $S{\circ}T$ be s.t.\ $S$ is labeled by some target concept $c\in C$, and s.t.\ 
$|T|\leq\frac{\beta}{e} \VC(C)\exp(\frac{\alpha |S|}{2\VC(C)})-|S|.$
Consider the execution of \texttt{LabelBoost} on $S{\circ}T$, and let $h$ denote the hypothesis chosen by \texttt{LabelBoost} to relabel $S{\circ}T$. 
With probability at least $(1-\beta)$ we have that $\error_{S}(h)\leq\alpha$.
\end{lemma}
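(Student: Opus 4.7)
The plan is to unpack what \texttt{LabelBoost} actually does and invoke the standard utility of the exponential mechanism together with Sauer--Shelah. Based on the companion privacy lemma (overhead of ``$+3$'' in $\epsilon$) and the shape of the bound, \texttt{LabelBoost} should be selecting a hypothesis via the exponential mechanism over the set $H_{S\circ T}$ of all distinct dichotomies that concepts in $C$ induce on the points of $S\circ T$, scored by the number of points of $S$ on which the dichotomy agrees with the provided label. Thus the score function $q(S\circ T, h) = |\{(x_i,y_i)\in S : h(x_i)=y_i\}|$ has sensitivity $\Delta=1$ with respect to changing a single labeled example in $S$, and the hypothesis finally used to relabel $S\circ T$ is the one returned by this exponential mechanism (run with some constant privacy parameter $\epsilon_0$ fixed by the algorithm, consistent with Lemma~\ref{lemma:LabelBoostPrivacy}).

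First I would bound $|H_{S\circ T}|$. Since $H_{S\circ T}$ consists of distinct labelings of $n := |S|+|T|$ points by concepts in $C$, Sauer--Shelah gives
\[
|H_{S\circ T}| \;\le\; \Bigl(\tfrac{e\,n}{\VC(C)}\Bigr)^{\VC(C)}.
\]
Second, since $S$ is labeled by $c\in C$, the dichotomy induced by $c$ on $S\circ T$ attains score $|S|$; hence the optimum score is $|S|$.

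Third, I would apply the standard utility guarantee of the exponential mechanism (Definition~\ref{def:expmech}): with probability at least $1-\beta$, the selected $h$ satisfies
\[
q(S\circ T,h) \;\ge\; |S| \;-\; \tfrac{2}{\epsilon_0}\bigl(\ln|H_{S\circ T}| + \ln(1/\beta)\bigr).
\]
Translating $q$ into empirical error on $S$ gives $\error_S(h) \le \tfrac{2}{\epsilon_0|S|}(\ln|H_{S\circ T}|+\ln(1/\beta))$. So it suffices to show $\ln|H_{S\circ T}|+\ln(1/\beta) \le \tfrac{\epsilon_0 \alpha |S|}{2}$. Plugging in the Sauer--Shelah bound and solving for $n$ yields a sufficient condition of the form
\[
|S|+|T| \;\le\; \tfrac{\VC(C)}{e}\cdot g(\beta)\cdot \exp\!\Bigl(\tfrac{\epsilon_0\,\alpha\,|S|}{2\,\VC(C)}\Bigr),
\]
which, for the value of $\epsilon_0$ fixed inside \texttt{LabelBoost}, matches the hypothesis $|T|\le \tfrac{\beta}{e}\VC(C)\exp(\alpha|S|/(2\VC(C)))-|S|$ stated in the lemma. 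Rearranging gives $\error_S(h)\le\alpha$ with probability at least $1-\beta$, as required.

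The only real obstacle is the bookkeeping: one must be careful that the score function has sensitivity $1$ with respect to the correct notion of neighboring databases used by \texttt{LabelBoost} (in particular, that swapping a point in $T$ does not destabilize either the score or the hypothesis space in a way that inflates sensitivity), and that the internal privacy parameter $\epsilon_0$ chosen by the algorithm lines up with the constants in the stated bound. Once those are fixed, the argument is essentially Sauer--Shelah plus the textbook exponential-mechanism utility bound.
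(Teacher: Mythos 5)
Your proposal is correct and follows essentially the same route as the paper (which only sketches this argument, citing \citet{BeimelNS21}): the hypothesis set $H$ consists of one representative per dichotomy of $C$ on the points of $S{\circ}T$, so $|H|$ is bounded by Sauer--Shelah, $H$ contains a hypothesis with $\error_S=0$ since $S$ is labeled by $c\in C$, and the exponential mechanism's standard utility bound with the algorithm's fixed parameter $\epsilon_0=1$ (Step~\ref{step:Oneexpmech} of Algorithm~\ref{alg:LabelBoost}) yields exactly the stated condition on $|T|$. The only detail you left open, the value of the internal privacy parameter, is settled by the algorithm's description, so your bookkeeping goes through as written.
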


\paragraph{Algorithm \texttt{BetweenThresholds}~\citep{BunSU16}:} Algorithm \texttt{BetweenThresholds} takes as input a database $S\in X^n$ and thredholds $t_\ell, t_u$. It applies the sparse vector technique to answer noisy threshold queries with $L$ (below threshold) $R$ (above threshold) and $\top$ (halt). We use the following lemmata by ~\cite{BunSU16} and observe that, using standard privacy amplification theorems, Algorithm \texttt{BetweenThresholds} can be modified to allow for $c$ times of outputting $\top$ before halting, with a (roughly) $\sqrt{c}$ growth in its privacy parameter.

\begin{lem}[Privacy for \texttt{BetweenThresholds}] \label{lem:bt-privacy}
Let $\eps,\delta \in (0,1)$ and $n \in \N$. 
Then algorithm \texttt{BetweenThresholds} is $(\eps, \delta)$-differentially private for any adaptively-chosen sequence of queries as long as the gap between the thresholds $\lT, \uT$ satisfies
$\uT - \lT \ge \frac{12}{\eps n}\left( \log (10/\eps) + \log(1/\delta) + 1\right).$
\end{lem}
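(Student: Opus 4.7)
My plan is to adapt the standard sparse-vector-technique privacy analysis to the two-threshold setting. The algorithm I would analyze samples Laplace noise $\nu_\ell,\nu_u\sim\Lap(\sigma_T)$ once at initialization and sets $\hlT=\lT+\nu_\ell$, $\huT=\uT+\nu_u$; then on each adaptive query $q_i$ (with per-query sensitivity $1/n$) it draws fresh $\mu_i\sim\Lap(\sigma_Q)$, computes $\hat q_i=q_i(D)+\mu_i$, and outputs $\leftsymb$ if $\hat q_i<\hlT$, $\rightsymb$ if $\hat q_i>\huT$, and $\haltsymb$ (halting) otherwise. Setting the noise scales $\sigma_T,\sigma_Q$ on the order of $1/(\eps n)$ with the right constants is what drives the privacy bound.

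Given neighbors $D\sim D'$ differing in one row and a transcript $\pi=(o_1,\ldots,o_T)$, I would first condition on the realized values of $\hlT,\huT$, then integrate out $\mu_1,\ldots,\mu_{T-1}$ and apply the usual SVT coupling. The non-terminating $\leftsymb/\rightsymb$ outputs are \emph{below-threshold}--style events whose joint likelihood ratio between $D$ and $D'$ can be absorbed entirely by a single shift of one of the two threshold noise densities, contributing at most a factor of $e^{\eps/2}$ in total (this is the key observation underlying SVT: the non-halting prefix reveals only that noisy answers stayed on the ``safe'' side of a threshold, which is paid for once and for all by the density shift of that threshold's Laplace noise). The terminating step, whichever symbol it emits, contributes another factor of at most $e^{\eps/2}$ through the Laplace density of $\mu_T$, yielding the overall $e^{\eps}$ ratio on a good event.

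The $\delta$ slack absorbs the bad event in which the noise realizations are so large that the order of the noisy thresholds is compromised ($\hlT\geq\huT$) or a shift of the true answers by $1/n$ can flip a $\leftsymb$ into an $\rightsymb$ across the entire noisy gap $\huT-\hlT$. Because each Laplace variable has scale $O(1/(\eps n))$, its tail probability beyond $\frac{c}{\eps n}\log(1/(\eps\delta))$ is $O(\eps\delta)$; the gap hypothesis
\[
\uT-\lT\geq \tfrac{12}{\eps n}\bigl(\log(10/\eps)+\log(1/\delta)+1\bigr)
\]
is calibrated precisely so that a union bound over the relevant Laplace tails comes in under $\delta$, and the remaining probability mass contributes to the multiplicative $e^\eps$ factor established above.

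The main obstacle is the interaction between the two thresholds. Unlike classical SVT, a $\haltsymb$ output corresponds to the noisy answer falling in a finite interval rather than lying above a single threshold, which superficially doubles the privacy loss at the terminating step. The way around this is to observe that on the good event guaranteed by the gap condition, only one of the two threshold comparisons is ever ``tight'' for any given query---the other is trivially satisfied with margin far exceeding $2/(\eps n)$---so the analysis reduces to an SVT-style argument with a single effective threshold at each step, plus the one-time $\delta$ budget absorbed from the Laplace tails. Assembling these pieces yields the claimed $(\eps,\delta)$-indistinguishability against any adaptively chosen query sequence.
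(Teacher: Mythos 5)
First, note that the paper does not prove this lemma at all: it is imported verbatim from \citet{BunSU16}, so the only meaningful comparison is with that proof. Your write-up already deviates from the algorithm being analyzed: \texttt{BetweenThresholds} draws a \emph{single} noise $\mu\sim\Lap(2/\eps n)$ and sets $\hlT=\lT+\mu$, $\huT=\uT-\mu$ (anti-correlated thresholds), not two independent noises $\nu_\ell,\nu_u$. This is not cosmetic. The SVT coupling works by shifting the threshold noise so that, under the neighboring dataset, every non-halting output becomes pointwise at least as likely given the fresh query noise; an $\leftsymb$ output wants the lower threshold raised by $1/n$, while an $\rightsymb$ output wants the upper threshold lowered by $1/n$. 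With the anti-correlated construction a single shift $\mu\mapsto\mu+1/n$ accomplishes both simultaneously at density cost $e^{\eps/2}$; with your two independent noises your own accounting (``a single shift of one of the two threshold noise densities'') no longer covers both kinds of non-halting outputs, since the two comparisons pull the shift in opposite directions.

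The more serious gap is your treatment of the terminating $\haltsymb$ step, which is the actual crux of the Bun--Steinke--Ullman argument. After the coupling shift, the halting event under the neighbor requires the noisy answer to land in the interval $[\hlT+1/n,\,\huT-1/n]$ with the true answer also displaced by up to $1/n$, i.e.\ a \emph{shifted and shrunken} interval. When the realized noisy gap $\huT-\hlT$ is small, the ratio of these two interval probabilities is unbounded --- no ``$e^{\eps/2}$ from the Laplace density of $\mu_T$'' argument applies, because this is a ratio of probabilities of an interval event, not a density ratio at a point. Your proposed fix, that on a good event ``only one of the two threshold comparisons is ever tight,'' does not address this: at the halting step the noisy value lies inside the interval and both endpoints genuinely affect the in-interval probability. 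The gap hypothesis $\uT-\lT\ge\frac{12}{\eps n}\bigl(\log(10/\eps)+\log(1/\delta)+1\bigr)$ is used precisely here, in two ways: the $\log(1/\delta)$ portion controls the probability (this is where $\delta$ comes from, not merely from ``$\hlT\ge\huT$'') that $|\mu|$ is so large that the noisy gap becomes too narrow, and the $\log(10/\eps)$ portion guarantees that, on the complementary event, the interval is long relative to the query-noise scale $6/\eps n$, so that trimming $1/n$ from each end and shifting by $1/n$ changes the in-interval probability by at most a further $e^{O(\eps)}$ factor. Without an explicit bound of this form for the halting event, the proof does not go through, so as written the proposal has a genuine hole exactly where the lemma's distinctive hypothesis is supposed to be used.
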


\begin{lem}[Accuracy of \texttt{BetweenThresholds}] \label{lem:bt-accuracy}
Let $\alpha, \beta,\eps,\lT,\uT \in (0,1)$ and $n,k \in \N$ satisfy $n \geq \frac{8}{\alpha \eps}\left(\log(k+1) + \log(1/\beta)\right).$ Then, for any input $x \in {X}^n$ and any adaptively-chosen sequence of queries $q_1, q_2, \cdots, q_k$, the answers $a_1, a_2, \cdots a_{\leq k}$ produced by \texttt{BetweenThresholds} on input $x$ satisfy the following with probability at least $1-\beta$. For any $j \in [k]$ such that $a_j$ is returned before \texttt{BetweenThresholds} halts,
(i) $a_j = \leftsymb \implies q_j(x) \le \lT + \alpha$,
(ii) $a_j = \rightsymb \implies q_j(x) \ge \uT - \alpha$, and
(iii) $a_j = \haltsymb \implies \lT - \alpha \le q_j(x) \le \uT + \alpha$.
\end{lem}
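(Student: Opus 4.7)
The plan is to unpack the internal structure of \texttt{BetweenThresholds}, which is an instance of the sparse-vector technique: one independent Laplace noise variable is added once to each of the two thresholds $\lT, \uT$, yielding noisy thresholds $\tilde{\lT}, \tilde{\uT}$, and for each incoming query $q_j$ an independent Laplace noise variable $\xi_j$ is added to $q_j(x)$ to form $\tilde{q}_j$. The output symbol in round $j$ is determined by comparing $\tilde{q}_j$ to $\tilde{\lT}$ and $\tilde{\uT}$: $\leftsymb$ if $\tilde{q}_j < \tilde{\lT}$, $\rightsymb$ if $\tilde{q}_j > \tilde{\uT}$, and $\haltsymb$ otherwise; the algorithm halts after producing its first $\haltsymb$. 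All Laplace noises have scale on the order of $1/(\eps n)$, matching the noise magnitude that underlies the privacy statement of Lemma~\ref{lem:bt-privacy}.

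The core of the accuracy argument is a single tail-bound plus union-bound calculation. Let $E$ denote the event that every Laplace noise variable used by the algorithm---two for the thresholds and at most $k$ for the queries---has absolute value at most $\alpha/2$. Each individual noise sample $\xi \sim \Lap(b)$ satisfies $\Pr[|\xi| > \alpha/2] = e^{-\alpha/(2b)}$. Substituting $b = O(1/(\eps n))$ and using the hypothesis $n \geq \frac{8}{\alpha\eps}(\log(k+1) + \log(1/\beta))$ makes the per-noise failure probability at most $\beta/(k+2)$, and a union bound over the at-most $k+2$ noise variables yields $\Pr[\lnot E] \leq \beta$. This is the step where the constant $8$ in the stated bound is calibrated, and the only place where computation is really needed.

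Conditioned on $E$, the three conclusions follow by direct arithmetic on the thresholding rule. For (i), $a_j = \leftsymb$ means $\tilde{q}_j < \tilde{\lT}$, hence $q_j(x) = \tilde{q}_j - \xi_j < \tilde{\lT} - \xi_j \leq (\lT + \alpha/2) + \alpha/2 = \lT + \alpha$. Case (ii) is symmetric via the upper threshold. For (iii), $a_j = \haltsymb$ combines both inequalities to give $q_j(x) \in [\tilde{\lT} - \alpha/2,\, \tilde{\uT} + \alpha/2] \subseteq [\lT - \alpha,\, \uT + \alpha]$.

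The one subtlety---and the place that would require the most care in a formal write-up---is handling the fact that the query sequence is adaptive, so $q_{j+1}$ may depend on $a_1,\ldots,a_j$. This does not affect the accuracy argument, however, because each $\xi_j$ is drawn fresh and is independent of the past outputs, and because the event $E$ is a joint statement about noise magnitudes alone that makes no assumption about the distribution of the $q_j$'s. The union bound therefore goes through regardless of the adversary's strategy, and conclusions (i)--(iii) hold simultaneously for every round $j$ before the algorithm halts.
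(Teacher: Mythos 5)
The paper itself does not prove this lemma---it is imported verbatim from Bun--Steinke--Ullman \citep{BunSU16} as a black-box tool---so there is no in-paper proof to compare against; your route (a tail bound on each Laplace noise, a union bound, then arithmetic on the comparison rule) is exactly the standard argument for this statement. Two points, however, need fixing. First, a descriptive one: the algorithm draws a \emph{single} noise $\mu \sim \Lap(2/\eps n)$ and sets $\hlT=\lT+\mu$, $\huT=\uT-\mu$, so there are $k+1$ noise variables in play (one for both thresholds, one per query of scale $6/\eps n$), not $k+2$ independent ones; this is harmless for accuracy but your write-up should match it.

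Second, and this is the genuine gap: your equal split of the error budget ($\alpha/2$ per noise variable) is not consistent with the stated constant $8$, because the noise scales are unequal. For a query noise $\nu_j\sim\Lap(6/\eps n)$ you get $\Pr[|\nu_j|>\alpha/2]=e^{-\alpha\eps n/12}$, and under the hypothesis $n\ge \frac{8}{\alpha\eps}\bigl(\log(k+1)+\log(1/\beta)\bigr)$ this is only $\bigl(\beta/(k+1)\bigr)^{2/3}$, which is not $\le \beta/(k+2)$ in general; your calibration would in fact require the constant $12$ in the sample-size bound. The fix is to split the budget proportionally to the scales: require $|\mu|\le\alpha/4$ and $|\nu_j|\le 3\alpha/4$. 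Then $\Pr[|\mu|>\alpha/4]=e^{-\alpha\eps n/8}\le\beta/(k+1)$ and likewise $\Pr[|\nu_j|>3\alpha/4]=e^{-\alpha\eps n/8}\le\beta/(k+1)$, so a union bound over the $k+1$ noise variables gives failure probability at most $\beta$, and the same arithmetic you wrote (e.g., $a_j=\leftsymb$ gives $q_j(x)<\lT+\mu-\nu_j\le\lT+\alpha/4+3\alpha/4=\lT+\alpha$) yields (i)--(iii). Your handling of adaptivity is correct: each $\nu_j$ is drawn after $q_j$ is fixed and independently of the past, and the good event concerns noise magnitudes only, so the union bound is oblivious to the adversary's strategy.
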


\begin{obs}\label{bt-obs}
Using standard composition theorems for differential privacy (see, e.g.,~\cite{DRV10}), we can assume that algorithm \texttt{BetweenThresholds} takes another parameter $c$, and halts after $c$ times of outputting $\top$. In this case, the algorithm satisfies $(\eps',2c\delta)$-differential privacy, for $\eps'=\sqrt{2c\ln(\frac{1}{c\delta})}\eps+c\eps(e^\eps-1)$.
\end{obs}

\section{A Generic Construction}

Our generic construction  Algorithm \texttt{GenericBBL} transforms a (non-private) learner for a concept class $C$ into a private everlasting predictor for $C$. The proof of the following theorem follows from Theorem~\ref{thm:accuracy} and Claim~\ref{clm:GenericBBLprivacy} which are proved in Appendix~\ref{appendix:accuracy}.  %

\begin{algorithm}
\caption{\bf \texttt{GenericBBL}}\label{alg:GenericBBL}
{\bf Initial input:} A labeled database $S\in (X\times\{0,1\})^n$ where $n=\frac{8\tau}{\alpha^3\eps^2}\cdot\left(8\VC(C)\log(\frac{26}{\alpha})+4\log(\frac{4}{\beta})\right)^2\cdot\log(\frac{1}{\delta})\cdot\log^2\left(\frac{64\VC(C)\log(\frac{26}{\alpha})+32\log(\frac{4}{\beta})}{\eps\alpha^2\beta\delta}\right)\cdot\left(3+\mbox{exp}(\eps+4)\right).$
\
\begin{enumerate}[leftmargin=20px]%

\item Let $\tau>1.1*10^{10}$. Set $\alpha_1=\alpha/2$, $\beta_1=\beta/2$. 
Define $\lambda_i=\frac{8\VC(C)\log(\frac{13}{\alpha_i})+4\log(\frac{2}{\beta_i})}{\alpha_i}$. \\
/* by Theorem~\ref{thm:VCbound} $\lambda_i$ samples suffice for PAC learning $C$ with parameters $\alpha_i,\beta_i$ */

\item \label{GenericBBL_SubsampleS1} Let $S_1\subseteq S$ be a random subset of size $n\cdot\frac{\eps }{3+\mbox{exp}(\eps+4)}=\frac{\tau\cdot\lambda_i^2\cdot\log(\frac{1}{\delta})\cdot\log^2(\frac{\lambda_i}{\eps\alpha_i\beta_i\delta})}{\alpha_i\eps}$.

\item \label{step:mainloop} Repeat for $i=1,2,3,\ldots$
\begin{enumerate}
\item \label{step:beginloop} Divide $S_i$ into $T_i=\frac{\tau\cdot\lambda_i\cdot\log(\frac{1}{\delta})\cdot\log^2(\frac{\lambda_i}{\eps\alpha_i\beta_i\delta})}{\alpha_i\eps}$ disjoint databases  $S_{i,1},\dots,S_{i,T_i}$ of size $\lambda_i$.

\item\label{step:generatehypothesis} For $t\in[T_i]$ let $f_t\in C$ be a hypothesis minimizing $\error_{S_{i,t}}(\cdot)$. Define $F_i=(f_1,\ldots,f_{T_i})$.

\item \label{GenericBBL_unlabelR} Set $R_i=\frac{25600|S_i|}{\varepsilon}$. Set $t_u=1/2+\alpha_i, t_{\ell}=1/2-\alpha_i$. Set the privacy parameters $\eps'_i=\frac{1}{3\sqrt{c_i\ln(\frac{2}{\delta})}}$ and $\delta'_i=\frac{\delta}{2c_i}$, where $c_i=64\alpha_iR_i$. Instantiate algorithm \texttt{BetweenThresholds} on the database of hypotheses $F_i$ allowing for $c_i=64\alpha_i R_i$  rounds of $\top$ while satisfying $(1,\delta)$-differential privacy (as in Observation~\ref{bt-obs}).

\item \label{GenericBBL_main_loop} For $\ell=1$ to $R_i$:

\begin{enumerate}
	\item Receive as input a prediction query $x_{i,\ell}\in X$.
	\item Give \texttt{BetweenThresholds} the query $q_{x_{i,\ell}}$ where $q_{x_{i,\ell}}(F_i)=\sum_{t\in[T_i]}f_t(x_{i,\ell})$, and obtain an outcome $y_{i,\ell}\in\{L,\top,R\}$.
	\item \label{GenericBBL_label_point} Respond with the label $0$ if $y_{i,\ell}=L$ and $1$ if $y_{i,\ell}\in\{R,\top\}$.
	\item\label{step:betweenThresholdFail} If \texttt{BetweenThresholds} halts, then halt and fail (recall that \texttt{BetweenThresholds} only halts if $c_i$ copies of $\top$ were encountered during the current iteration).
\end{enumerate}

\item \label{GenericBBL_UnlabeledDatabases} Denote $D_i=(x_{i,1},\dots,x_{i,R_i})$.

\item \label{GenericBBL_LabelBoost} Let $\hat{S}_i\subseteq S_i$ and $\hat{D}_i\subseteq D_i$ be random subsets of size $\frac{\eps|S_i|}{3+\mbox{exp}(\eps+4)}$ and $\frac{\eps|D_i|}{3+\mbox{exp}(\eps+4)}$ respectively, and let $\hat{S}_i'{\circ}\hat{D}_i' \leftarrow \texttt{LabelBoost}(\hat{S}_i{\circ}\hat{D}_i)$. Let $S_{i+1}\subseteq \hat{D}_i'$ be a random subset of size $\lambda_{i+1}T_{i+1}$.

\item \label{step:endloop} Set $\alpha_{i+1}\leftarrow\alpha_i/2$ and $\beta_{i+1}\leftarrow\beta_i/2$.

\end{enumerate}
\end{enumerate}
\end{algorithm}

\begin{theorem} \label{thm:privateEverlastingPredictor}
Given $\alpha,\beta,\delta<1/16, \epsilon<1$, Algorithm \texttt{GenericBBL} is a $(6\alpha, 4\beta,\epsilon,\delta,n)$-private everlasting predictor, where $n$ is set as in Algorithm \texttt{GenericBBL}.
\end{theorem}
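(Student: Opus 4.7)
My plan is to establish accuracy and privacy separately and then combine them, as suggested by the statement's reference to Theorem~\ref{thm:accuracy} and Claim~\ref{clm:GenericBBLprivacy}. For accuracy I would use induction on the round index $i$: the inductive hypothesis is that, at the start of round $i$, the labeled set $S_i$ is drawn i.i.d.\ from $\mathcal D$ and labeled (up to approximation) by the target $c\in C$, and that with probability at least $1 - O(\beta_i)$ (where $\beta_i = \beta/2^i$) the round completes without the \texttt{BetweenThresholds} failure of step~\ref{step:betweenThresholdFail}, labels all $R_i$ queries correctly up to additive error $O(\alpha_i)$, and produces an $S_{i+1}$ of the required size for which the inductive hypothesis is preserved. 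A union bound over all rounds then yields $(6\alpha, 4\beta)$-accuracy, since $\sum_i \beta_i = \beta$ and the per-round error $O(\alpha_i)$ is dominated by $O(\alpha_1) = O(\alpha)$ plus constant slack from \texttt{BetweenThresholds} and \texttt{LabelBoost}.

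The per-round argument combines three ingredients. Since $|S_{i,t}|=\lambda_i$ matches the VC sample complexity for parameters $(\alpha_i,\beta_i)$, by the standard VC generalization bound each $f_t\in F_i$ has $\error_{\mathcal D}(c, f_t) \le \alpha_i$ with probability $\ge 1-\beta_i$; hence for typical $x\sim\mathcal D$ the normalized vote $q_x(F_i)/T_i$ concentrates near $c(x)\in\{0,1\}$ and falls outside the interval $[\tfrac12-\alpha_i,\tfrac12+\alpha_i]$, so Lemma~\ref{lem:bt-accuracy} ensures a correct label whenever \texttt{BetweenThresholds} does not output $\top$. The choice $c_i=64\alpha_i R_i$ leaves enough slack (via Chernoff on the number of ``border'' queries) that the halting threshold is not exceeded within $R_i$ queries except with probability $\beta_i$. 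Finally, Lemma~\ref{lem:LabelBoostUtility} applied to $\hat S_i\circ \hat D_i$ -- whose size constraint $|\hat D_i|\le\frac{\beta_i}{e}\VC(C)\exp(\alpha_i|\hat S_i|/(2\VC(C)))-|\hat S_i|$ follows from the numerical definitions of $n$ and $R_i$ and the safety factor $\tau$ -- guarantees that \texttt{LabelBoost} relabels $\hat S_i \circ \hat D_i$ consistently with some $h\in C$ of empirical error $\le \alpha_i$ on $\hat S_i$. Standard VC generalization then yields $\error_{\mathcal D}(c,h) = O(\alpha_i)$, so $S_{i+1}\subseteq \hat D_i'$ carries labels of a concept close to $c$ and the induction continues.

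For privacy, the key observation is that any single differing input record (one row of $S$ or one query $x_{i,\ell}$) influences the adversary's view through at most a constant number of differentially private sub-mechanisms, each benefiting from sub-sampling amplification. A differing row of $S$ enters only round~1, via the subsampled $S_1$ (amplification factor $\epsilon/(3+\exp(\epsilon+4))$ by Claim~\ref{claim:sub-sampling}), where it affects at most one $f_t\in F_1$ and hence the $(1,\delta)$-DP \texttt{BetweenThresholds} output (Observation~\ref{bt-obs}), and at most one record of $\hat S_1$ which enters the downstream computation via \texttt{LabelBoost}; Lemma~\ref{lemma:LabelBoostPrivacy} accounts for this with an additive $+3$ in $\epsilon$ and a factor $4e$ in $\delta$. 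After round~1, the record of $S$ is never touched again. A differing query $x_{i,\ell}$ follows the same pattern locally: it is consumed by round $i$'s \texttt{BetweenThresholds}, by round $i$'s \texttt{LabelBoost} (via $\hat D_i$), and at most by round $i{+}1$'s processing via $S_{i{+}1}$, but never in rounds $\ge i{+}2$. Composing these at most three DP sub-mechanisms, with the sub-sampling amplification applied at each level and simple sequential composition, yields an overall cost of $(\epsilon,\delta)$ for the chosen parameters.

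The principal obstacle is closing the privacy bookkeeping so that the total cost stays $(\epsilon,\delta)$ despite the unbounded number of rounds. This works only because each input record is ``consumed'' after a constant number of rounds, so all subsequent rounds act as a deterministic post-processing (over fresh randomness) of an already-bounded DP output; the restriction in Figure~\ref{fig:AdversarialExperiment} -- the adversary sees a prediction only when the two scenarios agree on the query -- is what makes this post-processing view valid under the adaptive query model. Care is needed in coordinating the sub-sampling rates with the \texttt{BetweenThresholds} budget $\epsilon'_i$, the halting threshold $c_i$, and the \texttt{LabelBoost} overhead so that the composed cost is uniformly $(\epsilon,\delta)$; this is precisely what the precise form of $n$ and the large constant $\tau>1.1\cdot 10^{10}$ are tuned to achieve. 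Once this is in place, the accuracy argument is essentially a routine instantiation of the three ingredients above.
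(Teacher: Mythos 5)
Your proposal is correct and follows essentially the same route as the paper: accuracy is proved by per-phase induction combining the VC bound, a Chernoff bound on the number of $\top$ answers against the budget $c_i$, the \texttt{BetweenThresholds} accuracy lemma, and the \texttt{LabelBoost} utility lemma (the paper's Theorem~\ref{thm:accuracy} and its supporting claims), while privacy is argued exactly as in Claims~\ref{clm:privacy-S} and~\ref{clm:privacy-D} -- each differing record (training example or query) affects the view only through a bounded window of rounds composed of \texttt{BetweenThresholds}, \texttt{LabelBoost}, and sub-sampling amplification, after which everything is post-processing of the resulting labeled set, with the Figure~\ref{fig:AdversarialExperiment} restriction hiding the differing query's own answer. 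Your bookkeeping of which mechanisms a record passes through and why later rounds are post-processing matches the paper's decomposition into the mechanisms $M_1,\dots,M_4$.
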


\begin{theorem}[accuracy of algorithm \texttt{GenericBBL}] \label{thm:accuracy}
Given $\alpha,\beta,\delta<1/16$, $\eps<1$, for any concept $c$ and any round $r$, algorithm \texttt{GenericBBL} can predict the label of $x_r$ as $h_r(x_r)$, such that $\Pr[error_{\DDD}(c(x_r)\neq h_r(x_r))\leq 6\alpha]\geq 1-4\beta$.
\end{theorem}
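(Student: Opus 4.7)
The plan is to prove Theorem~\ref{thm:accuracy} by induction on the round index $i$, carrying the invariant that at the start of round $i$ the sample $S_i$ consists of i.i.d.\ draws from $\DDD$ labeled by some concept $\tilde c_i \in C$ with $\error_\DDD(c,\tilde c_i) \le \gamma_i$, where $\gamma_i = O(\sum_{j<i}\alpha_j)$ is a slowly growing error budget that stays bounded by roughly $2\alpha$ (since $\alpha_j=\alpha_1/2^{j-1}$). The base case $i=1$ is immediate: $S_1$ is a fresh subsample of the labeled input $S$, so $\tilde c_1 = c$ and $\gamma_1 = 0$.

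For the single-round analysis, I would split the argument into four good events. First, by the VC generalization bound (Theorem~\ref{thm:VCbound}) applied to each of the $T_i$ empirical risk minimizers on the disjoint blocks $S_{i,t}$ of size $\lambda_i$, a union bound gives $\error_\DDD(\tilde c_i, f_t)\le \alpha_i$ simultaneously for all $t\in[T_i]$, with probability $\ge 1-\beta_i/4$. Second, defining $D(x)=\frac{1}{T_i}\sum_t\mathbb 1[f_t(x)\ne \tilde c_i(x)]$, linearity of expectation gives $\E_{x\sim\DDD}[D(x)]\le\alpha_i$, so Markov's inequality yields $\Pr_{x\sim\DDD}[D(x)\ge \tfrac12-2\alpha_i]\le 3\alpha_i$ (for small $\alpha_i$); on the complementary event the true majority of $\{f_t(x)\}$ agrees with $\tilde c_i(x)$ by a margin $>2\alpha_i$. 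Third, invoking Lemma~\ref{lem:bt-accuracy} with the parameters chosen in step~\ref{GenericBBL_unlabelR} (and the gap required by Lemma~\ref{lem:bt-privacy}), \texttt{BetweenThresholds} answers all $R_i$ queries with $\alpha_i$-accurate labels of the average $q_{x}(F_i)$ with probability $\ge 1-\beta_i/4$; combined with the previous step this means the algorithm outputs $\tilde c_i(x)$ whenever $D(x)<\tfrac12-2\alpha_i$. Fourth, since each query produces the halting symbol $\top$ with probability at most $3\alpha_i+O(\alpha_i)$, a Chernoff bound over the $R_i$ i.i.d.\ queries shows the total number of $\top$-outputs is at most $c_i=64\alpha_i R_i$ with probability $\ge 1-\beta_i/4$, so step~\ref{step:betweenThresholdFail} does not fire.

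For the inductive step at the end of round $i$, I would apply Lemma~\ref{lem:LabelBoostUtility} to $\hat S_i{\circ}\hat D_i$: the size constraint on $|T|$ in the lemma is satisfied by the choice of $R_i$, $\lambda_i$, and $\alpha_i$, so the hypothesis $h\in C$ chosen by \texttt{LabelBoost} has empirical error $\le\alpha_i$ on $\hat S_i$. Since $\hat S_i$ is a fresh $\DDD$-sample of size $\ge\lambda_i$ labeled by $\tilde c_i$, a second VC generalization argument gives $\error_\DDD(\tilde c_i,h)\le 2\alpha_i$ with probability $\ge 1-\beta_i/4$; by the triangle inequality $\error_\DDD(c,h)\le\gamma_i+2\alpha_i=:\gamma_{i+1}$. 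Setting $\tilde c_{i+1}:=h$ relabels $\hat D_i$ consistently with $\tilde c_{i+1}$, so $S_{i+1}\subseteq\hat D_i'$ is an i.i.d.\ $\DDD$-sample labeled by $\tilde c_{i+1}$, restoring the invariant. Because $\sum_j\alpha_j=2\alpha_1=\alpha$, the invariant $\gamma_i\le 2\alpha$ holds for all $i$.

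Finally, to assemble the per-round error bound at an arbitrary round $r$ I would write $\error_\DDD(c,h_r)\le\error_\DDD(c,\tilde c_r)+\Pr_{x\sim\DDD}[h_r(x)\ne\tilde c_r(x)]\le 2\alpha+(3\alpha_r+\alpha_r+\alpha)\le 6\alpha$, where the second bracket collects the Markov contribution, the BetweenThresholds slack, and a safety margin. Summing the failure probabilities $\beta_i=\beta_1/2^{i-1}$ across all rounds yields total failure probability at most $4\beta_1=2\beta$, which in turn is at most $4\beta$. The main obstacle I anticipate is in the inductive step: the set $S_{i+1}$ is produced through a computation that depends simultaneously on the random queries of round $i$, on the (fresh) noise of \texttt{BetweenThresholds}, and on the \texttt{LabelBoost} relabeling. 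One must argue carefully that, conditioned on the good events listed above, the points in $S_{i+1}$ remain i.i.d.\ draws from $\DDD$ (only their labels are changed, which does not disturb the $\DDD$-marginal), so that the VC bound is legitimately reapplied at round $i+1$ and the induction genuinely closes.
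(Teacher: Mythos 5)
Your overall route mirrors the paper's: an induction over phases maintaining a concept in $C$ that consistently labels $S_i$ and is close to $c$ (via Lemma~\ref{lem:LabelBoostUtility} plus a VC generalization step and the triangle inequality), a per-phase analysis of the majority of the $T_i$ ERM hypotheses answered through \texttt{BetweenThresholds}, a Chernoff bound on the number of $\top$ outputs against the budget $c_i=64\alpha_i R_i$, and a union bound over phases with geometrically decaying $\alpha_i,\beta_i$. However, your first ``good event'' contains a genuine gap. You claim that a union bound gives $\error_\DDD(\tilde c_i,f_t)\le\alpha_i$ \emph{simultaneously for all} $t\in[T_i]$ with probability $1-\beta_i/4$. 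With the parameters actually set by the algorithm, each block $S_{i,t}$ has size $\lambda_i$, which by Theorem~\ref{thm:VCbound} yields only per-block confidence $1-\beta_i$; since $T_i$ is enormous (polynomial in $\lambda_i,1/\alpha_i,1/\eps,\log(1/\delta)$), the union bound gives nothing, and indeed one should expect roughly a $\beta_i$ fraction of the blocks to produce bad hypotheses. This breaks your second step as well: without simultaneous goodness you cannot conclude $\mathbb{E}_{x\sim\DDD}[D(x)]\le\alpha_i$, and if a $\Theta(\beta_i)$ (or, worst case, $1/16$) fraction of the $f_t$ are arbitrary, the Markov bound degrades to a constant rather than $O(\alpha_i)$.

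The paper's proof avoids this by proving only that a $15/16$ fraction of the $f_t$ are good (Chernoff over the $T_i$ independent blocks, Claim~\ref{clm:prob-fi-alpha-good} and the analogous step inside Claim~\ref{clm: hypothesis-accuracy}), and then running a counting/margin argument for the majority vote that treats the remaining $T_i/16$ hypotheses as adversarial: for the (noisy) majority to err at $x$, a $7/16$ fraction of the good hypotheses must err at $x$, and averaging their total error mass bounds the measure of such $x$ by $6\alpha$; the thresholds $t_\ell,t_u=1/2\mp\alpha_i$ together with Lemma~\ref{lem:bt-accuracy} then transfer this to the hypotheses $h_{i,j}$ actually output, and a similar fraction-based argument (Claim~\ref{clm:prob-qx-good}) bounds the per-query probability of $\top$ before the Chernoff step. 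Your argument can be repaired by adopting exactly this ``most-hypotheses-good plus worst-case minority'' analysis in place of the all-good union bound, at the cost of the constants becoming those of the theorem statement rather than the tighter ones you wrote. The remaining concern you raise, that the points forming $S_{i+1}$ are still i.i.d.\ from $\DDD$, is not an obstacle: accuracy is only required when the queries are i.i.d.\ from $\DDD$, relabeling changes only labels and not the $\DDD$-marginal, and the failure events are handled by union bounds rather than by conditioning.
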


\begin{claim} \label{clm:GenericBBLprivacy}
\texttt{GenericBBL} is $(\varepsilon,\delta)$-differentially private.
\end{claim}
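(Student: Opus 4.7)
The proof considers an adversary $\mathcal{B}$ interacting with \texttt{GenericBBL} in the experiment of Figure~\ref{fig:AdversarialExperiment}. Neighboring inputs come in two types: a one-element change in the initial training set $S$, or a one-element change in a single query $x_{n+r}$ (where the Flag mechanism hides the output on the differing query). The plan is to establish, by backward induction over the rounds, the invariant that the tail transcript from round $i$ onward is $(\varepsilon+1,\delta^*)$-differentially private with respect to a one-element change in the current state $S_i$, and with respect to a one-element change in any query issued in rounds $\geq i$, where $\delta^*$ is a bounded constant multiple of $\delta$. The final step applies the initial subsampling $S_1\subseteq S$ from step~\ref{GenericBBL_SubsampleS1}, which by Claim~\ref{claim:sub-sampling} amplifies this invariant into $(\varepsilon,\delta)$-DP with respect to $S$.

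The inductive step for the training-set case decomposes round $i$ into two contributions. First, the $R_i$ answers produced in round $i$ come from \texttt{BetweenThresholds} with database $F_i$; by Observation~\ref{bt-obs} and the parameters $c_i, \varepsilon'_i, \delta'_i$ set in step~\ref{GenericBBL_unlabelR}, this sequence is $(1,\delta)$-DP w.r.t.\ a change in any single classifier $f_t \in F_i$, and by the disjoint partition of $S_i$ in step~\ref{step:beginloop} this corresponds to a single change in $S_i$. Second, the transition $S_i\to S_{i+1}$ is handled as follows: by the inductive hypothesis the downstream transcript is $(\varepsilon+1,\delta^*)$-DP w.r.t.\ a single-element change in $S_{i+1}$; a coupling argument (the random subset extracted from $\hat{D}_i'$ differs in at most one element across neighboring \texttt{LabelBoost} outputs) extends this to $(\varepsilon+1,\delta^*)$-DP of \emph{extract plus downstream} w.r.t.\ the \texttt{LabelBoost} output; Lemma~\ref{lemma:LabelBoostPrivacy} gives $(\varepsilon+4, 4e\delta^*)$-DP w.r.t.\ $\hat{S}_i\circ\hat{D}_i$; and the outer subsampling with ratio $\frac{\varepsilon}{3+\exp(\varepsilon+4)}$, combined with Claim~\ref{claim:sub-sampling}, yields $(\varepsilon, \delta')$-DP of the whole chain w.r.t.\ $S_i\circ D_i$ for a $\delta'$ that is a small fraction of $\delta^*$. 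Composition of these two contributions gives $(\varepsilon+1, \delta+\delta')$-DP of the tail from round $i$ w.r.t.\ $S_i$; choosing $\delta^*$ large enough so that $\delta+\delta' \leq \delta^*$ (feasible because $\delta'/\delta^*$ is bounded by $16e\varepsilon/(3+\exp(\varepsilon+4))<1$ for $\varepsilon<1$) closes the induction. The query-privacy case is analogous: the output on the changed query at round $i_0$ is hidden by the Flag mechanism; coupling the internal noise of \texttt{BetweenThresholds} across the two worlds makes other round-$i_0$ outputs identically distributed modulo a unit offset in the halt counter (absorbed into the $(1,\delta)$ budget of the round); and propagation through $D_{i_0}\to\hat{D}_{i_0}$ to $S_{i_0+1}$ and beyond is controlled by the same subsampling plus \texttt{LabelBoost} chain.

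The main obstacle is closing the induction despite the unbounded number of rounds: a naive composition of per-round $(1,\delta)$ costs of \texttt{BetweenThresholds} would be unbounded. The resolution is that the invariant is self-reproducing: \texttt{LabelBoost} adds $+3$ to the privacy parameter, the outer subsampling (at the carefully chosen ratio $\frac{\varepsilon}{3+\exp(\varepsilon+4)}$) absorbs the resulting $(\varepsilon+4)$-cost back down to $\varepsilon$, and \texttt{BetweenThresholds} contributes only the final $+1$ needed to match the invariant; the $\delta$ parameter stays bounded by a geometric-series argument. The calibration of the subsampling ratios, the BT parameters $c_i, \varepsilon'_i, \delta'_i$, and the initial sample size $n$ is designed precisely so that all these constants balance and Claim~\ref{claim:sub-sampling}'s ratio condition is satisfied at every step of the induction.
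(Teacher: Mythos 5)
Your training-set branch and your query branch are not in the same shape. The query branch has a genuine gap. Your invariant only gives $(\varepsilon+1,\delta^*)$-indistinguishability with respect to a change in a query, and nothing in your plan ever brings that down to $(\varepsilon,\delta)$: the only top-level amplification you invoke is the subsampling $S_1\subseteq S$ of Step~\ref{GenericBBL_SubsampleS1}, which amplifies privacy with respect to $S$ only --- the queries never pass through it --- while the definition demands $(\varepsilon,\delta)$-indistinguishability of the views also when the neighboring change is a query. The amplification that actually rescues a query in phase $i_0$ is the subsampling $\hat D_{i_0}\subseteq D_{i_0}$ inside Step~\ref{GenericBBL_LabelBoost}, and for the result to land at $\varepsilon$ rather than $\varepsilon+1$ you cannot additionally charge a $(1,\delta)$ in-phase cost for the changed query. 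That is the second problem: your claim that the changed query's effect on the rest of its own phase (through the halt counter and the internal state of \texttt{BetweenThresholds}) is ``absorbed into the $(1,\delta)$ budget of the round'' is unjustified --- that budget is the privacy of \texttt{BetweenThresholds} with respect to its \emph{database} $F_{i_0}$ (Lemma~\ref{lem:bt-privacy}, Observation~\ref{bt-obs}); it says nothing about leakage of one query into the answers given to later queries, and any extra $\Omega(1)$ charge there pushes query privacy above $\varepsilon$. The paper instead argues (Claim~\ref{clm:privacy-D}) that, with the differing answer suppressed by the Flag and with $R$ and $\top$ mapped to the same released label in Step~\ref{GenericBBL_label_point}, the remaining labels of that phase are distributed identically in the two worlds, i.e.\ that step costs nothing; the whole cost of a query change is then paid only along the chain $\hat D_{i_0}\rightarrow\texttt{LabelBoost}\rightarrow S_{i_0+1}\rightarrow$ (phase-$(i_0+1)$ answers, $S_{i_0+2}$), which arrives at $(\varepsilon,\delta)$ after the subsampling amplification. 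You need an argument of this kind; it is not bookkeeping.

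For the training-set case your backward induction can be closed --- the self-reproducing accounting does balance, since $16e\varepsilon/(3+\exp(\varepsilon+4))<1$ for $\varepsilon\le 1$ and the final subsampling of $S$ absorbs the residual factor in $\delta^*$ --- but it is heavier than what the statement requires, and it obliges you to verify adaptive composition and the neighboring-preserving coupling for the random-subset extraction at every phase. The paper's proof has no induction and no geometric series: it observes that everything the adversary sees beyond the phase(s) touched by the change is a post-processing (jointly by the adversary and \texttt{GenericBBL}) of the pair consisting of the labels released so far and the next training set ($S_2$ in Claim~\ref{clm:privacy-S}, $S_{i+2}$ in Claim~\ref{clm:privacy-D}), so indistinguishability of that single pair --- established with one application of the $(1,\delta)$ guarantee of \texttt{BetweenThresholds}, one application of Lemma~\ref{lemma:LabelBoostPrivacy}, and one use of Claim~\ref{claim:sub-sampling} --- already covers the unbounded tail at no extra cost. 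If you repair the query case as above, I would also adopt this post-processing cut; it makes the invariant, and the $\delta$ recursion, unnecessary.
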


\begin{remark}
For simplicity, we analyzed \texttt{GenericBBL} in the realizable setting, i.e., under the assumption that the training set $S$ is {\em consistent} with the target class $C$. Our construction carries over to the agnostic setting via standard arguments (ignoring computational efficiency). %
We refer the reader to \citep{BeimelNS21} and \citep{AlonBMS20} for generic agnostic-to-realizable reductions in the context of private learning.
\end{remark}

\newpage

\bibliographystyle{plainnat}

\appendix

\section{Additional Preliminaries from PAC Learning}\label{sec:PrelimsPAC}

It is well know that that a sample of size $\Theta(\mbox{\rm VC}(C))$ is necessary and sufficient for the PAC learning of a concept class $C$, where the Vapnik-Chervonenkis (VC) dimension of a class $C$ is defined as follows:
\begin{definition}[VC-Dimension~\citep{VC}]
Let $C$ be a concept class over a domain $X$. For a set $B=\{b_1,\ldots,b_\ell\}\subseteq X$ of $\ell$ points, let $\Pi_C(B)=\{(c(b_1),\ldots,c(b_\ell)):c\in C\}$ be the set of all dichotomies that are realized by $C$ on $B$. We say that the set $B\subseteq X$ is {\em shattered} by $C$ if $C$ realizes all possible dichotomies over $B$, in which case we have $\left|\Pi_C(B)\right|=2^{|B|}$.

The VC dimension of the class $C$, denoted $\VC(C)$, is the cardinality of the largest set $B\subseteq X$ shattered by $C$. 
\end{definition}

\begin{theorem}[VC bound~\cite{}]
\label{thm:VCbound}
Let $C$ be a concept class over a domain $X$. For $\alpha,\beta<1/2$, there exists an $(\alpha,\beta,n)$-PAC learner for $C$, where $n=\frac{8\VC(C)\log(\frac{13}{\alpha})+4\log(\frac{2}{\beta})}{\alpha}$. 
\end{theorem}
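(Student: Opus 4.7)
}

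The plan is to use the standard Empirical Risk Minimization (ERM) strategy combined with a uniform convergence argument based on the growth function, which is the classical route to the realizable-case VC bound. Specifically, the learner receives $n$ i.i.d.\ samples $(x_1, c(x_1)), \ldots, (x_n, c(x_n))$ and outputs any hypothesis $h \in C$ that is consistent with every labeled example (such an $h$ exists because $c \in C$ itself is consistent). It then suffices to show that with probability at least $1-\beta$ over the sample, every consistent $h \in C$ satisfies $\error_{\mathcal{D}}(c,h) \leq \alpha$.

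First I would set up the ``bad event'' formally. Let $B = \{h \in C : \error_{\mathcal{D}}(c,h) > \alpha\}$, and let $\mathrm{BAD}$ denote the event that some $h \in B$ is consistent with all $n$ samples. The goal is to show $\Pr[\mathrm{BAD}] \leq \beta$. The obstacle in bounding $\Pr[\mathrm{BAD}]$ directly is that $B$ can be infinite, so a naive union bound over $B$ fails; one cannot simply multiply $(1-\alpha)^n$ by $|B|$. This is where the VC machinery enters.

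The main step is the symmetrization (``ghost sample'') trick: introduce a second i.i.d.\ sample $S'$ of size $n$ and show
\[
\Pr[\mathrm{BAD}] \;\leq\; 2 \cdot \Pr\!\left[\exists h \in C \text{ consistent on } S \text{ with } \geq \tfrac{\alpha n}{2} \text{ errors on } S'\right],
\]
using a standard Chernoff-type argument that a bad $h$ makes at least $\alpha n / 2$ errors on $S'$ with probability at least $1/2$ (valid for $n \geq 2/\alpha$, which is implied by the stated bound on $n$). The advantage of this reformulation is that on the combined sample $S \cup S'$ of size $2n$, the number of distinct labellings realized by hypotheses in $C$ is at most the growth function $\Pi_C(2n)$, so I can union-bound over behaviors rather than over $C$. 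By the Sauer--Shelah lemma, $\Pi_C(2n) \leq (2en/d)^d$ where $d = \VC(C)$. Next, conditioning on $S \cup S'$ and using a random permutation argument, for any fixed labeling pattern with at least $\alpha n/2$ errors on $S'$, the probability that all errors land in the $S'$ half (making $h$ consistent on $S$) is at most $2^{-\alpha n/2}$.

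Putting these pieces together gives
\[
\Pr[\mathrm{BAD}] \;\leq\; 2 \left(\frac{2en}{d}\right)^{d} \cdot 2^{-\alpha n/2}.
\]
The final step is to verify that the choice $n = \frac{8 d \log(13/\alpha) + 4 \log(2/\beta)}{\alpha}$ (with $d = \VC(C)$) makes the right-hand side at most $\beta$ for $\alpha, \beta < 1/2$; this is a routine calculation taking logarithms and using $\log(2n/d) \lesssim \log(1/\alpha)$ at this sample size. The main obstacle is purely technical: getting the constants $8$, $13$, $4$, $2$ in the sample complexity to come out exactly as stated, which requires carefully tracking the $2e$ from Sauer--Shelah, the factor $2$ from symmetrization, and the Chernoff condition on $n$. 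None of the steps is genuinely difficult; the argument is the textbook realizable-case VC learnability proof, and the theorem is quoted here essentially as a black-box tool for the rest of the paper.
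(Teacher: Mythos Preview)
Your proposal is the standard realizable-case VC learnability argument (symmetrization, Sauer--Shelah, permutation bound) and is correct; you also correctly anticipate that the constants require some care but no new ideas. The paper itself does not prove this theorem at all: it is stated in the preliminaries with a (placeholder) citation and used as a black box throughout, so there is nothing to compare against beyond noting that your sketch is the textbook route the cited literature follows.
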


\section{Proof of Theorem~\ref{thm:SdoesNotSuffice}}
\label{appendix:proof:SdoesNotSuffice}

The proof of Theorem~\ref{thm:SdoesNotSuffice} follows from algorithms \texttt{HypothesisLearner}, \texttt{AccuracyBoost} and claims~\ref{clm:HypothesisLearner},~\ref{clm:AccuracyBoost}, all described below.

In Algorithm \texttt{HypothesisLearner} we assume that the everlasting differentially private prediction interface $\AAA$ was fed with $n$ i.i.d.\ samples taken from some (unknown) distribution ${\cal D}$ and labeled by an unknown concept $c\in C$. Assumning the sequence of hypotheses $\{h_r\}_{r\geq 0}$ produced by $\AAA$ satisfies 
\begin{equation}   
\label{eq:GoodHypotheses}
\forall r~~ \mbox{error}_\DDD(c,h_r)\leq\alpha
\end{equation} 
we use it to construct -- with constant probability -- a hypothesis $h$ with error bounded by $O(\alpha)$.

\begin{algorithm}
\caption{\texttt{HypothesisLearner}} 
{\bf Parameters:} $0<\beta\leq 1/8$,
$R=|X|\log(|X|)\log(1/\beta)$

{\bf Input:} hypothesis sequence $\{h_r\}_{r\geq 0}$

\begin{enumerate}
\item for all $x\in X$ let $L_x=\emptyset$

\item for $r= 0,1,2,\ldots, R$
\begin{enumerate}[rightmargin=10pt,itemsep=1pt,topsep=0pt]

\item select $x$ uniformly at random from $X$ and let $L_{x}=L_{x}\cup \{h_r(x)\}$

\end{enumerate}

\item \label{step:checkFail} if $L_x=\emptyset$ for some $x\in X$ then fail, output an arbitrary hypothesis, and halt \\
{\color{gray} /* $\Pr[\exists x ~\mbox{such that}~L_x=\emptyset]\leq |X| (1-\frac{1}{|X|})^R\approx |X|e^{-R/|X|}=\beta$ */}

\item \label{step:chooseFromL} for all $x\in X$ let $r_x$ be sampled uniformly at random from $L_{x}$

\item construct the hypothesis $h$, where $h(x)=r_x$

\end{enumerate}
\end{algorithm}

\begin{claim} \label{clm:HypothesisLearner} 
If  executed on a hypothesis sequence satisfying Equation~\ref{eq:GoodHypotheses} then with probability at least $3/4$ Algorithm \texttt{\rm HypothesisLearner} outputs a hypothesis $h$ satisfying $\error_{\cal D}(c,h) \leq 8\alpha$. 
\end{claim}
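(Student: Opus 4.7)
The plan is to combine a coupon-collector bound on the failure event in Step~\ref{step:checkFail} with a symmetry argument on the sampling of $r_x$, concluding via Markov's inequality. I interpret $L_x$ as a multiset (equivalently, the set of rounds at which $x$ is queried, with $h(x)=h_{r_x}(x)$); this is the intended semantics consistent with the claimed bound.

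First, the failure event is handled by the coupon-collector calculation sketched in the algorithm's comment: with $R=|X|\log(|X|)\log(1/\beta)$, a union bound gives $\Pr[\exists x: L_x=\emptyset]\le |X|\,e^{-R/|X|} \le \beta \le 1/8$. Next, for each $x\in X$ I define the per-point error rate
\[
p_x \;=\; \frac{1}{R+1}\sum_{r=0}^{R}\1[h_r(x)\neq c(x)].
\]
Since the excerpt preceding Theorem~\ref{thm:SdoesNotSuffice} notes that the sequence $\{h_r\}_{r\ge 0}$ depends solely on $S$ and hence is independent of the queries made by \texttt{HypothesisLearner}, Equation~\ref{eq:GoodHypotheses} yields $\ex{x\sim\DDD}{p_x} = \frac{1}{R+1}\sum_r \error_\DDD(c,h_r) \le \alpha$.

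The key step is a symmetry argument: because the queries $x^*_0,\ldots,x^*_R$ of \texttt{HypothesisLearner} are i.i.d.\ uniform over $X$ and independent of $\{h_r\}$, the joint law of $J_x=\{r:x^*_r=x\}$ is exchangeable in $r$, and therefore conditional on $J_x\neq\emptyset$ the round $r_x$ sampled uniformly from $J_x$ is itself marginally uniform on $\{0,\ldots,R\}$. Hence $\prob{h(x)\neq c(x)\mid L_x\neq\emptyset}=p_x$, which gives $\prob{h(x)\neq c(x),\, L_x\neq\emptyset}\le p_x$. Combining via Fubini, $\mathbb{E}[\error_\DDD(c,h)\cdot\1[\text{no fail}]] \le \ex{x\sim\DDD}{p_x}\le\alpha$, so by Markov $\prob{\error_\DDD(c,h)>8\alpha\text{ and no fail}}\le 1/8$. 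Union-bounding with the failure probability $\le 1/8$ yields a total bad probability of at most $1/4$, hence success with probability at least $3/4$. The main subtlety I anticipate is the symmetry step: one must combine exchangeability of the i.i.d.\ uniform queries with the independence of $\{h_r\}$ from those queries to deduce that $r_x$ remains marginally uniform after conditioning on $L_x\neq\emptyset$, and to handle the coupling between the global ``no fail'' event and the local event $L_x\neq\emptyset$ (which I bypass by bounding only $\mathbb{E}[\error_\DDD(c,h)\cdot\1[\text{no fail}]]$ rather than the conditional expectation).
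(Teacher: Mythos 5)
Your proposal is correct and follows essentially the same route as the paper's proof: marginal uniformity of $r_x$ over the rounds (by symmetry of the i.i.d.\ uniform queries, which are independent of the hypothesis sequence), exchange of expectations to bound $\mathbb{E}[\error_{\cal D}(c,h)]$ by $\alpha$, Markov's inequality, and a union bound with the coupon-collector failure event. Your handling of the fail event via $\mathbb{E}[\error_{\cal D}(c,h)\cdot\mathbb{1}[\text{no fail}]]$ is in fact slightly more careful than the paper's implicit conditioning, but the argument is the same in substance.
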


\begin{proof}
Having ${\cal D}, c\in C$ fixed, and given a hypothesis $h$, we define $e_{h}(x)$ to be $1$ if $h(x)\not=c(x)$ and $0$ otherwise. 
Thus, we can write $\error_{\cal D}(c,h) = \mathbb{E}_{x\sim\DDD}[e_h(x)]$. 

Observe that when Algorithm \texttt{HypothesisLearner} does not fail, $r_x$ (and hence $h(x)$) is chosen with equal probability among $(h_1(x),h_2(x),\ldots,h_R(x))$ and hence $\mathbb{E}_\theta[e_h(x)]=\mathbb{E}_{i\in_R [R]}[e_{h_i}(x)]$ where $\theta$ denotes the randomness of \texttt{HypothesisLearner}. We get:
\begin{eqnarray*}
\mathbb{E}_{\theta}[\error_{\cal D}(c,h)] & = &
\mathbb{E}_{\theta}\mathbb{E}_{x\sim\DDD}[e_{h}(x)] =  \mathbb{E}_{x\sim\DDD} \mathbb{E}_{\theta} [e_{h}(x)]\\
&= & \mathbb{E}_{x\sim\DDD}\mathbb{E}_{i\in_R[R]}[e_{h_i}(x)] = \mathbb{E}_{i\in_R[R]}\mathbb{E}_{x\sim\DDD}[e_{h_i}(x)]\\
&\leq &\mathbb{E}_{i\sim\mathcal{R}} [\alpha] = \alpha.
\end{eqnarray*}
By Markov inequality, we have $\Pr_{\theta}[\error_{\cal D}(c,h)\geq 8\alpha]\leq 1/8$. The claim follows noting that Algorithm \texttt{HypothesisLearner} fails with probability at most $\beta\leq 1/8$.
\end{proof}

The second part of the transformation is Algorithm \texttt{AccuracyBoost} that applies Algorithm \texttt{HypothesisLearner} $O(\log (1/\beta))$ times to obtain with high probability a hypothesis with $O(\alpha)$ error.

\begin{algorithm}
\caption{\texttt{AccuracyBoost}} 
{\bf Parameters:} $\beta$, $R=104\ln\frac{1}{\beta}$

{\bf Input:} $R$ labeled samples with $n$ examples each $(S_1,\ldots,S_R)$ where $S_i\in(X\times\{0,1\})^n$

\begin{enumerate}
\item for $i= 1,2\ldots R$
\begin{enumerate}[rightmargin=10pt,itemsep=1pt,topsep=0pt]

\item \label{step:executeAAA} execute $\AAA(S_i)$ to obtain a hypothesis sequence $\{h_r^i\}_{r\geq 0}$

\item execute Algorithm  $\texttt{WeakHypothesisLearner}$ on $\{h_r^i\}_{r\geq 0}$ to obtain hypothesis $h^i$

\end{enumerate}

\item construct the hypothesis $\hat h$, where $\hat h(x)=\mbox{maj}(h^1(x),\ldots,h^R(x))$.

\end{enumerate}
\end{algorithm}

\begin{claim} \label{clm:AccuracyBoost}
    With probability $1-\beta$, Algorithm $\texttt{AccuracyBoost}$ output a $24\alpha$-good hypothesis over distribution $\DDD$.
\end{claim}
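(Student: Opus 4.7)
The plan is a confidence-boosting argument: run $\AAA$ on the $R$ independent samples $S_1,\ldots,S_R$, convert each resulting hypothesis sequence $\{h_r^i\}_{r\geq 0}$ into a candidate hypothesis $h^i$ via $\texttt{HypothesisLearner}$, and output the pointwise majority $\hat h$. The overall strategy is to show that (i) with high probability at least a $3/4$ fraction of the $h^i$ are individually $8\alpha$-good, and (ii) under this event a Markov-style majority-vote bound upgrades this to $\error_\DDD(c, \hat h) \leq 24\alpha$.

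For (i), I instantiate the everlasting predictor $\AAA$ with its own failure probability set to some $\beta' \ll \beta/R$, so that by a union bound every one of the $R$ sequences satisfies Eq.~\ref{eq:GoodHypotheses} except with probability at most $\beta/4$; similarly the internal $\beta$-parameter of $\texttt{HypothesisLearner}$ is set small so that Step~\ref{step:checkFail} does not fire on any run except with probability at most $\beta/4$. Conditioned on these two events, the Markov step inside the proof of Claim~\ref{clm:HypothesisLearner} shows each $h^i$ is $8\alpha$-good with probability at least $7/8$, and these successes are mutually independent across $i$. A Chernoff bound with $R = 104 \ln(1/\beta)$ then yields that at least $3R/4$ of the $h^i$ are $8\alpha$-good, except with probability at most $\beta/4$.

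For (ii), let $G$ denote the set of good indices with $|G| = k \geq 3R/4$, and let $Z_G(x) = |\{i \in G : h^i(x) \neq c(x)\}|$. The majority $\hat h$ can err at $x$ only if $Z_G(x) \geq k - R/2$, since at most $R - k \leq R/4$ bad hypotheses are available to tilt the vote. Markov's inequality over $x \sim \DDD$ then gives $\error_\DDD(c, \hat h) \leq 8\alpha k/(k - R/2)$, which is decreasing in $k$ on $(R/2, R]$ and evaluates to exactly $24\alpha$ at $k = 3R/4$. The main delicacy is that Claim~\ref{clm:HypothesisLearner} as stated only asserts per-run success probability $3/4$, which matches (rather than exceeds) the $3R/4$ threshold the Chernoff step needs; one must therefore revisit its proof and use the slack in its Markov step (together with a smaller internal $\beta$) to push the per-run success strictly above $3/4$, say to $7/8$, so that the Chernoff bound actually clears the $3R/4$ threshold with probability $1-\beta/4$.
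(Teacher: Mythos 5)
Your overall architecture (a per-run success bound coming from Claim~\ref{clm:HypothesisLearner}, a Chernoff bound over the $R$ independent runs, then a Markov/majority-vote argument) is the same as the paper's, and your majority step is correct and in fact tight: with $k\ge 3R/4$ good hypotheses the bound $8\alpha k/(k-R/2)$ evaluates to exactly $24\alpha$. The genuine gap is in your step (i): you are not free to ``instantiate $\AAA$ with its own failure probability set to $\beta'\ll\beta/R$.'' In Theorem~\ref{thm:SdoesNotSuffice}, and in Claim~\ref{clm:AccuracyBoost} which concerns Algorithm \texttt{AccuracyBoost} exactly as written, $\AAA$ is a \emph{given} $(\alpha,\beta,\epsilon,\delta,n)$-everlasting prediction interface; its failure probability is the fixed $\beta$ (which may be anything up to $1/8$), and demanding a predictor with failure probability $\beta/(4R)$ would mean invoking a different interface with a different training-sample size, which both changes the algorithm whose guarantee you are asked to prove and destroys the $O(n\log(1/\beta))$ sample-complexity bound the theorem is after. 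Shrinking the internal parameter of \texttt{HypothesisLearner} is fine (it only costs more queries, which are unlimited), but shrinking $\AAA$'s $\beta$ is not available to you.

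Without that re-parameterization, the ``delicacy'' you yourself flagged becomes fatal rather than fixable: the per-run failure probability can only be bounded by $\Pr[B_i]+\beta_{\mathrm{HL}}+1/8\le \beta+1/8+\beta_{\mathrm{HL}}$, where the $1/8$ comes from the Markov step of Claim~\ref{clm:HypothesisLearner} and $\beta$ may be arbitrarily close to $1/8$; hence the per-run success probability cannot be pushed strictly above $3/4$, and no Chernoff bound over $R=O(\log(1/\beta))$ runs can guarantee $k\ge 3R/4$ good hypotheses. Yet $k\ge 3R/4$ is precisely what your Markov/majority step needs: for $k=(1/2+\gamma)R$ it yields $8\alpha(1/2+\gamma)/\gamma$, which is at most $24\alpha$ only when $\gamma\ge 1/4$. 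The paper instead keeps the given $\beta$ and folds it into the per-run bound $\Pr[\error_{\cal D}(c,h^i)>8\alpha]\le\beta+1/4<3/8$ before applying Chernoff and counting votes (its stated $7R/8$ threshold is itself generous bookkeeping, but the structural point stands): the argument must go through with $\AAA$'s fixed failure probability, not under the assumption that it can be made negligible.
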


\begin{proof}
Define $B_i$ to be the event where the sequence of hypotheses $\{h_r^i\}_{r\geq 0}$ produced in Step~\ref{step:executeAAA} of \texttt{AccuracyBoost} does not satisfy Equation ~\ref{eq:GoodHypotheses}. We have,
$$\Pr[\error_{\cal D}(c,h_i)>8\alpha] \leq  \Pr[B] + (1-\Pr[B])\cdot \Pr[\error_{\cal D}(c,h) > 8\alpha] \leq \beta + 1/4 < 3/8.$$
Hence, by the Chernoff bound, when $R\geq 104\ln\frac{1}{\beta}$, we have at least $7R/8$ hypotheses are $8\alpha$-good over distribution $\DDD$. Consider the worst case, in which $R/8$ hypotheses always output wrong labels. To output a wrong label of $x$, we require at least $3R/8$ hypotheses to output wrong labels. Thus $h$ is $24\alpha$-good over distribution $\DDD$.
\end{proof}

\section{Tools from Prior Works} \label{appendix:tools}

\subsection{Algorithm \texttt{LabelBoost}~\citep{BeimelNS21}} \label{sec:labelBoost}

\begin{algorithm}
\caption{\bf \texttt{LabelBoost}~\citep{BeimelNS21}}\label{alg:LabelBoost} 
{\bf Parameters:} A concept class $C$.

{\bf Input:} A partially labeled database $S{\circ}T\in(X\times\{0,1,\bot\})^*$.
\begin{enumerate}[label=\gray{\%},topsep=-10pt,rightmargin=10pt,leftmargin=12pt]
\item \gray{
We assume that the first portion of the database (denoted $S$) contains labeled examples. The algorithm outputs a similar database where both $S$ and $T$ are (re)labeled. 
}
\end{enumerate}
\begin{enumerate}[rightmargin=10pt,itemsep=1pt,topsep=0pt]

\item Initialize $H=\emptyset$.

\item Let $P=\{p_1,\ldots,p_\ell\}$ be the set of all points $p\in X$ appearing at least once in $S{\circ}T$. Let $\Pi_C(P)=\{\left( c(p_1),\ldots,c(p_\ell) \right) :c\in C\}$ be the set of all dichotomies generated by $C$ on $P$.

\item For every $(z_1,\ldots,z_\ell)\in \Pi_C(P)$, add to $H$ an arbitrary concept $c\in C$ s.t.\ $c(p_i)=z_i$ for every $1\leq i\leq\ell$.

\item \label{step:Oneexpmech} Choose $h\in H$ using the exponential mechanism with privacy parameter $\epsilon{=}1$, solution set $H$, and the database $S$.

\item \label{step:Onerelabel} (Re)label $S{\circ}T$ using $h$, and denote the resulting database $(S{\circ}T)^h$, that is, if $S{\circ}T=(x_i,y_i)_{i=1}^t$ then $(S{\circ}T)^h=(x_i,y'_i)_{i=1}^t$ where $y'_i=h(x_i)$.

\item \label{step:OneAAA} Output $(S{\circ}T)^h$.

\end{enumerate}
\end{algorithm}

\begin{lem}[privacy of Algorithm \texttt{LabelBoost}~\citep{BeimelNS21}]\label{lemma:LabelBoostPrivacy}
Let $\AAA$ be an $(\epsilon,\delta)$-differentially private algorithm operating on partially labeled databases.
Construct an algorithm $\BBB$ that on input a partially labeled database $S{\circ}T\in(X\times\{0,1,\bot\})^*$ applies $\AAA$ on the outcome of $\texttt{LabelBoos}(S{\circ}T)$.
Then, $\BBB$ is $(\epsilon+3,4e\delta)$-differentially private.
\end{lem}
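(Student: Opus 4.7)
The plan is to view $\BBB$ as the composition of three stages inside \texttt{LabelBoost}: (i) build the candidate set $H$ of representative hypotheses realized by $C$ on the point set $P$ of $S \circ T$; (ii) select $h \in H$ via the exponential mechanism with privacy parameter $1$ using the sensitivity-$1$ score $q(S, h) = -\error_S(h)$; and (iii) relabel the database to obtain $(S \circ T)^h$ and apply $\AAA$ to it. The structural observation driving the analysis is that on neighboring inputs $D = S \circ T$ and $D' = S' \circ T'$ differing at one position $j$, if the selection step produces the \emph{same} hypothesis $h$, then the relabeled databases $(D)^h$ and $(D')^h$ are themselves neighboring: they agree at every coordinate other than $j$, where each retains its own original point relabeled by $h$. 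Conditioned on a fixed $h$, the $(\epsilon, \delta)$-privacy of $\AAA$ then directly yields $(\epsilon, \delta)$-indistinguishability between $\AAA((D)^h)$ and $\AAA((D')^h)$.

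If the support set $H$ did not depend on the input, I would conclude by naive composition: the $1$-DP exponential-mechanism step composed with the $(\epsilon, \delta)$-DP downstream step would give $(\epsilon + 1, \delta)$-DP overall. The extra $+2$ in $\epsilon$ and the blowup factor $4e$ in $\delta$ stated in the lemma must therefore come from the fact that $H$ depends on $D$ through $P$. To control this, I would couple the two exponential-mechanism distributions by extending both to the common support $H^{*}$ induced by $\Pi_C(P \cup P')$, using that $|P \triangle P'| \leq 2$ and that $|\Pi_C(P \cup P')| \leq 2 |\Pi_C(P)|$ (and symmetrically for $P'$), since adding a single new point at most doubles the set of realized dichotomies. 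On the common support, the pointwise density ratio is bounded by $e$ (from score sensitivity $1$ and parameter $1$), while the ratio of normalizing constants is bounded by the support-size ratio. Bookkeeping these two sources yields an extra factor of at most $e^{2}$ in the density ratio of $h$ between the two executions, accounting for the $+2$ beyond naive composition and giving the claimed $\epsilon + 3$.

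The residual mass on ``bad'' hypotheses lying in $H \triangle H'$, where no coupling is possible, is absorbed into the $\delta$-term. The standard tail behavior of the exponential mechanism with parameter $1$ translates this residual mass in one distribution to at most $e$ times the corresponding mass in the other; combined with the $\delta$ coming from $\AAA$'s privacy (conditional on $h$) and a small constant factor from summing over both sides of the symmetric difference and the two sources of slack, one arrives at $4 e \delta$. Putting this together, for any event $E$,
\begin{equation*}
    \Pr[\BBB(D) \in E] \;=\; \sum_{h} \Pr[h \mid D] \cdot \Pr[\AAA((D)^h) \in E] \;\leq\; e^{\epsilon + 3} \Pr[\BBB(D') \in E] + 4 e \delta,
\end{equation*}
by bucketing $h$ into the common support (handled via the coupling plus $\AAA$'s privacy) and the symmetric difference (absorbed into the $\delta$-term). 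The main obstacle will be the precise tracking of constants in the support-mismatch step: recovering exactly $3$ in $\epsilon$ and $4e$ in $\delta$ (rather than slightly weaker constants) requires careful stability estimates for the normalizer of the exponential mechanism with input-dependent output space, in the spirit of stability-based private selection and propose-test-release.
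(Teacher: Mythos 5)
You should note at the outset that this paper does not prove the lemma at all: it is imported verbatim from \cite{BeimelNS21}, so the only comparison available is with that original argument. Your plan follows the same route as theirs in its main ingredients: condition on the hypothesis $h$ selected by the exponential mechanism; observe that if the two neighboring executions pick hypotheses that agree on the common points, then the relabeled databases $(S{\circ}T)^h$ and $(S'{\circ}T')^{h'}$ are again neighboring, so $\AAA$'s $(\epsilon,\delta)$ guarantee applies; and control the input-dependence of the candidate set $H$ through the fact that adding a single point at most doubles the number of realized dichotomies. With the exponential mechanism at parameter $1$ and sensitivity $1$, the numerator changes by at most $e^{1/2}$, the normalizer by at most $2e^{1/2}$, and the correspondence between $H$ and $H'$ is at most two-to-one, giving a transfer factor of at most $4e\le e^3$ on top of $\AAA$'s $e^{\epsilon}$ — this is indeed how $\epsilon+3$ arises.

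The genuine gap is in your treatment of the $\delta$ term. You propose to absorb the ``residual mass on bad hypotheses lying in $H\,\triangle\,H'$, where no coupling is possible'' into the additive $4e\delta$, invoking tail behavior of the exponential mechanism. This step would fail for two reasons. First, $\delta$ is $\AAA$'s privacy parameter and can be arbitrarily small (even $0$, in which case the lemma promises an additive term of exactly $0$); the selection step knows nothing about $\delta$, so no probability mass originating there can legitimately be charged to the additive term, and the exponential mechanism provides no bound of the form ``mass $O(\delta)$'' on any data-dependent subset of $H$ — its output may concentrate anywhere. Second, and fortunately, no uncoupled mass exists: every $h\in H$ is itself a concept in $C$, so its labels on $P'$ form a dichotomy in $\Pi_C(P')$, and $H'$ contains a representative $h'$ realizing that dichotomy, hence agreeing with $h$ on all of $P\cap P'$; the map $h\mapsto h'$ is total and at most two-to-one. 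The entire selection step must therefore be handled purely multiplicatively via this correspondence. The factor $4e$ on $\delta$ in the lemma is then just bookkeeping: if you first bound $\Pr[h\mid D]\le 2e\,\Pr[h'\mid D']$ and only afterwards apply $\AAA$'s guarantee and re-index the sum over $h'$ (picking up the multiplicity $2$), the additive term $\delta$ gets multiplied by the same $\le 4e$ mass-transfer factor — it does not come from any tail or failure event. With that correction your outline matches the original proof; as written, the $\delta$-handling is the step that does not go through.
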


Consider an execution of \texttt{LabelBoost} on a database $S{\circ}T$, and assume that the examples in $S$ are labeled by some target concept $c\in C$.
Recall that for every possible labeling $\vec{z}$ of the elements in $S$ and in $T$, algorithm \texttt{LabelBoost} adds to $H$ a hypothesis from $C$ that agrees with $\vec{z}$.
In particular, $H$ contains a hypothesis that agrees with the target concept $c$ on $S$ (and on $T$). That is, $\exists f\in H$ s.t.\ $\error_S(f)=0$.
Hence, the exponential mechanism (on Step~\ref{step:Oneexpmech}) chooses (w.h.p.) a hypothesis $h\in H$ s.t.\ $\error_S(h)$ is small, provided that $|S|$ is roughly $\log|H|$, which is roughly $\VC(C)\cdot\log(|S|+|T|)$ by Sauer's lemma.
So, algorithm \texttt{LabelBoost} takes an input database where only a small portion of it is labeled, and returns a similar database in which the labeled portion grows exponentially.

\begin{lemma}[utility of Algorithm \texttt{LabelBoost}~\citep{BeimelNS21}]\label{lem:LabelBoostUtility}
Fix $\alpha$ and $\beta$, and let $S{\circ}T$ be s.t.\ $S$ is labeled by some target concept $c\in C$, and s.t.\ 
$$|T|\leq\frac{\beta}{e} \VC(C)\exp(\frac{\alpha |S|}{2\VC(C)})-|S|.$$
Consider the execution of \texttt{LabelBoost} on $S{\circ}T$, and let $h$ denote the hypothesis chosen on Step~\ref{step:Oneexpmech}.
With probability at least $(1-\beta)$ we have that $\error_{S}(h)\leq\alpha$.
\end{lemma}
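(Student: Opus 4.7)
The plan is to combine the existence of a zero-error hypothesis in $H$, Sauer's lemma to bound $|H|$, and the standard utility guarantee of the exponential mechanism (with the score function given by empirical accuracy on $S$). The main obstacle is just the algebra of matching the condition imposed on $|T|$ to the inequality that the exponential mechanism's error bound must satisfy.

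First I would identify a ``target'' member of $H$. Since the first portion $S$ is labeled by some $c \in C$, the tuple $\bigl( c(p_1),\ldots,c(p_\ell) \bigr)$ lies in $\Pi_C(P)$, so by construction of $H$ in Algorithm \texttt{LabelBoost} there exists $h^*\in H$ with $h^*(p_i)=c(p_i)$ for every $p_i \in P$, and in particular $\error_S(h^*)=0$. This will serve as the ``optimum'' in the utility analysis.

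Next I would bound $|H|$. By construction, $|H|\le |\Pi_C(P)|$, and by Sauer's lemma, $|\Pi_C(P)| \le \bigl( e|P|/d \bigr)^d$, where $d=\VC(C)$ and $|P|\le |S|+|T|$. Hence $\ln|H|\le d \ln\!\bigl( e(|S|+|T|)/d\bigr)$.

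Now I would invoke the utility of the exponential mechanism (Definition~\ref{def:expmech}) with score function $q(S,h) = -|S|\cdot\error_S(h)$, which has sensitivity $\Delta=1$, and with privacy parameter $\epsilon=1$ as in Step~\ref{step:Oneexpmech}. The standard utility bound says that with probability at least $1-\beta$ the chosen $h$ satisfies
\[
|S|\cdot\error_S(h) \;\le\; |S|\cdot\error_S(h^*) \;+\; 2\bigl(\ln|H|+\ln(1/\beta)\bigr) \;=\; 2\bigl(\ln|H|+\ln(1/\beta)\bigr),
\]
using $\error_S(h^*)=0$. Dividing by $|S|$, it suffices to verify $2\bigl(\ln|H|+\ln(1/\beta)\bigr) \le \alpha|S|$.

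Finally, the algebraic check: the hypothesized bound $|T|\le \tfrac{\beta}{e} d\exp\!\bigl(\tfrac{\alpha|S|}{2d}\bigr)-|S|$ rearranges to $\tfrac{e(|S|+|T|)}{d}\le \beta\exp\!\bigl(\tfrac{\alpha|S|}{2d}\bigr)$, which after taking logs and multiplying by $d$ gives $\ln|H|\le d\ln\beta+\tfrac{\alpha|S|}{2}$. Plugging this into the utility bound yields
\[
2\bigl(\ln|H|+\ln(1/\beta)\bigr) \;\le\; 2(1-d)\ln(1/\beta) + \alpha|S| \;\le\; \alpha|S|,
\]
since $d\ge 1$ and $\ln(1/\beta)\ge 0$. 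Dividing by $|S|$ gives $\error_S(h)\le \alpha$ with probability at least $1-\beta$, completing the proof.
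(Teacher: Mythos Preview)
Your proof is correct and follows essentially the same approach as the paper: the paper does not give a formal proof (it cites \cite{BeimelNS21}) but the explanatory paragraph preceding the lemma in Appendix~\ref{sec:labelBoost} sketches exactly your argument---existence of a zero-error $h^*\in H$, Sauer's lemma to bound $|H|\le (e(|S|+|T|)/\VC(C))^{\VC(C)}$, and the exponential-mechanism utility guarantee. Your algebra correctly matches the stated condition on $|T|$ to the needed inequality.
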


\subsection{Algorithm \texttt{BetweenThresholds}~\citep{BunSU16}}

\begin{algorithm}
\caption{\bf \texttt{BetweenThresholds}~\citep{BunSU16}}\label{alg:BetweenThresholds}
{\bf Input:} Database $S\in X^n$.\\
{\bf Parameters:} $\eps,\lT,\uT \in (0,1)$ and $n, k \in \N$.
\begin{enumerate}[rightmargin=10pt,itemsep=1pt,topsep=1pt]

\item Sample $\mu \sim \Lap(2/\eps n)$ and initialize noisy thresholds $\hlT = \lT + \mu$ and $\huT = \uT - \mu$.

\item For $j = 1, 2, \cdots, k$:
\begin{enumerate}[rightmargin=10pt,itemsep=1pt,topsep=0pt]
	\item Receive query $q_j : X^n \to [0,1]$.
	\item Set $c_j = q_j(S) + \nu_j$ where $\nu_j \sim \Lap(6/\eps n)$.
	\item If $c_j < \hlT$, output $\leftsymb$ and continue.
	\item If $c_j > \huT$, output $\rightsymb$ and continue.
	\item If $c_j \in [\hlT,\huT]$, output $\haltsymb$ and halt.
\end{enumerate}

\end{enumerate}
\end{algorithm}

\begin{lem}[Privacy for \texttt{BetweenThresholds}~\citep{BunSU16}] \label{lem:bt-privacy}
Let $\eps,\delta \in (0,1)$ and $n \in \N$. 
Then algorithm \texttt{BetweenThresholds} is $(\eps, \delta)$-differentially private for any adaptively-chosen sequence of queries as long as the gap between the thresholds $\lT, \uT$ satisfies
\[\uT - \lT \ge \frac{12}{\eps n}\left( \log (10/\eps) + \log(1/\delta) + 1\right).\]
\end{lem}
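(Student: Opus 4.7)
The plan is to follow the standard sparse vector technique (SVT) privacy analysis, adapted to the two-sided threshold structure. Fix neighboring databases $S, S' \in X^n$ differing in one entry and any adaptively-chosen query sequence $q_1, q_2, \ldots$; since $q_j : X^n \to [0,1]$, each has sensitivity $1/n$. The internal randomness consists of the shared threshold noise $\mu \sim \Lap(2/\eps n)$ and independent per-query noises $\nu_j \sim \Lap(6/\eps n)$. I would fix an arbitrary output transcript $\tau = (a_1, \ldots, a_j)$ with $a_1, \ldots, a_{j-1} \in \{L, R\}$ and $a_j$ either equal to $\top$ (the algorithm halts) or simply the last produced answer, and compare $\Pr[M(S) = \tau]$ with $\Pr[M(S') = \tau]$ by integrating over $\mu$ the product of the independent events ``$c_\ell < \hlT$'' or ``$c_\ell > \huT$'' (for $\ell < j$) and the terminating event for round $j$.

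The analysis would split on a ``bad event'' $B$ for $\mu$: namely $|\mu|$ exceeds roughly $(\uT - \lT)/2 - \alpha$ for a small slack $\alpha$ of order $1/(\eps n)$, in which case the noisy thresholds $\hlT, \huT$ overlap and the algorithm halts pathologically. By the Laplace tail bound $\Pr[|\mu| > t] \leq \exp(-\eps n t/2)$, the hypothesized gap $\uT - \lT \geq (12/\eps n)(\log(10/\eps) + \log(1/\delta) + 1)$ is tuned so that $\Pr[B] \leq \delta$, which accounts for the $\delta$ in the $(\eps,\delta)$-DP guarantee. On the complement $\bar B$, I would argue pure $\eps$-differential privacy by a change-of-variables coupling: in the $S'$-execution, shift $\mu$ by an amount of magnitude at most $1/n$ (chosen to match the direction of $q_j(S') - q_j(S)$ for the decisive round $j$) and shift $\nu_j$ for that single round by at most $1/n$. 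Under this coupling the output transcripts agree deterministically, and the intermediate $L$/$R$ rounds consume no additional privacy budget since their events are preserved once the $1/n$ sensitivity shift is absorbed into the already-shifted $\mu$. The accumulated Laplace density ratios from shifting $\mu$ (scale $2/\eps n$) and the one $\nu_j$ (scale $6/\eps n$) multiply to at most $e^{\eps/2} \cdot e^{\eps/6} \leq e^\eps$, with the noise scales chosen precisely so the constants absorb correctly.

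The main obstacle will be handling the two-sided structure cleanly: unlike one-sided SVT, shifting $\mu$ moves $\hlT$ and $\huT$ in opposite directions, so a single shift cannot simultaneously absorb sensitivity on $L$-rounds and $R$-rounds of the same transcript. I would resolve this by choosing the sign of the $\mu$-shift based on whether the terminating round needs $\hlT$ moved up or $\huT$ moved down, and then verifying that for the ``wrong-direction'' intermediate rounds the event is still preserved because of the larger scale on $\nu_j$ (which provides enough slack to tolerate the induced shift of the opposite threshold). Checking that this piecewise coupling yields a valid measure-preserving change of variables, and that the tail mass of $\mu$ outside the good region matches the $\delta$ budget exactly under the stated gap condition, is the delicate step; everything else reduces to routine Laplace density-ratio accounting.
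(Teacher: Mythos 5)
First, note that the paper does not prove this lemma at all---it is imported verbatim from \citet{BunSU16}---so your sketch has to stand on its own merits, and as written it has a genuine gap. The obstacle you identify as the ``delicate step'' is not actually an obstacle: because the noisy thresholds are defined with \emph{opposite} signs of the same noise, $\hlT = \lT + \mu$ and $\huT = \uT - \mu$, a single upward shift of $\mu$ by the sensitivity $1/n$ simultaneously raises $\hlT$ (preserving every $\leftsymb$-answer) and lowers $\huT$ (preserving every $\rightsymb$-answer). This is precisely the design rationale for sharing one $\mu$ between the two thresholds, exactly as in one-sided sparse vector, so no ``piecewise coupling'' with round-dependent shift directions is needed---indeed choosing the shift direction ``to match the decisive round'' would break the preservation of intermediate rounds whose query differences have the opposite sign.

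The real difficulty, which your argument does not handle, is the halting answer $\haltsymb$: its event is the two-sided condition $c_j \in [\hlT,\huT]$. After shifting $\mu$ up by $1/n$ the interval shrinks at \emph{both} ends while $q_j$ can move by $1/n$ in either direction, and no additional shift of $\nu_j$ can repair this, since moving $\nu_j$ helps one endpoint and hurts the other. Hence your central claim that ``under this coupling the output transcripts agree deterministically'' is false exactly at the halting round, and this is why \texttt{BetweenThresholds} is only $(\eps,\delta)$-DP rather than purely $\eps$-DP. Relatedly, your accounting of $\delta$ as the tail probability that $|\mu|$ is so large that the noisy thresholds overlap does not match the lemma: with the stated gap, $\Pr[|\mu| > (\uT-\lT)/2]$ is roughly $\exp\bigl(-3(\log(10/\eps)+\log(1/\delta)+1)\bigr) \ll \delta$, and even conditioned on well-separated noisy thresholds the coupling still fails at $\haltsymb$. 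The $\delta$ (and the $\log(10/\eps)$ term in the gap condition) must instead come from analyzing the halting round itself---comparing the probabilities that the noisy query value lands in the two slightly misaligned intervals for $S$ and $S'$, which requires the interval width $\uT-\lT$ to be large relative to the Laplace scale $6/\eps n$ and to $\log(1/\delta)$. Without that piece, the proof does not go through.
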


\begin{lem}[Accuracy for \texttt{BetweenThresholds}~\citep{BunSU16}] \label{lem:bt-accuracy}
Let $\alpha, \beta,\eps,\lT,\uT \in (0,1)$ and $n,k \in \N$ satisfy \[n \geq \frac{8}{\alpha \eps}\left(\log(k+1) + \log(1/\beta)\right).\] Then, for any input $x \in {X}^n$ and any adaptively-chosen sequence of queries $q_1, q_2, \cdots, q_k$, the answers $a_1, a_2, \cdots a_{\leq k}$ produced by \texttt{BetweenThresholds} on input $x$ satisfy the following with probability at least $1-\beta$. For any $j \in [k]$ such that $a_j$ is returned before \texttt{BetweenThresholds} halts,
\begin{itemize}
\item $a_j = \leftsymb \implies q_j(x) \le \lT + \alpha$,
\item $a_j = \rightsymb \implies q_j(x) \ge \uT - \alpha$, and
\item $a_j = \haltsymb \implies \lT - \alpha \le q_j(x) \le \uT + \alpha$.
\end{itemize}
\end{lem}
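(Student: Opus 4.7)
The plan is to prove Claim~\ref{clm:GenericBBLprivacy} by a case analysis on the adversarial game in Figure~\ref{fig:AdversarialExperiment}: the two neighboring inputs either differ in one element of the initial training set $S$, or in a single query $x_{i,\ell}$ issued in some round $i$. The backbone of the argument will be that across rounds the only state carried forward is $S_{i+1}$, which is produced from $\hat S_i \circ \hat D_i$ by LabelBoost (Step~\ref{GenericBBL_LabelBoost}) followed by random subsampling; so everything the adversary sees from round $i{+}1$ onward is a post-processing of $S_{i+1}$, and Lemma~\ref{lemma:LabelBoostPrivacy} combined with Claim~\ref{claim:sub-sampling} will convert any privacy guarantee for the round-$(i{+}1)$-and-later suffix (as a function of $S_{i+1}$) into a privacy guarantee on $\hat S_i \circ \hat D_i$, and hence on $S_i$ and $D_i$. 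I will also use the structural fact that $S_{i+1}\subseteq\hat D_i'$ consists purely of relabeled query points, so a training example from $S$ cannot reappear in any $S_j$ with $j\geq 2$; this confines the effect of a differing training example to round~1.

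\emph{Training-example case.} If the differing element is not retained in $S_1$ after Step~\ref{GenericBBL_SubsampleS1} the two executions are identically distributed, so I will focus on the retained case and invoke Claim~\ref{claim:sub-sampling} at the end to amplify by the sampling rate $\eps/(3+\exp(\eps+4))$ used in that step. Once retained, the element lies in a single sub-database $S_{1,t}$ and alters exactly one hypothesis $f_t\in F_1$. Two parts of the view depend on $F_1$: the stream of in-round labels from BetweenThresholds in Step~\ref{GenericBBL_main_loop}, which by Observation~\ref{bt-obs} with the chosen $(\eps'_1,\delta'_1,c_1)$ is $(1,\delta)$-DP against adaptive queries under a single-element change of $F_1$; and the LabelBoost-based computation producing $S_2$ (and the entire suffix thereafter), which by Lemma~\ref{lemma:LabelBoostPrivacy} together with Claim~\ref{claim:sub-sampling} for the inner subsample $\hat S_1\subseteq S_1$ will be $(\eps,\delta)$-DP in $S_1$, by inducting on the round index with the suffix's DP as the inductive hypothesis. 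I will compose these two contributions and apply Claim~\ref{claim:sub-sampling} once more for the outer subsample $S_1\subseteq S$ to obtain $(\eps,\delta)$-DP in $S$.

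\emph{Differing-query case.} Rounds $1,\ldots,i-1$ are identical in the two worlds, so the query $x_{i,\ell}$ can influence the view only through (b)~the set $D_i$, which feeds LabelBoost at the end of round $i$ and determines $S_{i+1}$ (and hence every later round), and (a)~the BetweenThresholds answers produced during round $i$ itself. For (b), $\hat D_i$ is a subsample of $D_i$ at rate $\eps/(3+\exp(\eps+4))$, and Lemma~\ref{lemma:LabelBoostPrivacy} applied to LabelBoost together with all of rounds $>i$ (post-processing via $S_{i+1}$), amplified by Claim~\ref{claim:sub-sampling}, will give $(\eps,\delta)$-DP on the part of the view depending on $x_{i,\ell}$ through $S_{i+1}$. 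For (a), the prediction for $x_{i,\ell}$ itself is suppressed in Step~\ref{experiment: predictionStep} because the Flag becomes $1$. Coupling identical Laplace noise across the two worlds, the in-round label sequences agree up through query $\ell$ and can diverge afterwards only if the differing query flips the BetweenThresholds verdict between $\{L,R\}$ and $\top$, shifting the $\top$-count by at most one and causing at most one extra redraw of $(\hlT,\huT)$. I will argue that this off-by-one-segment slack sits inside the $c$-iteration composition chain already accounted for in Observation~\ref{bt-obs}, so the $(1,\delta)$ calibration of BetweenThresholds absorbs it; composing with (b) then finishes the case.

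The hard part of the proof is step~(a): BetweenThresholds is engineered to protect its database $F_i$ rather than the adaptive query stream, and a na\"ive bound could let a single query substitution cascade through all later in-round outputs. The resolution relies on two ingredients already built into the construction: the suppression of the differing query's own answer in the privacy game (so no direct leak occurs through that answer), and the fact that the $c$-iteration variant of BetweenThresholds is calibrated for up to $c$ $\top$-events and therefore tolerates the single extra $\top$ potentially induced by the query substitution. Beyond this point, everything reduces cleanly to Lemma~\ref{lemma:LabelBoostPrivacy}, Claim~\ref{claim:sub-sampling}, the post-processing property of differential privacy, and the observation that training examples never propagate past round~1.
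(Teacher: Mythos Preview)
Your proposal does not address the stated lemma at all. The statement in question is Lemma~\ref{lem:bt-accuracy}, the \emph{accuracy} guarantee for \texttt{BetweenThresholds} cited from \citep{BunSU16}: under the stated bound on $n$, with probability at least $1-\beta$ every answer $a_j\in\{\leftsymb,\rightsymb,\haltsymb\}$ correctly localizes $q_j(x)$ relative to $\lT,\uT$ up to additive $\alpha$. Your write-up instead argues Claim~\ref{clm:GenericBBLprivacy}, the $(\eps,\delta)$-privacy of \texttt{GenericBBL}, via a case analysis on where the neighboring inputs differ. These are entirely different results with different proofs; nothing in your argument bears on the accuracy lemma.

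For the record, the paper does not prove Lemma~\ref{lem:bt-accuracy}; it is quoted as a tool from prior work. A proof consists of a Laplace tail bound plus a union bound: each noise term $\mu\sim\Lap(2/\eps n)$ and $\nu_j\sim\Lap(6/\eps n)$ satisfies $|\mu|,|\nu_j|\le \alpha/2$ except with probability at most $\exp(-\alpha\eps n/12)$ each, and union-bounding over the at most $k+1$ such draws gives total failure probability $(k+1)\exp(-\alpha\eps n/12)\le\beta$ once $n\ge \frac{12}{\alpha\eps}\log\frac{k+1}{\beta}$, which is implied by the hypothesis. Conditioned on all noises being at most $\alpha/2$ in magnitude, each of the three implications in the lemma follows directly from the branching rule in \texttt{BetweenThresholds}. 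If your intention was really to prove Claim~\ref{clm:GenericBBLprivacy}, resubmit it against that claim; as a proof of Lemma~\ref{lem:bt-accuracy} it is off-target.
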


\begin{obs}\label{bt-obs}
Using standard composition theorems for differential privacy (see, e.g.,~\cite{DRV10}), we can assume that algorithm \texttt{BetweenThresholds} takes another parameter $c$, and halts after $c$ times of outputting $\top$. In this case, the algorithm satisfies $(\eps',2c\delta)$-differential privacy, for $\eps'=\sqrt{2c\ln(\frac{1}{c\delta})}\eps+c\eps(e^\eps-1)$.
\end{obs}

\section{Some Technical Facts}\label{appendix:tech-facts}
We refer to the execution of steps~\ref{step:beginloop}-\ref{step:endloop} of algorithm \texttt{GenericBBL} as a {\em phase} of the algorithm, indexed by $i=1,2,3,\dots$. 

The original \texttt{BetweenThresholds} needs to halt when it outputs $\top$. In \texttt{GenericBBL}, we tolerance it to halt at most $c_i$ times in the phase $i$. We prove \texttt{BetweenThresholds} in \texttt{GenericBBL} is $(1,\delta)$-differentially private.
\begin{claim}
For $\delta<1$, Mechanism \texttt{BetweenThresholds} used in step~\ref{GenericBBL_unlabelR} in the $i$-th iteration, is $(1,\delta)$-differentially private.
\end{claim}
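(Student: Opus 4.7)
The plan is a direct numerical verification that the parameters set in step~\ref{GenericBBL_unlabelR} feed the bound of Observation~\ref{bt-obs} to give overall privacy $(1,\delta)$. Recall that in phase $i$ the algorithm instantiates \texttt{BetweenThresholds} with base privacy parameters $\eps'_i = \frac{1}{3\sqrt{c_i\ln(2/\delta)}}$, $\delta'_i = \frac{\delta}{2c_i}$, and is allowed to output $\top$ up to $c_i$ times. Observation~\ref{bt-obs} tells us that in this setting the algorithm satisfies $(\tilde\eps_i, 2c_i\delta'_i)$-differential privacy, where
\[
\tilde\eps_i \;=\; \sqrt{2c_i\ln\!\tfrac{1}{c_i\delta'_i}}\cdot\eps'_i \;+\; c_i\eps'_i\!\left(e^{\eps'_i}-1\right).
\]
So it suffices to show $2c_i\delta'_i\le \delta$ and $\tilde\eps_i\le 1$.

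First, the $\delta$-part is immediate: $2c_i\delta'_i = 2c_i\cdot\frac{\delta}{2c_i}=\delta$, and consequently $c_i\delta'_i=\delta/2$, so $\ln\!\tfrac{1}{c_i\delta'_i}=\ln(2/\delta)$.

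Next, substitute into the first summand of $\tilde\eps_i$:
\[
\sqrt{2c_i\ln(2/\delta)}\cdot\frac{1}{3\sqrt{c_i\ln(2/\delta)}} \;=\; \frac{\sqrt{2}}{3}.
\]
For the second summand, I would first note that $\eps'_i\le 1$ (indeed, $c_i\ln(2/\delta)\ge 1/9$ holds comfortably since $\delta<1$ and $c_i\ge 1$), so we may use the elementary bound $e^{x}-1\le 2x$ for $x\in[0,1]$. This yields
\[
c_i\eps'_i\!\left(e^{\eps'_i}-1\right) \;\le\; 2c_i(\eps'_i)^2 \;=\; \frac{2c_i}{9c_i\ln(2/\delta)} \;=\; \frac{2}{9\ln(2/\delta)} \;\le\; \frac{2}{9\ln 2},
\]
using $\delta<1$ in the last step. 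Adding the two contributions,
\[
\tilde\eps_i \;\le\; \frac{\sqrt{2}}{3} + \frac{2}{9\ln 2} \;<\; 0.48 + 0.33 \;<\; 1,
\]
which completes the verification.

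The ``hard part'' here is essentially bookkeeping: the choice of $\eps'_i$ is exactly calibrated so that the advanced-composition first term is a universal constant ($\sqrt{2}/3$), and the choice of $\delta'_i$ is exactly what is needed so that the $2c_i\delta'_i$ charge collapses to $\delta$. The only subtlety is confirming $\eps'_i\le 1$ so the Taylor bound $e^x-1\le 2x$ applies; once that is in place, the remaining arithmetic is routine.
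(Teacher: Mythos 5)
Your proof is correct and follows essentially the same route as the paper: plug the step-\ref{GenericBBL_unlabelR} parameters into Observation~\ref{bt-obs}, note $2c_i\delta'_i=\delta$, bound $e^{\eps'_i}-1\le 2\eps'_i$, and verify $\frac{\sqrt{2}}{3}+\frac{2}{9\ln(2/\delta)}\le 1$. Your explicit check that $\eps'_i\le 1$ (so the Taylor bound applies) is a small detail the paper leaves implicit, but otherwise the arguments coincide.
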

\begin{proof} Let $\eps'_i,\delta'_i$ be as in Step~\ref{GenericBBL_unlabelR}.
Since $e^{\eps'_i}-1<2\eps'_i$ for $0<\eps'_i<1$, we have
$$
\sqrt{2c_i\ln(\frac{1}{c_i\delta'_i})}\cdot \eps'_i+c_i\eps'_i(e^{\eps'_i}-1) \leq \sqrt{2c_i\ln(\frac{2}{\delta})}\cdot\eps'_i+2c_i{\eps'_i}^2
= \frac{\sqrt{2}}{3} +\frac{2}{9\ln(\frac{2}{\delta})}
\leq 1.
$$
The proof is concluded by using  observation~\ref{bt-obs}.
\end{proof}

In Claim~\ref{clm:prob-fi-alpha-good}-~\ref{betweenthresholds-fail-ctimes}, we prove that with high probability, \texttt{BetweenThresholds} in step~\ref{GenericBBL_main_loop} halts within $64\alpha_i$ times. We prove it by 4 steps: \\
1.prove that with high probability, most hypothesis in step~\ref{step:generatehypothesis} have high accuracy (Claim~\ref{clm:prob-fi-alpha-good}). \\
2.prove that if most hypothesis in step~\ref{step:generatehypothesis} have high accuracy, then with high probability, the queries in \texttt{BetweenThresholds} are closed to 0 or 1 (Claim~\ref{clm:prob-qx-good}).\\
3.prove that if the queries in \texttt{BetweenThresholds} are closed to 0 or 1, then \texttt{BetweenThresholds} in step~\ref{GenericBBL_main_loop} will outputs $L$ or $R$ with high probability(Claim~\ref{claim:betweenthresholds-singl-query-fail}).\\
4.prove that if \texttt{BetweenThresholds} outputs $L$ or $R$, then every single phase fails with low probability(Claim~\ref{betweenthresholds-fail-ctimes}).

\begin{claim}\label{clm:prob-fi-alpha-good}
    If $\beta_i\leq 1/32$ and $T_i\geq 96\ln{\frac{1}{\alpha_i}}$, then with probability $1-\alpha_i$, $\frac{15T_i}{16}$ hypotheses in step~\ref{step:generatehypothesis} are $\alpha_i$-good with respect to $g_i$, where $g_i$ is the concept of $S_i$.
\end{claim}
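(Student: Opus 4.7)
\bigskip

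The plan is to combine the VC bound of Theorem~\ref{thm:VCbound} with a Chernoff bound on the number of ``bad'' intermediate learners. The set $S_i$ can be viewed (by induction on $i$, using the structure of \texttt{GenericBBL} together with the utility guarantee of \texttt{LabelBoost} from Lemma~\ref{lem:LabelBoostUtility}) as $|S_i|$ examples drawn i.i.d.\ from $\DDD$ and labeled consistently with some concept $g_i \in C$. Since in Step~\ref{step:beginloop} the set $S_i$ is partitioned into $T_i$ \emph{disjoint} sub-databases $S_{i,1}, \ldots, S_{i,T_i}$, each of size exactly $\lambda_i$, the sub-databases are themselves independent i.i.d.\ samples from $\DDD$ labeled by $g_i$.

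By the choice of $\lambda_i = \frac{8\VC(C)\log(13/\alpha_i) + 4\log(2/\beta_i)}{\alpha_i}$ and Theorem~\ref{thm:VCbound}, the empirical risk minimizer $f_t$ produced from $S_{i,t}$ in Step~\ref{step:generatehypothesis} satisfies $\error_{\DDD}(g_i, f_t) \leq \alpha_i$ with probability at least $1 - \beta_i$. Letting $B_t$ denote the indicator that $f_t$ is \emph{not} $\alpha_i$-good w.r.t.\ $g_i$, the disjointness of the sub-databases makes $B_1, \ldots, B_{T_i}$ mutually independent, with $\mathbb{E}[B_t] \leq \beta_i \leq 1/32$.

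It then suffices to show that the total number of bad hypotheses $B = \sum_{t=1}^{T_i} B_t$ exceeds $T_i/16$ with probability at most $\alpha_i$. Since $B$ is stochastically dominated by a $\operatorname{Binomial}(T_i, 1/32)$ random variable $Y$ with mean $\mu = T_i/32$, the multiplicative Chernoff bound gives
\[
\Pr\!\left[B \geq \tfrac{T_i}{16}\right] \;\leq\; \Pr\!\left[Y \geq 2\mu\right] \;\leq\; \exp\!\left(-\tfrac{\mu}{3}\right) \;=\; \exp\!\left(-\tfrac{T_i}{96}\right).
\]
The hypothesis $T_i \geq 96 \ln(1/\alpha_i)$ then yields $\exp(-T_i/96) \leq \alpha_i$, completing the argument, so that with probability at least $1-\alpha_i$ we have $B \leq T_i/16$, i.e.\ at least $\frac{15 T_i}{16}$ of the $f_t$'s are $\alpha_i$-good.

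There is no real obstacle beyond bookkeeping: the only mild subtlety is justifying that $S_i$ is indeed distributed as $|S_i|$ i.i.d.\ draws from $\DDD$ labeled by some $g_i \in C$, which holds by induction (base case $i=1$ is immediate from the sub-sampling in Step~\ref{GenericBBL_SubsampleS1}; the inductive step uses that $\hat D_i$ is composed of i.i.d.\ draws from $\DDD$ and that \texttt{LabelBoost} relabels them consistently with a concept in $C$). Given that, the rest is an independence-plus-Chernoff calculation whose constants are exactly tuned so that the assumptions $\beta_i \leq 1/32$ and $T_i \geq 96 \ln(1/\alpha_i)$ are what is needed.
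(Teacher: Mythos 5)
Your proposal is correct and follows essentially the same route as the paper: apply the VC bound (Theorem~\ref{thm:VCbound}) to each of the $T_i$ disjoint sub-databases to get failure probability at most $\beta_i\leq 1/32$ per hypothesis, then use independence of the disjoint sub-samples and a multiplicative Chernoff bound to show that more than $T_i/16$ bad hypotheses occur with probability at most $\exp(-T_i/96)\leq\alpha_i$. Your explicit justification of the independence and of the i.i.d./realizability structure of $S_i$ is a welcome (if routine) elaboration of steps the paper leaves implicit, and your constants land exactly where the paper's do.
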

\begin{proof}
    By the VC bound (Theorem~\ref{thm:VCbound}), for each $t\in[T_i]$, we have
    $$
    \Pr[\error_\DDD(f_t,g_i)\leq \alpha_i]\geq 1-\beta_i.
    $$
    By Chernoff bound, if $T_i\geq \frac{16+256\beta_i}{(1-16\beta_i)^2}\ln\frac{1}{\alpha_i}$, then with probability $1-\alpha_i$, we have $\frac{15T_i}{16}$ hypotheses have $\error_\DDD(f_t,g_i)\leq\alpha_i$. When $\beta_i\leq 1/32$, it is sufficient to set $T_i\geq 96\ln\frac{1}{\alpha_i}$.
\end{proof}

\begin{claim}\label{clm:prob-qx-good}
    If $\alpha_i\leq 1/16$ and $\frac{15T_i}{16}$ hypotheses in step~\ref{step:generatehypothesis} are $\alpha_i$-good with respect to $g_i$, where $g_i$ is the concept of $S_i$, then $\Pr_{x\sim\DDD}[|q(x)-\frac{1}{2}|\leq \frac{3}{8}]\leq 15\alpha_i$.
\end{claim}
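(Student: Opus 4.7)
The plan is to interpret $q(x)$ as the normalized average $\frac{1}{T_i}\sum_{t\in[T_i]} f_t(x) \in [0,1]$ (so that the thresholds $1/2\pm\alpha_i$ make dimensional sense), and reduce the claim to a one-line application of Markov's inequality to a nonnegative random variable that counts disagreements among the ``good'' hypotheses.

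First I would let $G\subseteq[T_i]$ be the indices of $\alpha_i$-good hypotheses, so $|G|\geq \frac{15T_i}{16}$ by hypothesis, and define $E_x=|\{t\in G:f_t(x)\neq g_i(x)\}|$. By linearity of expectation and the definition of $G$,
$$
\ex{x\sim\DDD}{E_x}=\sum_{t\in G}\pr{x\sim\DDD}{f_t(x)\neq g_i(x)}\leq |G|\cdot\alpha_i\leq \frac{15T_i\alpha_i}{16}.
$$
The deterministic heart of the argument is to show that for every $x$, the event $|q(x)-1/2|\leq 3/8$ forces $E_x\geq T_i/16$. I would case on $g_i(x)$. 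If $g_i(x)=1$, the $|G|-E_x$ good hypotheses that agree with $g_i$ contribute ones to $T_i\cdot q(x)$, while the at most $T_i-|G|\leq T_i/16$ remaining hypotheses contribute at most $T_i/16$; the constraint $q(x)\leq 7/8$ then rearranges to $E_x\geq |G|-7T_i/8\geq \frac{15T_i}{16}-\frac{14T_i}{16}=\frac{T_i}{16}$. The symmetric case $g_i(x)=0$ uses $q(x)\geq 1/8$ to obtain the same bound. Combining with Markov,
$$
\pr{x\sim\DDD}{|q(x)-1/2|\leq 3/8}\leq \pr{x\sim\DDD}{E_x\geq T_i/16}\leq \frac{\ex{x\sim\DDD}{E_x}}{T_i/16}\leq 15\alpha_i.
$$

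The only delicate point is calibrating constants: the $15/16$-fraction of good hypotheses, the $3/8$ gap, and the cutoff $T_i/16$ are coupled precisely so that the worst-case contribution of the non-good hypotheses is absorbed and Markov yields exactly the promised $15\alpha_i$ bound. The hypothesis $\alpha_i\leq 1/16$ is used only to keep the bound meaningful (below $1$). Apart from this bookkeeping, the argument requires nothing beyond linearity of expectation and Markov's inequality; I do not anticipate any serious obstacle.
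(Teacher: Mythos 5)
Your proposal is correct and is essentially the paper's argument made precise: the paper's ``worst case'' counting picture (total error mass of the $\alpha_i$-good hypotheses is at most $\tfrac{15T_i}{16}\alpha_i$, and any $x$ with $|q(x)-\tfrac12|\leq\tfrac38$ forces at least $T_i/16$ of the good hypotheses to err on $x$) is exactly your linearity-of-expectation plus Markov computation, and your normalization of $q(x)$ as the average matches the intended use of the $1/2\pm\alpha_i$ thresholds. Your write-up is in fact cleaner than the paper's sketch, and you are right that $\alpha_i\leq 1/16$ plays no essential role beyond keeping the bound nontrivial.
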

\begin{proof}
    W.l.o.g. assume $g_i(x)=1$, where $g_i$ is the concept of $S_i$, so it is sufficient to prove $\Pr_{x\sim\DDD}[q(x)\leq \frac{7}{8}]\leq 8\alpha_i$. Consider the worst case that $\frac{T_i}{16}$ ''bad" hypotheses output 0. In that case, $q(x)\leq \frac{7}{8}$ when $\frac{T_i}{16}$ of $\alpha_i$-good hypotheses output 0. So that with probability $15\alpha_i$, we have $q(x)\leq\frac{7}{8}$.(see Figure~\ref{fig:queries-in-BT})

    \begin{figure}[h]
    \centering
    \includegraphics[width=0.9\textwidth]{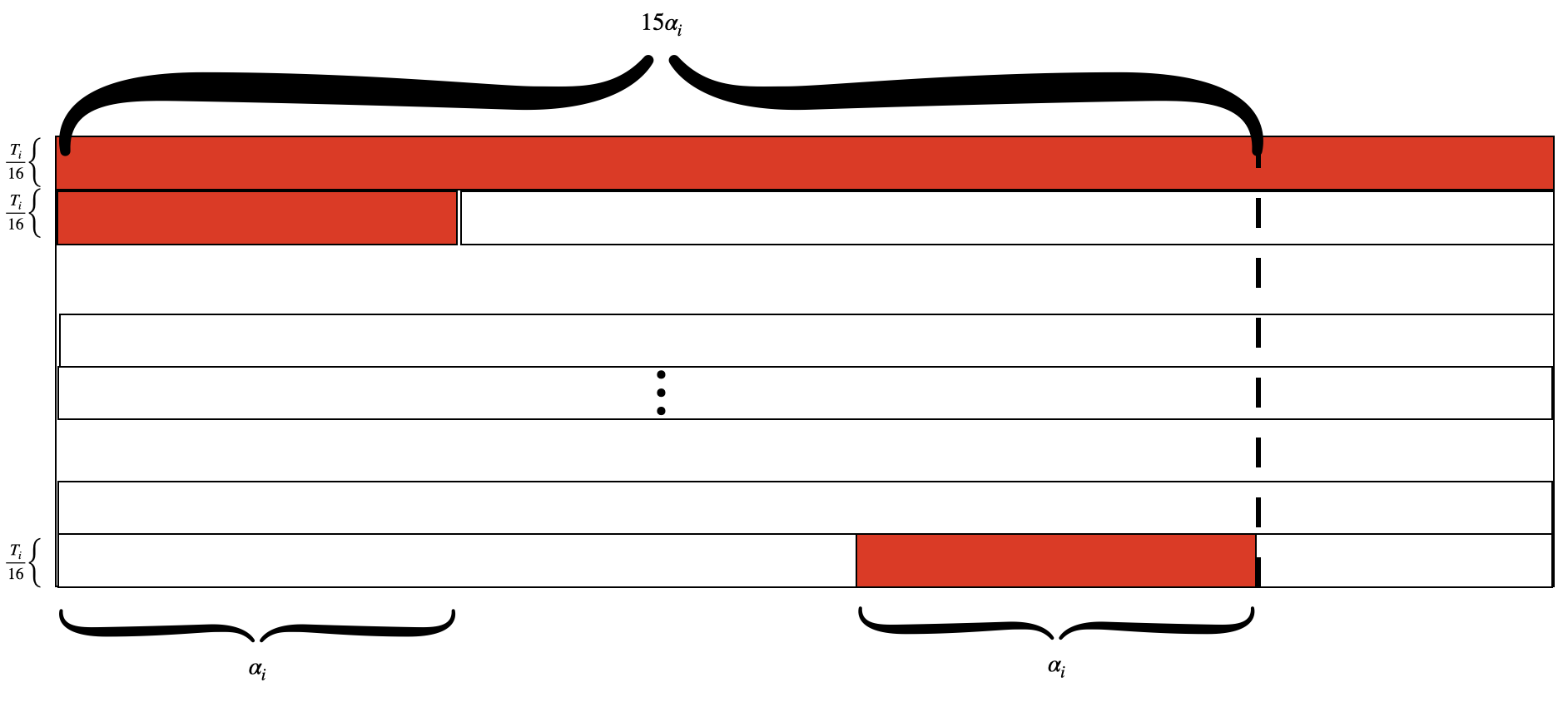}
    \caption{The horizontal represents the input point. The vertical represents the hypothesis. The red parts represent the incorrect prediction. We let $\frac{T_i}{16}$ hypothesis predict all labels as 0. To let $q(x)\leq\frac{7}{8}$, there must exist $\frac{T_i}{16}$ hypothesis output 0. In the worst case, at most $15\alpha_i$ of points are labeled as 0.}\label{fig:queries-in-BT}

    \end{figure}
\end{proof}

\begin{claim}\label{claim:betweenthresholds-singl-query-fail} Let $t_u<1/2+1/8$ and $t_\ell>1/2-1/8$. 
    For a query $q$ such that $q(S)>7/8$ (similarly, for $q(S)<1/8$), Algorithm \texttt{BetweenThresholds} outputs $R$ (similarly, $L$) with probability at least $1-\mbox{exp}\left(-\frac{T_i}{144\sqrt{c_i\ln(\frac{2}{\delta})}}\right)$. 
\end{claim}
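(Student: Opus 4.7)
The plan is to trace through the \texttt{BetweenThresholds} subroutine (Algorithm~\ref{alg:BetweenThresholds}) and reduce the event ``output $R$'' to a pair of Laplace tail bounds. Recall that \texttt{BetweenThresholds} draws $\mu \sim \Lap(2/(\eps'_i T_i))$ once at the start and then, for each query $q_j$, draws $\nu_j \sim \Lap(6/(\eps'_i T_i))$ and sets $c_j = q_j(F_i) + \nu_j$. It outputs $R$ whenever $c_j > \huT = t_u - \mu$. Thus, on a single query $q$, it suffices to show that with high probability
\[
q(F_i) + \nu_j \;>\; t_u - \mu, \qquad \text{i.e.,} \qquad \mu + \nu_j \;>\; t_u - q(F_i).
\]

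First I would plug in the thresholds. Since $\alpha_i \le 1/16$ by the running assumption of the analysis, we have $t_u = \tfrac12 + \alpha_i \le \tfrac58$, and the hypothesis $q(F_i) > 7/8$ gives $t_u - q(F_i) < -\tfrac14$. So the failure event ``$R$ is not returned'' is contained in $\{\mu + \nu_j \le -\tfrac14\}$, which in turn is contained in $\{|\mu| \ge \tfrac18\} \cup \{|\nu_j| \ge \tfrac18\}$.

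Next I would apply the standard Laplace tail bound $\Pr[|\Lap(b)| \ge t] = e^{-t/b}$ with the parameter $\eps'_i = \frac{1}{3\sqrt{c_i \ln(2/\delta)}}$ from Step~\ref{GenericBBL_unlabelR}. A direct computation gives
\[
\Pr\bigl[|\mu| \ge \tfrac18\bigr] = \exp\!\left(-\tfrac{\eps'_i T_i}{16}\right) = \exp\!\left(-\tfrac{T_i}{48\sqrt{c_i\ln(2/\delta)}}\right),
\]
\[
\Pr\bigl[|\nu_j| \ge \tfrac18\bigr] = \exp\!\left(-\tfrac{\eps'_i T_i}{48}\right) = \exp\!\left(-\tfrac{T_i}{144\sqrt{c_i\ln(2/\delta)}}\right).
\]
The second bound dominates, and a union bound yields a failure probability of at most $\exp(-T_i/(144\sqrt{c_i\ln(2/\delta)}))$ after absorbing the constant $2$ into mild slack in the exponent (alternatively, one can tighten the split of the $\tfrac14$ gap between $\mu$ and $\nu_j$ in a way that makes both terms equal to the stated bound). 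The case $q(F_i) < 1/8$ is entirely symmetric: now $t_\ell > 3/8$ gives $t_\ell - q(F_i) > 1/4$, so outputting $L$ fails only when $\mu + \nu_j \ge 1/4$, and the same Laplace tail bounds apply.

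The core step is really just the arithmetic reduction to $\{|\mu| \ge 1/8\} \cup \{|\nu_j| \ge 1/8\}$; the only potential obstacle is keeping track of the normalization conventions (the query $q_{x_{i,\ell}}(F_i)$ should be interpreted as the \emph{fraction} of hypotheses voting $1$, matching the range $[0,1]$ of the thresholds $t_u,t_\ell$, so that the Laplace scales $2/(\eps'_i T_i)$ and $6/(\eps'_i T_i)$ stemming from database size $n = T_i$ in Algorithm~\ref{alg:BetweenThresholds} are used correctly). Once that is nailed down, the remainder of the proof is a half-line of algebra plus the Laplace tail estimate.
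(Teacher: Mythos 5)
Your reduction is the same one the paper uses: output $R$ iff $q(F_i)+\nu > \huT = t_u-\mu$, and since $q(F_i)-t_u > 7/8-5/8 = 1/4$, failure forces a Laplace deviation of at least $1/8$ in $\mu$ or in $\nu$; the scale computations $\eps'_i T_i/16 = T_i/(48\sqrt{c_i\ln(2/\delta)})$ and $\eps'_i T_i/48 = T_i/(144\sqrt{c_i\ln(2/\delta)})$ are also exactly the paper's. The one place where your write-up falls short of the stated bound is the constant: by passing to the two-sided events $\{|\mu|\ge 1/8\}\cup\{|\nu|\ge 1/8\}$ you get tails $e^{-T_i/(48\sqrt{c_i\ln(2/\delta)})}+e^{-T_i/(144\sqrt{c_i\ln(2/\delta)})}$, i.e.\ roughly $2\exp\bigl(-T_i/(144\sqrt{c_i\ln(2/\delta)})\bigr)$, whereas the claim fixes the exponent, so the extra factor $2$ cannot simply be ``absorbed into mild slack in the exponent''; and rebalancing the $1/4$ split cannot by itself make a sum of two positive terms smaller than one of them --- it would additionally require a lower bound on $\eps'_i T_i$ (true here, but an extra argument you don't supply).

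The clean repair, which is what the paper does, is to keep the events one-sided: failure is contained in $\{\nu \le -1/8\}\cup\{\mu \ge 1/8\}$, and the one-sided Laplace tail carries the factor $\tfrac12$, so (by independence, or a union bound)
\[
\Pr[\text{output } R] \;\ge\; \Bigl(1-\tfrac12 e^{-T_i/(144\sqrt{c_i\ln(2/\delta)})}\Bigr)\Bigl(1-\tfrac12 e^{-T_i/(48\sqrt{c_i\ln(2/\delta)})}\Bigr) \;\ge\; 1-e^{-T_i/(144\sqrt{c_i\ln(2/\delta)})},
\]
which is the claim verbatim. Your side remark about normalization (reading $q_{x}(F_i)$ as the fraction of votes so that it lives in $[0,1]$ alongside $t_\ell,t_u$, with Laplace scales $2/(\eps'_i T_i)$ and $6/(\eps'_i T_i)$) is correct and matches how the paper implicitly treats the query.
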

\begin{proof}
Wlog assume $q(S)>7/8$, it is sufficient to show

\begin{eqnarray*}
\Pr[\mbox{\texttt{BetweenThreshold} outputs} R] & 
= & \Pr[q(S)+\mbox{Lap}(6/\eps'T_i)>t_u+\mbox{Lap}(2/\eps'T_i)]\\
& > & \Pr[\mbox{Lap}(6/\eps'T_i)>-1/8]\cdot\Pr[\mbox{Lap}(2/\eps'T_i)<1/8]\\
& = & \left(1-\frac{1}{2}\mbox{exp}\left(-\frac{T_i}{144\sqrt{c_i\ln(\frac{2}{\delta})}}\right)\right)\cdot\left(1-\frac{1}{2}\mbox{exp}\left(-\frac{T_i}{48\sqrt{c_i\ln(\frac{2}{\delta})}}\right)\right)\\
& > & 1-\mbox{exp}\left(-\frac{T_i}{144\sqrt{c_i\ln(\frac{2}{\delta})}}\right).
\end{eqnarray*}
\end{proof}

\begin{claim}\label{betweenthresholds-fail-ctimes}
For any phase $i$, \texttt{BetweenThresholds} outputs $\top$ at most $64\alpha_iR_i$ times with probability at most $\beta_i$.
\end{claim}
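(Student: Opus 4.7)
The idea is to chain together Claims~\ref{clm:prob-fi-alpha-good}, \ref{clm:prob-qx-good}, and~\ref{claim:betweenthresholds-singl-query-fail} to obtain a per-query bound of roughly $16\alpha_i$ on the probability that \texttt{BetweenThresholds} emits $\top$, and then use a Chernoff bound across the $R_i$ rounds of the phase to argue that the total number of $\top$'s does not exceed the halt threshold $c_i=64\alpha_i R_i$ except with probability at most $\beta_i$.

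More concretely, first condition on the event from Claim~\ref{clm:prob-fi-alpha-good} (which holds with probability $\ge 1-\alpha_i$) that at least a $15/16$-fraction of the hypotheses $f_1,\dots,f_{T_i}$ produced in Step~\ref{step:generatehypothesis} are $\alpha_i$-good with respect to the target concept $g_i$ labeling $S_i$. Under this conditioning, Claim~\ref{clm:prob-qx-good} implies that for $x\sim\DDD$, the query value $q_x(F_i)=\frac{1}{T_i}\sum_t f_t(x)$ falls in the "clear" regime $q_x(F_i)<1/8$ or $q_x(F_i)>7/8$ except with probability at most $15\alpha_i$. For any such clear query, Claim~\ref{claim:betweenthresholds-singl-query-fail} guarantees that \texttt{BetweenThresholds} returns $L$ or $R$ (not $\top$) except with probability $\exp\bigl(-T_i/(144\sqrt{c_i\ln(2/\delta)})\bigr)$; with the parameters set in Step~\ref{GenericBBL_unlabelR} this is $\ll \alpha_i$. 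Hence the marginal probability that any fixed round in phase $i$ returns $\top$ is at most $15\alpha_i + o(\alpha_i) \le 16\alpha_i$.

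Now let $Y_\ell = \1\{y_{i,\ell}=\top\}$. In the accuracy analysis the queries $x_{i,1},\dots,x_{i,R_i}$ are drawn i.i.d.\ from $\DDD$, and the only shared internal randomness of \texttt{BetweenThresholds} across rounds is the initial noisy offset $\mu\sim\Lap(2/\eps'_i T_i)$; conditioning on the high-probability event $|\mu|\le 1/32$, the $Y_\ell$ depend on independent per-query noise $\nu_\ell$ and hence are conditionally independent Bernoullis with mean at most $16\alpha_i$. A multiplicative Chernoff bound then gives
\[
\Pr\!\left[\sum_{\ell=1}^{R_i}Y_\ell > 64\alpha_i R_i\right]\;\le\;\exp\bigl(-\Theta(\alpha_i R_i)\bigr),
\]
and plugging in $R_i = 25600|S_i|/\eps$ together with the dependence of $|S_i|$ on $1/\alpha_i$ and $\log(1/\beta_i)$ dictated by the algorithm makes this at most $\beta_i/2$. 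A union bound with the two conditioning failures ($\alpha_i$ from Claim~\ref{clm:prob-fi-alpha-good} and the negligible probability that $|\mu|>1/32$) yields the claimed bound of $\beta_i$.

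The main technical obstacle I anticipate is justifying the conditional independence of the $Y_\ell$'s: the single shared Laplace offset $\mu$ couples the rounds, and the query points themselves are only i.i.d.\ in the accuracy setting (they may be adversarially chosen in the privacy setting, but the claim is a utility statement so this is the right regime). Handling $\mu$ by conditioning on $|\mu|\le 1/32$, as above, is clean and only affects the bound in a lower-order way, since the threshold gap $t_u-t_\ell = 2\alpha_i$ is much smaller than the $3/8$ margin separating clear queries from the thresholds. After that, the remainder of the argument is a routine bookkeeping of the parameters set in Step~\ref{GenericBBL_unlabelR} to verify that the Chernoff exponent dominates $\log(2/\beta_i)$.
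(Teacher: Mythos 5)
Your proposal follows essentially the same route as the paper's proof: chain Claims~\ref{clm:prob-fi-alpha-good}, \ref{clm:prob-qx-good}, and~\ref{claim:betweenthresholds-singl-query-fail} to get an $O(\alpha_i)$ per-query probability of $\top$ (the paper uses $32\alpha_i$, you use $16\alpha_i$), and then apply a multiplicative Chernoff bound over the $R_i$ rounds using the fact that $R_i \gtrsim \ln(1/\beta_i)/\alpha_i$. Your explicit conditioning on the shared Laplace offset $\mu$ to justify conditional independence of the per-round indicators is a careful refinement of a step the paper's one-line Chernoff invocation leaves implicit, but it does not change the underlying argument.
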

\begin{proof}
For a single query, if $t_u<1/2+1/8$  and $q(S)>7/8$ (similarly, $t_\ell>1/2-1/8$ and $q(S)<1/8$), by Claim~\ref{claim:betweenthresholds-singl-query-fail}, \texttt{BetweenThresholds} outputs $\top$ with probability at most $\mbox{exp}\left(-\frac{T_i}{144\sqrt{c_i\ln(\frac{2}{\delta})}}\right)=\mbox{exp}\left(-\frac{T_i}{144\sqrt{64\alpha_iR_i\ln(\frac{2}{\delta})}}\right)<\alpha_i$. Combine Claim~\ref{clm:prob-fi-alpha-good} and \ref{clm:prob-qx-good}, \texttt{BetweenThresholds} outputs $\top$ with probability at most $32\alpha_i$. By the Chernoff bound and $R_i\geq\frac{3\ln(\frac{1}{\beta_i})}{\alpha_i}$, \texttt{BetweenThresholds} outputs $\top$ more than $64\alpha_iR_i$ times with probability at most $\beta_i$.
\end{proof}

In step~\ref{GenericBBL_LabelBoost}, \texttt{GenericBBL} takes a random subset of size $\lambda_{i+1}T_{t+1}$ from $\hat{D}'_i$. We show that the size of $\hat{D}'_i$ is at least $\lambda_{i+1}T_{t+1}$.
\begin{claim}\label{Si-size}
When $\varepsilon\leq 1$, for any $i\geq 1$, we always have
$|\hat{D}'_i|\geq \lambda_{i+1}T_{i+1}$.
\end{claim}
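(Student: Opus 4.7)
The plan is to reduce the desired inequality to a comparison of the form $25600\lambda_i T_i/(3+\exp(\eps+4))\geq \lambda_{i+1}T_{i+1}$ and then bound the one-step growth ratios $\lambda_{i+1}/\lambda_i$ and the arguments of the logarithms. First, since \texttt{LabelBoost} only relabels its input, it preserves size, so $|\hat{D}'_i|=|\hat{D}_i|=\frac{\eps|D_i|}{3+\exp(\eps+4)}=\frac{25600|S_i|}{3+\exp(\eps+4)}$, using $|D_i|=R_i=25600|S_i|/\eps$ from steps~\ref{GenericBBL_unlabelR} and~\ref{GenericBBL_UnlabeledDatabases}. A quick induction on $i$ establishes $|S_i|=\lambda_i T_i$: the base case $i=1$ is a direct calculation from step~\ref{GenericBBL_SubsampleS1} together with the definition of $T_1$, and the inductive step is exactly the size specified in step~\ref{GenericBBL_LabelBoost}. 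Hence it suffices to prove $\frac{25600\lambda_i T_i}{3+\exp(\eps+4)}\geq \lambda_{i+1}T_{i+1}$.

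Next I will substitute $T_j=\tau\lambda_j\log(1/\delta)\log^2(\lambda_j/(\eps\alpha_j\beta_j\delta))/(\alpha_j\eps)$ for $j\in\{i,i+1\}$, cancel the common factor $\tau\log(1/\delta)/\eps$, and use $\alpha_{i+1}=\alpha_i/2$. This reduces the target to
\[
\frac{25600}{2(3+\exp(\eps+4))}\;\geq\;\left(\frac{\lambda_{i+1}}{\lambda_i}\right)^{\!2}\left(\frac{\log(\lambda_{i+1}/(\eps\alpha_{i+1}\beta_{i+1}\delta))}{\log(\lambda_i/(\eps\alpha_i\beta_i\delta))}\right)^{\!2}.
\]
Since $\eps\leq 1$, the left-hand side is at least $25600/(2(3+e^5))>84$. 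For the right-hand side, using $\log(26/\alpha_i)\leq 2\log(13/\alpha_i)$ and $\log(4/\beta_i)\leq 2\log(2/\beta_i)$ (both valid for $\alpha_i,\beta_i<1$), I get $\lambda_{i+1}/\lambda_i\leq 4$. For the log ratio, $\lambda_{i+1}/(\eps\alpha_{i+1}\beta_{i+1}\delta)\leq 16\cdot \lambda_i/(\eps\alpha_i\beta_i\delta)$, so the numerator exceeds the denominator by at most $\log 16=4$; since $\alpha,\beta,\delta<1/16$ and $\VC(C)\geq 1$, I check that $\lambda_1/(\eps\alpha_1\beta_1\delta)\geq e^4$, and this quantity is nondecreasing in $i$, so the denominator is always $\geq 4$ and the log ratio is $\leq 2$. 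Thus the right-hand side is at most $16\cdot 4=64<84$, closing the inequality.

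The main obstacle is the careful bookkeeping of the logarithmic arguments---specifically, verifying that the initial log is already large enough so that the additive $\log 16$ increments per iteration do not blow up the ratio across iterations. Once this is in place, the numerical constants have comfortable slack; the huge constant $\tau$ in $T_i$ plays no role in this particular claim and is provisioned for the accuracy arguments elsewhere in the analysis.
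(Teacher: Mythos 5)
Your proof is correct and follows essentially the same route as the paper's: both use $|\hat{D}'_i|=\frac{25600\,|S_i|}{3+\exp(\eps+4)}$ with $|S_i|=\lambda_i T_i$ and then bound the one-step growth of $\lambda_i T_i$ by the constant $128$ (the paper via $\lambda_{i+1}\leq 4\lambda_i$ and $T_{i+1}\leq 32 T_i$, you via the equivalent squared-ratio bound $2\cdot 16\cdot 4$). Your write-up is in fact a bit more careful than the paper's, making explicit the induction giving $|S_i|=\lambda_i T_i$ and the lower bound on the log argument that controls the $\log^2$ ratio.
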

\begin{proof}
 Let $m=3+\mbox{exp}(\varepsilon+4)<200$. By the step~\ref{GenericBBL_unlabelR}, step~\ref{GenericBBL_UnlabeledDatabases} and step~\ref{GenericBBL_LabelBoost}, $|\hat{D}_j|=\frac{\varepsilon|D_j|}{m}=\frac{25600|S_j|}{m}\geq 128|S_j|=128\lambda_jT_j$. Then it is sufficient to verify $128\lambda_jT_j\geq \lambda_{j+1}T_{j+1}$

We can verify that 
$$
4\lambda_j=4\cdot\frac{8\VC(C)\log(\frac{13}{\alpha_i})+4\log(\frac{2}{\beta_i})}{\alpha_i}\\
=4\cdot\frac{8\VC(C)(\log(\frac{13}{\alpha_{j+1}})-1)+4(\log(\frac{2}{\beta_{j+1}})-1)}{2\alpha_{j+1}}\geq\lambda_{j+1}
$$
and
$$
32T_j=\frac{32\tau\cdot\lambda_i\cdot\log(\frac{1}{\delta})\cdot\log^2(\frac{\lambda_i}{\eps\alpha_i\beta_i\delta})}{\alpha_i\eps}
\geq\frac{32\tau\cdot\lambda_i\cdot\log(\frac{1}{\delta})\cdot\log^2(\frac{\lambda_{i+1}}{16\eps\alpha_{i+1}\beta_{i+1}\delta})}{8\alpha_{i+1}\eps}\geq\lambda_{j+1}T_{j+1}.
$$
The last inequalitu holds because $\lambda_j\geq4$ and $\alpha_j,\beta_j\leq1/2$.
\end{proof}

To apply the privacy and accuracy of $LabelBoost$ and $BetweenThresholds$, the sizes of the databases need to satisfy the inequalities in lemma~\ref{lem:LabelBoostUtility},~\ref{lem:bt-privacy} and ~\ref{lem:bt-accuracy}. We verify that in each phase, the sizes of the databases always satisfy the requirement.
\begin{claim}\label{claim:verify-betweenthreshold-accuracy}
Let $\alpha,\beta,\delta<1/16$, $\varepsilon\leq 1$, and $\VC(C)\geq1$. Then for any $i\geq 1$, we have $$T_i\geq \frac{8}{\alpha_i \eps'}\left(\log(|D_i|+1) + \log(1/\beta_i)\right).$$
\end{claim}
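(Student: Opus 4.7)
The plan is to prove this by direct substitution followed by verification that the large constant $\tau$ absorbs all numerical overhead.

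First I would unfold the right-hand side. Substituting $\varepsilon'_i = 1/(3\sqrt{c_i\ln(2/\delta)})$ together with $c_i = 64\alpha_iR_i$ gives
\[
\frac{8}{\alpha_i\varepsilon'_i} \;=\; \frac{24\sqrt{c_i\ln(2/\delta)}}{\alpha_i} \;=\; 192\sqrt{\frac{R_i\ln(2/\delta)}{\alpha_i}},
\]
and then replacing $R_i$ by $25600\lambda_iT_i/\varepsilon$ (using $|S_i|=\lambda_iT_i$, as verified in Claim~\ref{Si-size}) yields $\frac{8}{\alpha_i\varepsilon'_i} = 30720\sqrt{\lambda_iT_i\ln(2/\delta)/(\alpha_i\varepsilon)}$. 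Noting that $|D_i|=R_i$, the claimed inequality takes the form $T_i \geq 30720\sqrt{\lambda_iT_i\ln(2/\delta)/(\alpha_i\varepsilon)} \cdot (\log(R_i+1)+\log(1/\beta_i))$. Dividing by $\sqrt{T_i}$ and squaring reduces the problem to the $T_i$-linear condition
\[
T_i \;\geq\; \frac{30720^2\,\lambda_i\ln(2/\delta)}{\alpha_i\varepsilon}\,\bigl(\log(R_i+1)+\log(1/\beta_i)\bigr)^2.
\]

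Next I would plug in the definition $T_i = \tau\lambda_i\log(1/\delta)L^2/(\alpha_i\varepsilon)$ with $L := \log(\lambda_i/(\varepsilon\alpha_i\beta_i\delta))$, and use $\ln(2/\delta)\leq 2\log(1/\delta)$ (valid since $\delta<1/2$) to cancel the common factor $\lambda_i\log(1/\delta)/(\alpha_i\varepsilon)$ on both sides. What remains is the purely numerical inequality
\[
\tau\,L^2 \;\geq\; 2\cdot 30720^2\,\bigl(\log(R_i+1)+\log(1/\beta_i)\bigr)^2,
\]
which in turn is implied by showing $\log(R_i+1)+\log(1/\beta_i) \leq C\cdot L$ for a modest constant $C$ with $\tau \geq 2\cdot 30720^2\,C^2$.

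The remaining step, which I expect to be the only delicate point, is the above logarithmic bound. Since $R_i = 25600\lambda_iT_i/\varepsilon$, one has $\log R_i \leq \log(25600) + \log\lambda_i + \log(1/\varepsilon) + \log T_i$, and $\log T_i$ itself unfolds into $\log\tau + \log\lambda_i + \log(1/\alpha_i) + \log(1/\varepsilon) + \log\log(1/\delta) + 2\log L$. Because $L$ equals the sum of the positive quantities $\log\lambda_i, \log(1/\varepsilon), \log(1/\alpha_i), \log(1/\beta_i), \log(1/\delta)$ (all nonnegative under the hypotheses $\alpha,\beta,\delta<1/16$, $\varepsilon\leq 1$, and $\VC(C)\geq 1$ which forces $\lambda_i\geq 1$), each of these summands is individually bounded by $L$; and $\log\log(1/\delta)$ together with $\log L$ are bounded by $\log L \leq L$. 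Collecting these estimates gives $\log(R_i+1)+\log(1/\beta_i) = O(L)$ with an explicit constant $C$, so that the final numerical inequality $\tau L^2 \geq 2\cdot 30720^2 C^2 L^2$ follows from the choice $\tau>1.1\cdot 10^{10}$. The mild subtlety is the self-referential appearance of $T_i$ inside $\log R_i$; this is harmless because $T_i$ enters only through a single logarithm and the $\log\tau$ contribution is itself a benign additive constant.
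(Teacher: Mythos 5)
Your proposal follows essentially the same route as the paper's proof: substitute $\varepsilon'_i=\frac{1}{3\sqrt{c_i\ln(2/\delta)}}$, $c_i=64\alpha_iR_i$ and $|D_i|=R_i=25600\lambda_iT_i/\varepsilon$, reduce (by dividing by $\sqrt{T_i}$ and squaring) to comparing $T_i$ against $\frac{\lambda_i\log(1/\delta)}{\alpha_i\varepsilon}$ times the square of a logarithmic factor, and then argue that this factor is $O\left(\log\frac{\lambda_i}{\varepsilon\alpha_i\beta_i\delta}\right)$ so that the large constant $\tau$ absorbs everything --- exactly the paper's argument, with your explicit constants in place of its big-$O$'s. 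The only soft spot is the final numeric step: your term-by-term bounding gives a constant $C$ of roughly $3$--$4$, so the requirement $\tau\geq 2\cdot 30720^2C^2$ is of the same order as (and, taken literally, slightly above) $1.1\cdot 10^{10}$, but this constant-chasing is no looser than the paper's own ``the inequality always holds,'' which elides it entirely.
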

\begin{proof}
By claim~\ref{Si-size} and step~\ref{GenericBBL_unlabelR}, $|D_i|=\frac{25600|S_i|}{\eps}=\frac{25600\lambda_iT_i}{\eps}$. 
Since
\begin{equation*}
\begin{split}
\frac{8}{\alpha_i \eps'}\left(\log(|D_i|+1) + \log(1/\beta_i)\right) &=\frac{24\sqrt{64\alpha_i|D_i|\ln(\frac{2}{\delta})}}{\sqrt{2}\alpha_i}\cdot\left(\log(|D_i|+1) + \log(1/\beta_i)\right) \\& =
O\left(\sqrt{\frac{\lambda_iT_i\log(\frac{1}{\delta})}{\alpha_i\eps}}\left(\log(\frac{\lambda_iT_i}{\eps\beta_i})\right)\right)\\ & =O\left(\sqrt{\frac{\lambda_iT_i\log(\frac{1}{\delta})}{\alpha_i\eps}}\cdot\log\left(\frac{\lambda_i\log(\frac{1}{\delta})}{\alpha_i\beta_i\eps}\right)\right),
\end{split}
\end{equation*}
and $T_i=\frac{\tau\cdot\lambda_i\cdot\log(\frac{1}{\delta})\cdot\log^2(\frac{\lambda_i}{\eps\alpha_i\beta_i\delta})}{\alpha_i\eps}$, where $\tau\geq1.1*10^{10}$, the inequality always holds.
\end{proof}

\begin{claim}\label{clm:labelboost-datasize}
When $\varepsilon\leq 1$, for any $i\geq 1$, we have $|\hat{D}_i|\leq\frac{\beta_i}{e}\VC(C)\mbox{exp}\left(\frac{\alpha_i|\hat{S}_i|}{2\VC(C)}\right)-|\hat{S}_i|$.
\end{claim}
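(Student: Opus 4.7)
The plan is to expand both sides using the definitions set up in Algorithm~\texttt{GenericBBL} and reduce the claim to the standard fact that the exponential dominates the linear function for sufficiently large arguments. Let $m=3+\exp(\varepsilon+4)$, so that $|\hat{S}_i|=\varepsilon|S_i|/m$ and $|\hat{D}_i|=25600|S_i|/m$ (by Step~\ref{GenericBBL_unlabelR} and Step~\ref{GenericBBL_LabelBoost}), where $|S_i|=\lambda_i T_i$. Substituting these expressions, the desired inequality becomes equivalent to
$$\frac{(25600+\varepsilon)\,\lambda_i T_i}{m}\;\leq\;\frac{\beta_i\VC(C)}{e}\exp\!\left(\frac{\alpha_i\varepsilon\lambda_i T_i}{2m\VC(C)}\right).$$
Introducing the shorthand $u:=\alpha_i\varepsilon\lambda_i T_i/(2m\VC(C))$ and noting that $\lambda_i T_i/m = 2\VC(C)u/(\alpha_i\varepsilon)$, this rearranges to the one-variable inequality $K\cdot u\leq e^u$, where $K:=\tfrac{2e(25600+\varepsilon)}{\alpha_i\varepsilon\beta_i}$.

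The second step is to verify that $u$ is large enough for $e^u\geq Ku$ to hold; the standard sufficient condition is $u\geq 2\ln(2K)$. From the definition of $\lambda_i$ we have $\lambda_i\geq 8\VC(C)/\alpha_i$, and hence $u\geq 4\varepsilon T_i/m$. Plugging in the definition
$T_i=\tfrac{\tau\lambda_i\log(1/\delta)\log^2(\lambda_i/(\varepsilon\alpha_i\beta_i\delta))}{\alpha_i\varepsilon}$
and using $\lambda_i\geq 1$ yields
$$u\;\geq\;\frac{4\tau\log(1/\delta)\log^2\!\bigl(1/(\varepsilon\alpha_i\beta_i\delta)\bigr)}{m\,\alpha_i}.$$
Since $\varepsilon\leq 1$ gives $m\leq 3+e^5<200$, while $\ln(2K)=O(\log(1/(\alpha_i\varepsilon\beta_i)))$, the huge constant $\tau>1.1\cdot 10^{10}$ together with the $\log^2$ factor in the numerator comfortably dominates $\ln(2K)$; this establishes $u\geq 2\ln(2K)$ and hence the required bound.

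The main obstacle is purely bookkeeping: one must check that the logarithmic factors hidden inside $u$ (from the $\log^2$ term in $T_i$) absorb all the $\log(1/\alpha_i)$, $\log(1/\beta_i)$, $\log(1/\varepsilon)$ contributions coming from $\ln K$, uniformly in $i$. This is why the parameter $\tau$ was chosen to be a large absolute constant --- it provides the slack needed to swallow the constants $25600$, $e$, and the factor $m$, across all rounds $i\geq 1$ and for all admissible $\alpha,\beta,\delta<1/16$ and $\varepsilon\leq 1$.
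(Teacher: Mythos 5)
Your proof is correct and follows essentially the same route as the paper's: expand $|\hat{S}_i|$ and $|\hat{D}_i|$ via the subsampling rates and $|S_i|=\lambda_iT_i$, and observe that the exponential term, whose exponent is $\Theta\left(\frac{\alpha_i\varepsilon\lambda_iT_i}{m\VC(C)}\right)$ and hence enormous thanks to $\tau$ and the $\log^2$ factor in $T_i$, dominates the only polynomially large left-hand side. Your reduction to the one-variable inequality $Ku\le e^u$ with the explicit check $u\ge 2\ln(2K)$ is in fact more quantitative than the paper's big-$O$/big-$\Omega$ sketch.
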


\begin{proof}
By claim~\ref{Si-size}, step~\ref{GenericBBL_unlabelR} and step~\ref{GenericBBL_LabelBoost}, $$|\hat{D}_i|=\frac{\eps|D_i|}{m}=O\left(\lambda_iT_i\right)=O\left(\VC(C)\log^2(\VC(C))\cdot \mbox{poly}\left(\frac{1}{\alpha_i},\log(\frac{1}{\beta_i}),\frac{1}{\eps},\log(\frac{1}{\delta})\right)\right)$$ 
and 
\begin{equation} 
\begin{split}
|\hat{S}_i|& =\frac{\eps|S_i|}{m}\\ &=O\left(\eps\lambda_iT_i\right)=O\left(\lambda_iT_i\right)\\ &=O\left(\VC(C)\log^2(\VC(C))\cdot \mbox{poly}\left(\frac{1}{\alpha_i},\log(\frac{1}{\beta_i}),\frac{1}{\eps},\log(\frac{1}{\delta})\right)\right).
\end{split}
\end{equation}
Note that $$\frac{\beta_i}{e}\VC(C)\mbox{exp}\left(\frac{\alpha_i|\hat{S}_i|}{2\VC(C)}\right)=\Omega\left(\VC^2(C)\cdot \mbox{exp}\left(\mbox{poly}\left(\frac{1}{\alpha_i},\log(\frac{1}{\beta_i}),\frac{1}{\eps},\log(\frac{1}{\delta})\right)\right)\right),$$ for $T_i=\frac{\tau\cdot\lambda_i\cdot\log(\frac{1}{\delta})\cdot\log^2(\frac{\lambda_i}{\eps\alpha_i\beta_i\delta})}{\alpha_i\eps}$, the inequality holds when $\tau\geq 1$.
\end{proof}

\begin{claim}
For every $i\geq 1$, we have \[\uT - \lT \ge \frac{12}{\eps'_i T_i}\left( \log (10/\eps'_i) + \log(1/\delta'_i) + 1\right).\]
\end{claim}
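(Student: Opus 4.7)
The plan is a direct verification that unpacks the definitions set up in Step~\ref{GenericBBL_unlabelR} of \texttt{GenericBBL} and leverages the slackness provided by the very large constant $\tau > 1.1\cdot 10^{10}$ baked into $T_i$. Recall that $\uT-\lT = t_u - t_\ell = 2\alpha_i$, so the inequality to establish is
$$
2\alpha_i \;\geq\; \frac{12}{\eps'_i T_i}\Bigl(\log(10/\eps'_i) + \log(1/\delta'_i) + 1\Bigr).
$$

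First I would substitute $\eps'_i = \frac{1}{3\sqrt{c_i\ln(2/\delta)}}$, $\delta'_i = \frac{\delta}{2c_i}$, and $c_i = 64\alpha_i R_i$ with $R_i = \frac{25600|S_i|}{\eps} = \frac{25600\lambda_i T_i}{\eps}$. Straightforward algebra yields
$$
\eps'_i T_i \;=\; \frac{T_i}{3\sqrt{64\alpha_i R_i \ln(2/\delta)}} \;=\; \frac{\sqrt{T_i\,\eps}}{3840\,\sqrt{\alpha_i\lambda_i\ln(2/\delta)}},
$$
so the required inequality is equivalent to
$$
T_i \;\geq\; \frac{(12\cdot 3840)^2}{4\alpha_i^2} \cdot \frac{\alpha_i\lambda_i\ln(2/\delta)}{\eps}\cdot \Bigl(\log(10/\eps'_i)+\log(1/\delta'_i)+1\Bigr)^2
\;=\; \frac{K\,\lambda_i\,\log(1/\delta)}{\alpha_i\,\eps}\cdot \Bigl(\log(10/\eps'_i)+\log(1/\delta'_i)+1\Bigr)^2,
$$
for an explicit absolute constant $K$.

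Next I would bound the logarithmic factor $\log(10/\eps'_i) + \log(1/\delta'_i)+1$ by $O\bigl(\log(\lambda_i/(\eps\alpha_i\beta_i\delta))\bigr)$. Since $\eps'_i = \Theta(1/\sqrt{c_i\ln(2/\delta)})$ with $c_i = O(\alpha_i\lambda_i T_i/\eps)$ and $\delta'_i = \Theta(\delta/c_i)$, both $\log(10/\eps'_i)$ and $\log(1/\delta'_i)$ are $O(\log(\lambda_i T_i / (\alpha_i\eps\delta)))$. Invoking the definition $T_i = \frac{\tau\lambda_i\log(1/\delta)\log^2(\lambda_i/(\eps\alpha_i\beta_i\delta))}{\alpha_i\eps}$, a standard $\log\log$ absorption shows
$\log(\lambda_i T_i/(\alpha_i\eps\delta)) = O\bigl(\log(\lambda_i/(\eps\alpha_i\beta_i\delta))\bigr)$, where the hidden constant is independent of $\tau$.

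Combining these bounds, the required inequality reduces to
$$
T_i \;\geq\; K'\cdot \frac{\lambda_i\log(1/\delta)\,\log^2\!\bigl(\lambda_i/(\eps\alpha_i\beta_i\delta)\bigr)}{\alpha_i\,\eps}
$$
for an explicit absolute constant $K'$. Since $T_i$ equals exactly this quantity with $\tau$ in place of $K'$, the inequality holds provided $\tau \geq K'$, which is ensured by the choice $\tau > 1.1\cdot 10^{10}$. The main (and only nontrivial) obstacle is the bookkeeping of the logarithmic and constant factors when estimating $\log(10/\eps'_i) + \log(1/\delta'_i)$: one must verify that the nested dependence on $T_i$ inside these logarithms does not blow up the right-hand side, which is precisely why $T_i$ was defined with a \emph{squared} logarithm and a large leading constant $\tau$.
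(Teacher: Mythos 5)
Your verification is correct and takes essentially the same route as the paper: substitute $\eps'_i,\delta'_i,c_i,R_i$ and the definition of $T_i$, use $\uT-\lT=2\alpha_i$, and let the factor $\tau\cdot\lambda_i\log(1/\delta)\log^2\bigl(\lambda_i/(\eps\alpha_i\beta_i\delta)\bigr)/(\alpha_i\eps)$ in $T_i$ absorb both the constants and the logarithmic terms $\log(10/\eps'_i)+\log(1/\delta'_i)+1$. The paper carries out the same computation by showing the ratio of the right-hand side to $2\alpha_i$ is $O(1)$ once $\tau>10^{10}$; your rearrangement into an explicit lower bound on $T_i$ is an equivalent bookkeeping of the same estimate.
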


\begin{proof}
By step~\ref{GenericBBL_unlabelR}, $t_u-t_{\ell}=2\alpha_i$. Then we have 
$$
\begin{array}{rl}
     \frac{6}{\alpha_i\eps'_i T_i}\left( \log (10/\eps'_i) + \log(1/\delta'_i) + 1\right)
     & =\frac{6\sqrt{64\alpha_iR_i\ln(\frac{2}{\delta})}}{\alpha_iT_i}\left( \log (10/\eps'_i) + \log(1/\delta'_i) + 1\right) \\
     & =6\sqrt{\frac{1638400\ln(\frac{2}{\delta})\lambda_i}{\alpha_iT_i}}\left( \log (10/\eps'_i) + \log(1/\delta'_i) + 1\right)\\
     & = 6\sqrt{\frac{1638400\ln(\frac{2}{\delta})}{\tau\log(\frac{1}{\delta})\log^2(\frac{\lambda_i}{\eps\alpha_i\beta_i\delta})}}\left( \log (10/\eps'_i) + \log(1/\delta'_i) + 1\right)\\
     & = O(1),
\end{array}
$$
the inequality holds when $\tau>10^{10}$.
\end{proof}

\section{Accuracy of Algorithm \texttt{GenericBBL} -- proof of Theorem~\ref{thm:accuracy}}
\label{appendix:accuracy}

We refer to the execution of steps~\ref{step:beginloop}-\ref{step:endloop} of algorithm \texttt{GenericBBL} as a {\em phase} of the algorithm, indexed by $i=1,2,3,\dots$. 

We give some technical facts in Appendix~\ref{appendix:tech-facts}. In Claim~\ref{clm:one-phase-accuracy}, we show that in each phase, samples are labeled with high accuracy. In Claim~\ref{clm:fail-all-phase}, we prove that algorithm \texttt{GenericBBL} fails with low probability. In Claim~\ref{clm: hypothesis-accuracy}, we prove that algorithm \texttt{GenericBBL} predict the labels with high accuracy.

\begin{claim}\label{clm:one-phase-accuracy}
When Algorithm \texttt{GenericBBL} does not fail on phases $1$ to $i$, then for phase $i+1$ we have
$$
\Pr\left[\exists g_{i+1}\in C \text{ s.t.\ } \error_{S_{i+1}}(g_{i+1})=0 \text{ and } \error_{\DDD}(g_{i+1},c)\leq\sum_{j=1}^{i+1}\alpha_j \right] \geq 1-{2\sum_{j=0}^{i+1}\beta_j}.
$$
\end{claim}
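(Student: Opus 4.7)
The plan is to prove Claim~\ref{clm:one-phase-accuracy} by induction on the phase index $i$. For the base case $i=0$ the statement is immediate: $S_1\subseteq S$ is a random subset of the initial labeled training set, which is labeled by $c\in C$, so $g_1=c$ satisfies $\error_{S_1}(g_1)=0$ and $\error_\DDD(g_1,c)=0\leq\alpha_1$. For the inductive step, I would assume that, conditioning on \texttt{GenericBBL} not failing through phase $i$, with probability at least $1-2\sum_{j=0}^{i}\beta_j$ there exists $g_i\in C$ with $\error_{S_i}(g_i)=0$ and $\error_\DDD(g_i,c)\leq\sum_{j=1}^{i}\alpha_j$, and condition on this event.

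The candidate $g_{i+1}$ is the hypothesis $h\in C$ that the exponential mechanism inside \texttt{LabelBoost} selects in step~\ref{GenericBBL_LabelBoost} of phase $i$. Since $S_{i+1}$ is a sub-sample of $\hat{D}_i'$, and $\hat{D}_i'$ is the result of relabeling $\hat{S}_i{\circ}\hat{D}_i$ by $h$, the empirical-error condition $\error_{S_{i+1}}(g_{i+1})=0$ is automatic. It remains to bound $\error_\DDD(h,c)$, for which the triangle inequality and the inductive hypothesis reduce the task to showing $\error_\DDD(h,g_i)\leq\alpha_{i+1}$. Since $\hat{S}_i\subseteq S_i$ is labeled by $g_i$, Lemma~\ref{lem:LabelBoostUtility} — whose size hypothesis on $|\hat{D}_i|$ versus $|\hat{S}_i|$ is verified by Claim~\ref{clm:labelboost-datasize} — gives $\error_{\hat{S}_i}(h)\leq\alpha_{i+1}/2$ except with probability $\beta_{i+1}$. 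Because $h\in C$ and, by the distributional assumption on the queries, $\hat{S}_i$ consists of i.i.d.\ samples from $\DDD$ labeled by $g_i$, a standard VC uniform-convergence bound (Theorem~\ref{thm:VCbound}) upgrades this empirical guarantee to $\error_\DDD(h,g_i)\leq\alpha_{i+1}$ except with a further $\beta_{i+1}$ failure probability. Putting everything together,
\[
\error_\DDD(h,c)\leq\error_\DDD(h,g_i)+\error_\DDD(g_i,c)\leq\alpha_{i+1}+\sum_{j=1}^{i}\alpha_j=\sum_{j=1}^{i+1}\alpha_j,
\]
and a union bound over the inductive failure event and the two new failure events (LabelBoost utility and VC generalization) yields overall failure probability at most $2\sum_{j=0}^{i+1}\beta_j$.

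The step I expect to be the main obstacle is justifying that, conditional on ``\texttt{GenericBBL} has not failed in phases $1,\ldots,i$,'' the sub-sample $\hat{S}_i$ really is distributed as an i.i.d.\ draw from $\DDD$ with labels supplied by $g_i$. The points in $S_i$ were produced by an adaptive query stream funneled through \texttt{BetweenThresholds} and then relabeled by a previous invocation of \texttt{LabelBoost}, so one has to argue carefully that conditioning on non-halting does not introduce a dependence that would break the i.i.d.\ structure required for the VC generalization step. Here I would lean on the everlasting-prediction assumption that query points are drawn i.i.d.\ from $\DDD$ (as in Definition~\ref{def:everlastingPrediction}), together with the uniform random sub-sampling in steps~\ref{GenericBBL_SubsampleS1} and~\ref{GenericBBL_LabelBoost}, to conclude that the marginal distribution of $\hat{S}_i$ is indeed $\DDD^{|\hat{S}_i|}$; the labels of $\hat{S}_i$ inside this conditional world are exactly $g_i$'s labels, because by the inductive hypothesis $g_i$ agrees with the relabeling hypothesis chosen in phase $i-1$ on every point of $S_i$.
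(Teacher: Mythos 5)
Your proposal is correct and follows essentially the same route as the paper's proof: induction on the phase, with $g_{i+1}$ taken to be the hypothesis selected by the exponential mechanism inside \texttt{LabelBoost}, its empirical agreement with $g_i$ controlled via Lemma~\ref{lem:LabelBoostUtility} together with Claim~\ref{clm:labelboost-datasize}, the upgrade to distributional error via the VC bound on the i.i.d.\ points, and the triangle inequality plus a union bound over the accumulated $\beta_j$'s. Your extra care about working with $\hat{S}_i$ (rather than $S_i$) and about why the conditioning on non-failure does not spoil the i.i.d.\ structure is a slightly more explicit rendering of what the paper asserts implicitly, not a different argument.
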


\begin{proof}
The proof is by induction on $i$. The base case for $i=1$ is trivial, with $g_1=c$. Assume the claim holds for all $j\leq i$. By the properties of \texttt{LabelBoost} (Lemma~\ref{lem:LabelBoostUtility}) and Claim~\ref{clm:labelboost-datasize}, with probability at least $1-\beta_{i+1}$ we have that $S_{i+1}$ is labeled by a hypothesis $g_{i+1}\in C$ s.t.\ $\error_{S_i}(g_i,g_{i+1})\leq\alpha_{i+1}$. 
Observe that the points in $S_i$ (without their labels) are chosen i.i.d.\ from $\DDD$, and hence, By Theorem~\ref{thm:VCbound} (VC bounds) and $|S_i|\geq 128\lambda_i\geq \lambda_{i+1}$, with probability at least $1-\beta_{i+1}$ we have that $\error_{\DDD}(g_i,g_{i+1})\leq\alpha_{i+1}$. 
Hence, with probability $1-2\beta_{i+1}$, we have $\error_{\DDD}(g_i,g_{i+1})\leq\alpha_{i+1}$. Finally, by the triangle inequality, $\error_{\DDD}(g_{i+1},c)\leq\sum_{j=1}^{i+1}\alpha_j$, except with probability $2\sum_{j=1}^{i+1}\beta_j$
\end{proof}

Define the following good event.

\begin{center}
\noindent\fboxother{
\parbox{.9\columnwidth}{
{\bf Event ${\boldsymbol{E_1}}$: } 
Algorithm \texttt{GenericBBL} never fails on the execution of 
\texttt{BetweenThresholds} in step~\ref{step:betweenThresholdFail}.
}}
\end{center}

\begin{claim}\label{clm:fail-all-phase}
Event $E_1$ occurs with probability at least $1-\beta$.
\end{claim}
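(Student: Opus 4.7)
The plan is to apply a union bound over all phases of algorithm \texttt{GenericBBL}, using Claim~\ref{betweenthresholds-fail-ctimes} as the per-phase building block. Recall that in phase $i$, algorithm \texttt{BetweenThresholds} is instantiated with a budget of $c_i = 64\alpha_i R_i$ copies of $\top$ before it halts (step~\ref{GenericBBL_unlabelR}), so the failure event in step~\ref{step:betweenThresholdFail} is precisely the event that \texttt{BetweenThresholds} outputs $\top$ more than $64\alpha_i R_i$ times during that phase.

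By Claim~\ref{betweenthresholds-fail-ctimes}, this per-phase failure event occurs with probability at most $\beta_i$. Recalling the recurrence $\beta_1 = \beta/2$ and $\beta_{i+1} = \beta_i/2$ from step~\ref{step:endloop}, we have $\beta_i = \beta/2^i$ for every $i \geq 1$. A union bound over all phases then gives
$$\Pr[\neg E_1] \;\leq\; \sum_{i=1}^\infty \beta_i \;=\; \sum_{i=1}^\infty \frac{\beta}{2^i} \;=\; \beta,$$
so $\Pr[E_1] \geq 1 - \beta$, as claimed.

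There is no real obstacle to this proof: the geometric decay of the $\beta_i$ was precisely chosen in step~\ref{step:endloop} so that the tail sum converges to $\beta$ and thus a straightforward countable union bound suffices. The only subtlety is making sure the per-phase failure event being bounded in Claim~\ref{betweenthresholds-fail-ctimes} is indeed identical to the condition causing failure in step~\ref{step:betweenThresholdFail}, which is immediate from the choice $c_i = 64\alpha_i R_i$ in step~\ref{GenericBBL_unlabelR}.
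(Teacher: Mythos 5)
Your proof is correct and follows the same route as the paper: a union bound over phases combined with Claim~\ref{betweenthresholds-fail-ctimes}, using the geometric decay $\beta_i = \beta/2^i$ so the tail sums to $\beta$. You simply spell out the details that the paper's one-line proof leaves implicit.
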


\begin{proof}
Using to union bound and Claim~\ref{betweenthresholds-fail-ctimes}, 
$$
\Pr[\mbox{Event $E_1$ occurs}]\geq1-\beta.
$$
\end{proof}

Combining claims~\ref{clm:one-phase-accuracy} and~\ref{clm:fail-all-phase}, we get:
\begin{claim}\label{claim:LabelBoostStreach}
Let $\DDD$ be an underlying distribution and let $c\in C$ be a target concept. Then
$$
\Pr[\forall i\; \exists g_i\in C \text{ s.t.\ } \error_{S_i}(g_i)=0 \text{ and } \error_{\DDD}(g_i,c)\leq\alpha ] \geq 1-3\beta.
$$
\end{claim}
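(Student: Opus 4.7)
The plan is to combine Claim~\ref{clm:fail-all-phase} (the ``no premature halt'' event $E_1$) with Claim~\ref{clm:one-phase-accuracy} via a union bound, exploiting the geometric decay of the parameter schedule to sum the per-phase error budgets into $\alpha$ and $3\beta$ respectively.

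First, I would condition on the event $E_1$ that \texttt{GenericBBL} never fails in Step~\ref{step:betweenThresholdFail} of any phase; by Claim~\ref{clm:fail-all-phase} this occurs with probability at least $1-\beta$. On $E_1$ every phase completes and generates an $S_{i+1}$ to feed into the next phase, so the inductive guarantee of Claim~\ref{clm:one-phase-accuracy} actually has the chain it needs to grow.

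Next, Claim~\ref{clm:one-phase-accuracy} already packages the per-phase guarantee as a cumulative statement: for each $i$, with probability at least $1-2\sum_{j=1}^{i}\beta_j$ there exists $g_i\in C$ consistent with $S_i$ and satisfying $\error_{\DDD}(g_i,c)\leq\sum_{j=1}^{i}\alpha_j$. Since the schedule is $\alpha_j=\alpha/2^{j}$ and $\beta_j=\beta/2^{j}$, the tails of the geometric series give $\sum_{j=1}^{\infty}\alpha_j=\alpha$ and $2\sum_{j=1}^{\infty}\beta_j=2\beta$. Passing to the limit $i\to\infty$ (equivalently, taking a countable union bound over the events ``the inductive extension fails at phase $i+1$''), I conclude that, conditioned on $E_1$, with probability at least $1-2\beta$ the entire chain $g_1,g_2,g_3,\ldots$ exists with $\error_{S_i}(g_i)=0$ and $\error_{\DDD}(g_i,c)\leq\alpha$ simultaneously for all $i$.

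Finally, a union bound over the failure of $E_1$ (probability $\leq\beta$) and the conditional failure just bounded (probability $\leq 2\beta$) yields a total failure probability of at most $3\beta$, as claimed. The only bookkeeping subtlety worth flagging is the ``for all $i$'' quantifier: I expect this to be essentially free because Claim~\ref{clm:one-phase-accuracy} is already stated as a cumulative bound through phase $i+1$, so the geometric convergence of $\sum_j\beta_j$ controls the full countable intersection without any extra work.
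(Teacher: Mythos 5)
Your proposal is correct and matches the paper's argument: the paper simply states that the claim follows by ``combining'' Claim~\ref{clm:one-phase-accuracy} and Claim~\ref{clm:fail-all-phase}, and your write-up supplies exactly that combination -- condition on $E_1$ (cost $\beta$), sum the geometric schedules $\sum_j \alpha_j = \alpha$ and $2\sum_j \beta_j = 2\beta$ over all phases, and union bound to get the total failure probability $3\beta$.
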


\paragraph{Notations.} Consider the $i$th phase of Algorithm \texttt{GenericBBL}, and focus on the $j$-th iteration of Step~\ref{step:mainloop}. Fix all of the randomness in \texttt{BetweenThresholds}. Now observe that the output on step~\ref{GenericBBL_label_point} is a deterministic function of the input $x_{i,j}$. This defines a hypothesis which we denote as $h_{i,j}$.

\begin{figure}[h]
    \centering
    \includegraphics[width=0.9\textwidth]{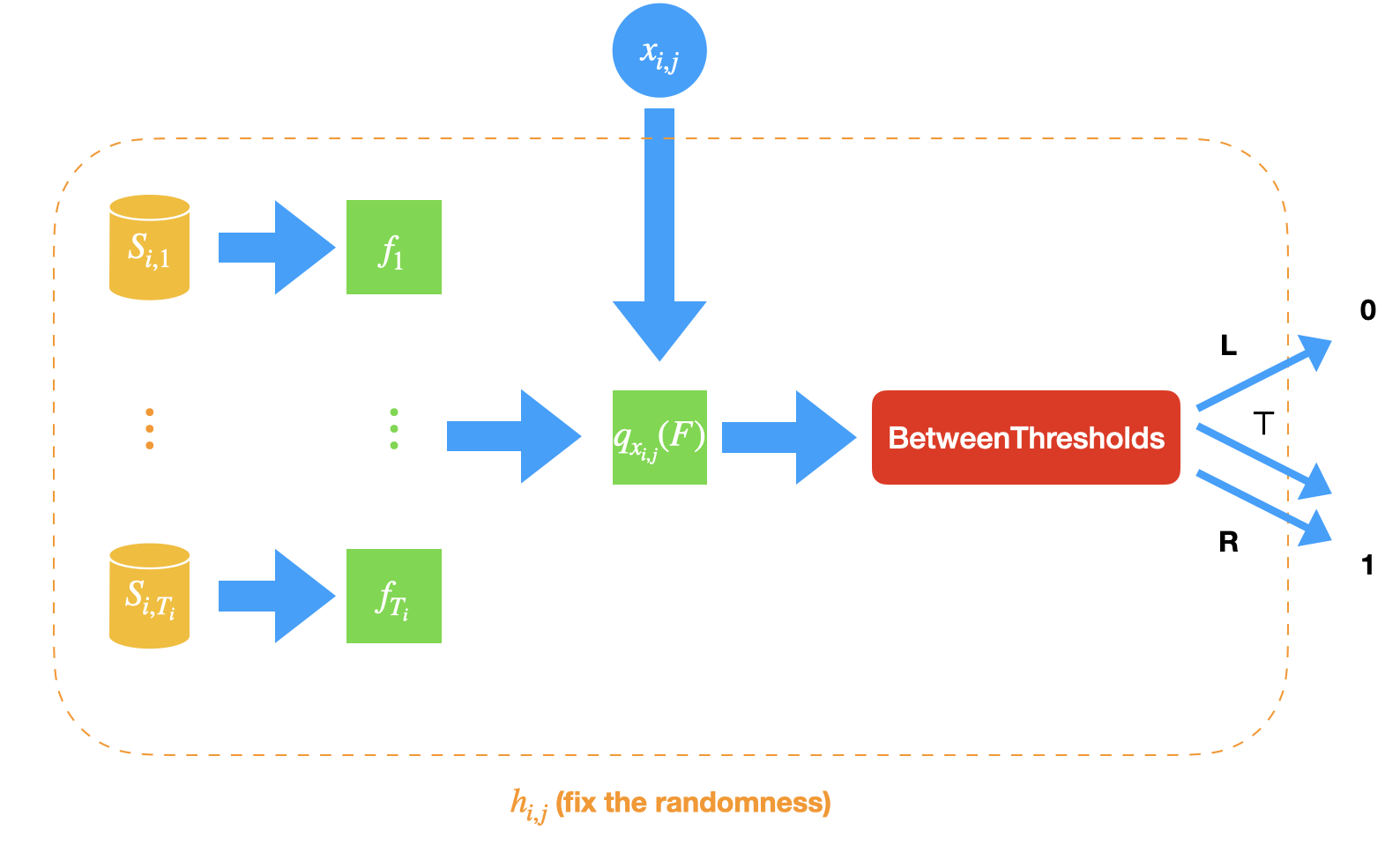}
    \caption{Hypothesis $h_{i,j}$}

\end{figure}

\begin{claim}\label{clm: hypothesis-accuracy}
For $\beta <1/16$, with probability at least $1-4\beta$, all of the hypotheses defined above are $6\alpha$-good w.r.t.\ $\DDD$ and $c$.
\end{claim}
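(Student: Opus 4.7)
Plan: I would condition on a global good event $E$ that combines the per-phase guarantees from Claim~\ref{claim:LabelBoostStreach}, Claim~\ref{clm:prob-fi-alpha-good}, and Lemma~\ref{lem:bt-accuracy}, and then argue deterministically under $E$ that every $h_{i,j}$ is $6\alpha$-good. The proof will chain a triangle inequality through an intermediate target concept $g_i\in C$ associated to each phase.

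First, I would define $E$ as the intersection, over all phases $i\geq 1$, of: (a) the consistency event from Claim~\ref{claim:LabelBoostStreach}, that some $g_i\in C$ satisfies $\error_{S_i}(g_i)=0$ and $\error_\DDD(g_i,c)\leq\alpha$; (b) the PAC event from Claim~\ref{clm:prob-fi-alpha-good}, that at least $15T_i/16$ of the $f_t\in F_i$ are $\alpha_i$-good with respect to $g_i$; and (c) the \texttt{BetweenThresholds} accuracy event from Lemma~\ref{lem:bt-accuracy} applied in phase $i$ with parameters $\alpha_i,\beta_i$ (the required sample-size condition is verified by Claim~\ref{claim:verify-betweenthreshold-accuracy}). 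The failure probabilities $\alpha_i=\alpha/2^i$ and $\beta_i=\beta/2^i$ in (b) and (c) decay geometrically, so summing across all phases adds at most $\alpha+\beta\leq 2\beta$ using $\alpha\leq\beta\leq 1/16$ (or, alternatively, the $\alpha_i$-failures of (b) are already absorbed by the Chernoff step of Claim~\ref{betweenthresholds-fail-ctimes} that feeds Claim~\ref{claim:LabelBoostStreach}). Combined with the $3\beta$ contributed by (a), a union bound yields $\Pr[E]\geq 1-4\beta$.

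Next, I would fix any $(i,j)$ and work inside $E$. Applying Claim~\ref{clm:prob-qx-good}, which uses (b), gives $\Pr_{x\sim\DDD}[|q_x(F_i)-1/2|\leq 3/8]\leq 15\alpha_i$. For $x$ with $q_x(F_i)>7/8$, the slack-$\alpha_i$ guarantee in (c) combined with the thresholds $t_u=1/2+\alpha_i,\ t_\ell=1/2-\alpha_i$ rules out a $\leftsymb$-answer (it would force $q_x\leq t_\ell+\alpha_i=1/2<7/8$), so \texttt{BetweenThresholds} outputs $\rightsymb$ or $\haltsymb$ and by step~\ref{GenericBBL_label_point} the label produced is $1=g_i(x)$; the case $q_x<1/8$ is symmetric. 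Hence $\error_\DDD(h_{i,j},g_i)\leq 15\alpha_i$, and a triangle inequality yields
\[
\error_\DDD(h_{i,j},c)\;\leq\;\error_\DDD(h_{i,j},g_i)+\error_\DDD(g_i,c)\;\leq\;15\alpha_i+\alpha\;\leq\;6\alpha,
\]
where the last step uses $\alpha_i\leq\alpha/2$ together with a small tightening (the base case $g_1=c$ of Claim~\ref{clm:one-phase-accuracy} zeroes out the $\alpha$ term for $i=1$, while for $i\geq 2$ the factor $15\alpha_i\leq 15\alpha/4$ already leaves room).

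The main anticipated obstacle is the bookkeeping inside the $4\beta$ budget: the per-phase $\alpha_i$-failures of Claim~\ref{clm:prob-fi-alpha-good} are not, a priori, controlled by the outer $3\beta$ of Claim~\ref{claim:LabelBoostStreach}, so one must either re-examine the Chernoff step in Claim~\ref{betweenthresholds-fail-ctimes} to see that these $\alpha_i$-events have already been paid for there, or simply observe that $\alpha\leq\beta$ in the regime $\alpha,\beta<1/16$. Everything else -- the Markov passage from Claim~\ref{clm:prob-fi-alpha-good} to Claim~\ref{clm:prob-qx-good}, the contrapositive use of Lemma~\ref{lem:bt-accuracy}, and the final triangle inequality -- is routine once the good event is set up.
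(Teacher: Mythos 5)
Your overall architecture matches the paper's: condition on the chained consistency event (Claim~\ref{claim:LabelBoostStreach}), argue that most base classifiers $f_t$ are accurate, use the accuracy of \texttt{BetweenThresholds} to tie the defined hypotheses $h_{i,j}$ to the (approximate) majority vote, and union bound over phases. The difference is in the middle step, and that is where your argument breaks quantitatively. You bound $\error_\DDD(h_{i,j},g_i)$ by the mass of points whose vote $q_x$ falls in the wide zone $[1/8,7/8]$, i.e.\ by $15\alpha_i$ from Claim~\ref{clm:prob-qx-good}, and then add $\error_\DDD(g_i,c)\leq\alpha$. For phase $i=1$ this gives $15\alpha_1=7.5\alpha$ (the observation $g_1=c$ only removes the additive $\alpha$), which already exceeds the claimed $6\alpha$; so your final inequality $15\alpha_i+\alpha\leq 6\alpha$ fails at $i=1$. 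The loss comes from reusing Claim~\ref{clm:prob-qx-good}, which was engineered for the halting analysis (Claim~\ref{betweenthresholds-fail-ctimes}): it declares a point bad as soon as $T_i/16$ of the good classifiers err on it, whereas the decision of \texttt{BetweenThresholds} has thresholds at $1/2\pm\alpha_i$, so flipping the output label requires roughly $7T_i/16$ of the good classifiers to err. The paper's proof exploits exactly this: it shows each good $f_t$ is $3\alpha$-accurate w.r.t.\ $c$ and counts that a point mislabeled by the noisy approximate majority must have a $7/16$-fraction of the good classifiers erring, giving error mass of about $6\alpha$ directly w.r.t.\ $c$, with no residual $15\alpha_i$ term. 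Your argument is repairable by replacing the $[1/8,7/8]$ zone with the true decision band $[1/2-2\alpha_i,1/2+2\alpha_i]$ (using the $\alpha_i$ slack of Lemma~\ref{lem:bt-accuracy} together with the bounded noise), but that is essentially reverting to the paper's counting.

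A second, smaller gap is the probability accounting: your good event charges the failures of Claim~\ref{clm:prob-fi-alpha-good}, which occur with probability $\alpha_i$ per phase, and you absorb $\sum_i\alpha_i\leq\alpha$ into the budget by assuming $\alpha\leq\beta$; the theorem only assumes $\alpha,\beta<1/16$, not $\alpha\leq\beta$. The paper sidesteps this by redoing the Chernoff step inside the proof with failure probability $\beta_i$ (its choice of $T_i$ is large enough for that), so the per-phase failures sum to $\beta$. Your alternative suggestion, that these $\alpha_i$-failures are already ``paid for'' in Claim~\ref{betweenthresholds-fail-ctimes}, does not work: that claim controls how often $\haltsymb$ occurs, not the accuracy of the hypotheses $h_{i,j}$.
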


\begin{proof}
In the phase $i$, by Claim~\ref{claim:LabelBoostStreach}, with probability at least $1-3\beta$ we have that $S_i$ is labeled by a hypothesis $g_i\in C$ satisfying $\error_{\DDD}(g_i,c)\leq\alpha$. We continue with the analysis assuming that this is the case. 

On step~\ref{step:beginloop} of the $i$th phase we divide $S_i$ into $T_i$ subsamples of size $\lambda_i$ each, identify a consistent hypothesis $f_t\in C$ for every subsample $S_{i,t}$, and denote $F_i=\left(f_1,\ldots,f_T\right)$. By Theorem~\ref{thm:VCbound} (VC bounds), every hypothesis in $F_i$ satisfies $\error_{\DDD}(f_t,g_i)\leq\alpha$ with probability $3/4$, in which case, by the triangle inequality we have that $\error_{\DDD}(f_t,c)\leq2\alpha$.

Set $T_i\geq \frac{512(1-4\beta_i)\ln(\frac{1}{\beta_i})}{(1-64\beta_i)^2}$, using Chernoff bound, it holds that for at least $15T_i/16$ of the hypotheses in $F_i$ have error $\error_{\DDD}(f_t,g_i)\leq2\alpha$ with probability at least $1-\beta_i$. These hypotheses have $\error_{\DDD}(f_t,c)\leq3\alpha$.

Let $m:X\rightarrow\{0,1\}$ defined as $m(x)=\mbox{maj}_{f_t\in F_i}(f_t(x))$. For $m$ to err on a point $x$ (w.r.t.\ the target concept $c$), it must be that at least $7/16$-fraction of the $3\alpha$-good hypotheses in $\hat{F}_i$ err on $x$. Consider the worst case in Figure~\ref{fig:hypothesis-accuracy} , we have $\error_{\DDD}(m,c)\leq6\alpha$

\begin{figure}[h]
    \centering
    \includegraphics[width=0.9\textwidth]{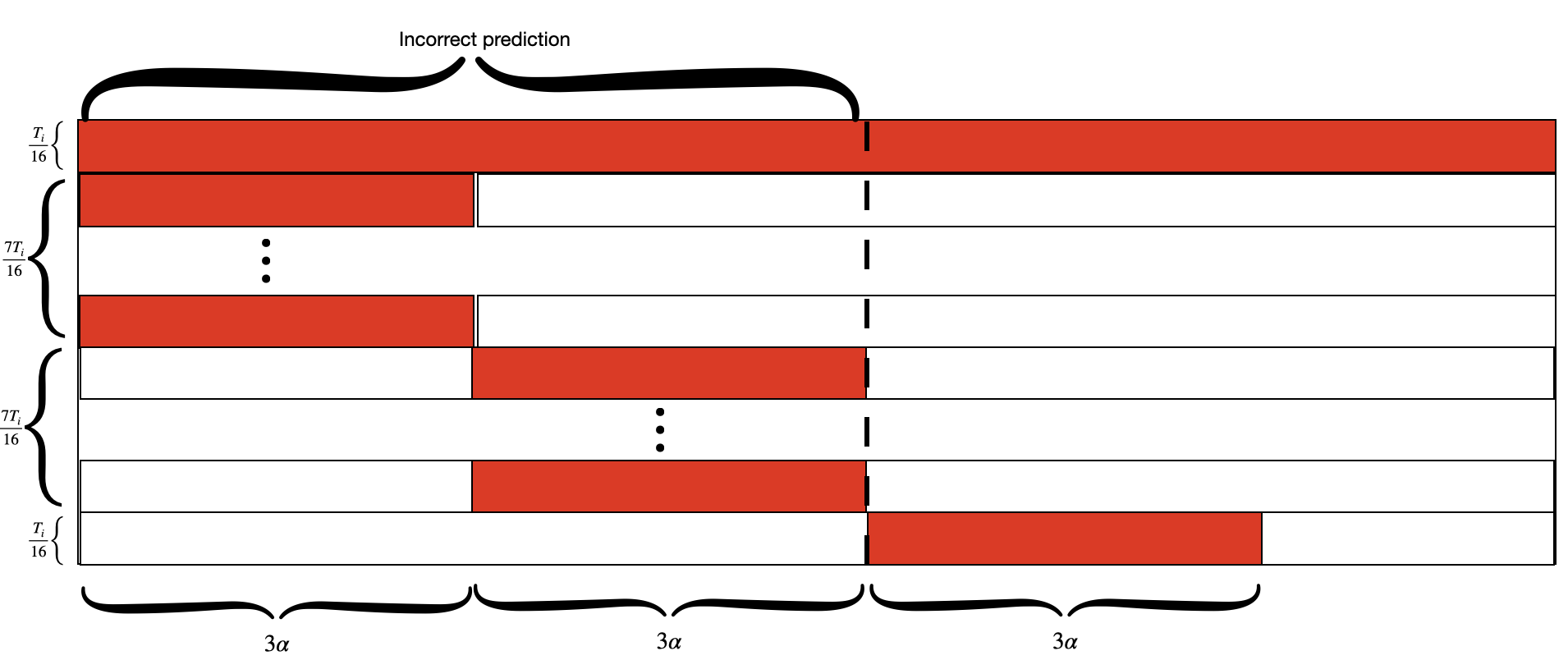}
    \caption{The horizontal represents the input point. The vertical represents the hypothesis. The red parts represent the incorrect prediction. We let $\frac{T_i}{16}$ hypothesis predict all labels incorrectly. To output an incorrect label, there must exist $\frac{7T_i}{16}$ hypothesis output the incorrect label. In the worst case, at most $6\alpha$ of points are incorrectly classified.}\label{fig:hypothesis-accuracy}

\end{figure}

By Lemma~\ref{lem:bt-accuracy} and Claim~\ref{claim:verify-betweenthreshold-accuracy}, with probability at least $1-\beta_i$, all of the hypotheses defined during the $i$th iteration satisfy this condition, and are hence $6\alpha$-good w.r.t.\ $c$ and $\DDD$. By the union bound, with probability $1-4\beta$, all the hypotheses are $6\alpha$-good.
\end{proof}

\subsection{Privacy analysis -- proof of Claim~\ref{clm:GenericBBLprivacy}}
\label{appendix:proofGenericBBLprivacy}

Fix $t\in\mathbb{N}$ and the adversary $\cal B$. We need to show that $\mbox{View}^0_{{\cal B}, t}$ and $\mbox{View}^1_{{\cal B}, t}$ (defined in Figure~\ref{fig:AdversarialExperiment}) are $(\varepsilon,\delta)-indistinguishable$. 
We will consider separately the case where the executions differ in the training phase (Claim~\ref{clm:privacy-S}) and the case where the difference occurs during the prediction phase (Claim~\ref{clm:privacy-D}).

\paragraph{Privacy of the initial training set $S$.}  Let $S^0,S^1 \in (X\times \{0,1\})^n$ be neighboring datasets of labeled examples and let $\mbox{View}^0_{{\cal B}, t}$ and $\mbox{View}^1_{{\cal B}, t}$ be as in Figure~\ref{fig:AdversarialExperiment} where $\left((x_1^0,y_1^0),\dots,(x_n^0,y_n^0)\right)=S^0$ and $\left((x_1^1,y_1^1),\dots,(x_n^1,y_n^1)\right)=S^1$.

\begin{claim}\label{clm:privacy-S}
For all adversaries $\cal B$, for all $t > 0$, and for any two neighbouring database $S^0$ and $S^1$ selected by $\cal B$,
$\mbox{View}^0_{{\cal B}, t}$ and $\mbox{View}^1_{{\cal B}, t}$ are $(\eps,\delta)$-indistinguishable.
\end{claim}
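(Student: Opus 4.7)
My plan is to prove privacy in two nested layers matching the two sub-sampling steps of \texttt{GenericBBL}. The outer layer is Step~\ref{GenericBBL_SubsampleS1}, which chooses $S_1$ uniformly at random from $S$ with $|S_1|=n\cdot\tfrac{\eps}{3+\exp(\eps+4)}$. By Claim~\ref{claim:sub-sampling}, once I establish that the remainder of \texttt{GenericBBL}---viewed as a mechanism $\MMM$ that takes $S_1$ as input and produces the transcript of Figure~\ref{fig:AdversarialExperiment} when interacting with the adversary $\BBB$---is $(\eps+4,\delta^*)$-differentially private with respect to a single-row change in $S_1$ (for a suitable $\delta^*\leq\tfrac{(3+\exp(\eps+4))\delta}{4\eps}$), the full mechanism on $S$ is $(\eps,\delta)$-DP.

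To handle the recursive structure of \texttt{GenericBBL} inside $\MMM$, I would use backward induction on phases. Let $\MMM_i$ denote the interactive mechanism that, on input a labeled database $S_i$, engages with $\BBB$ (whose randomness is fixed) and outputs all predictions from phase $i$ onwards. Since the experiment of Figure~\ref{fig:AdversarialExperiment} runs for only $t$ rounds, only finitely many phases are entered, so the deepest $\MMM_{\text{last}+1}$ is trivially $(0,0)$-DP and provides the base case. The inductive claim is that each $\MMM_i$ is $(\eps+1,\delta^*)$-DP with respect to a single-row change in $S_i$.

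For the inductive step I would identify two privacy-relevant outputs produced inside phase $i$. The first is the label stream from Step~\ref{GenericBBL_main_loop}, produced by \texttt{BetweenThresholds} on $F_i$; because $F_i$ is built from disjoint sub-samples of $S_i$, one row of $S_i$ changes exactly one row of $F_i$, and by the parameter choice in Step~\ref{GenericBBL_unlabelR} together with Observation~\ref{bt-obs} and Lemma~\ref{lem:bt-privacy}, this output is $(1,\delta)$-DP with respect to $S_i$, even against an adaptive query stream. The second is $\MMM_{i+1}(S_{i+1})$, obtained as a post-processing of $\texttt{LabelBoost}(\hat S_i\circ\hat D_i)$. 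By the inductive hypothesis, $\MMM_{i+1}$ is $(\eps+1,\delta^*)$-DP on $S_{i+1}$, so Lemma~\ref{lemma:LabelBoostPrivacy} upgrades the composition $\MMM_{i+1}\circ\texttt{LabelBoost}$ to $(\eps+4,4e\delta^*)$-DP with respect to a single-row change in $\hat S_i$; Claim~\ref{claim:sub-sampling} applied to the inner sub-sampling $\hat S_i\subseteq S_i$ of size exactly $\tfrac{\eps|S_i|}{3+\exp(\eps+4)}$ then amplifies this to $\bigl(\eps,\tfrac{16e\eps\delta^*}{3+\exp(\eps+4)}\bigr)$-DP with respect to $S_i$. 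Basic composition combines the two outputs to $\bigl(1+\eps,\,\delta+\tfrac{16e\eps\delta^*}{3+\exp(\eps+4)}\bigr)$-DP, which is contained in $(\eps+1,\delta^*)$ as soon as $\delta^*$ satisfies $\delta+\tfrac{16e\eps\delta^*}{3+\exp(\eps+4)}\leq\delta^*$; for $\eps\leq 1$ the coefficient $\tfrac{16e\eps}{3+\exp(\eps+4)}$ is strictly and uniformly below $1$, so such a $\delta^*$ exists while still respecting the upper bound required for the outer sub-sampling. Evaluating the induction at $i=1$ and applying Claim~\ref{claim:sub-sampling} once more yields $(\eps,\delta)$-DP w.r.t.\ $S$.

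The main obstacle is bookkeeping the adaptive interaction: the sets $\hat D_i$ and $S_{i+1}$ may differ between the two neighboring executions because they are shaped by predictions which themselves depend on $S$. I would address this by invoking the \emph{worst-case} privacy guarantees of \texttt{BetweenThresholds} and \texttt{LabelBoost} (Lemmas~\ref{lem:bt-privacy} and~\ref{lemma:LabelBoostPrivacy}), which hold uniformly over all adversarial inputs and query sequences, and by fixing the adversary's internal randomness so that each step of the induction reduces to a standard adaptive DP composition applied pointwise to the view external to the phase under analysis.
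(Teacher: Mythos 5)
Your proposal is correct in outline and uses the same building blocks as the paper---the $(1,\delta)$ guarantee of the modified \texttt{BetweenThresholds} (Lemma~\ref{lem:bt-privacy} with Observation~\ref{bt-obs}), Lemma~\ref{lemma:LabelBoostPrivacy}, Claim~\ref{claim:sub-sampling}, and basic adaptive composition---but it handles the multi-phase structure by a genuinely different decomposition. The paper never inducts over phases: it observes that once $S_2$ is produced in Step~\ref{GenericBBL_LabelBoost} of the first phase, every later prediction is a (randomized) post-processing, carried out jointly by \texttt{GenericBBL} and the adversary, of the pair $\left(\mbox{View}^b_{\BBB,\min(t,R_1)},S^b_2\right)$; hence it suffices to make that pair $(\eps,\delta)$-indistinguishable, the downstream of \texttt{LabelBoost} is just the identity, and only one composition $(1,\delta)+(\eps,0)$ plus one outer amplification is needed. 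You instead carry a guarantee for the continuation mechanism $\MMM_i$ backward through the finitely many phases reached within $t$ rounds, which forces the self-consistent recursion $\delta^*=\delta+\tfrac{16e\eps}{3+\exp(\eps+4)}\delta^*$; this closes only because the subsampling constant is strictly below $1$, and it gives a slightly looser $\delta$ accounting than the paper's one-shot argument, though still within the target $(\eps,\delta)$. What your route buys is that it proves more---privacy of each continuation with respect to the intermediate labeled sets $S_i$, which is essentially the ingredient needed anyway for Claim~\ref{clm:privacy-D}---and it avoids having to formalize the ``joint post-processing'' reduction; a minor additional plus is that you apply Claim~\ref{claim:sub-sampling} after weakening the inner mechanism to $(\eps+4,\cdot)$-DP so the sampling rate $\tfrac{\eps}{3+\exp(\eps+4)}$ matches the claim exactly, whereas the paper invokes it with parameter $\eps+1$ at a smaller-than-nominal rate, implicitly using monotonicity of amplification. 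In a full write-up, do make the adaptivity bookkeeping explicit: conditioning on the adversary's randomness and the phase-$i$ answers fixes $D_i$ (hence $\hat D_i$), so the second component of your per-phase composition is an ordinary mechanism acting on $\hat S_i$ alone, and standard adaptive composition then applies pointwise.
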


\begin{proof}

\begin{figure}[h]
    \centering
    \includegraphics[width=0.9\textwidth]{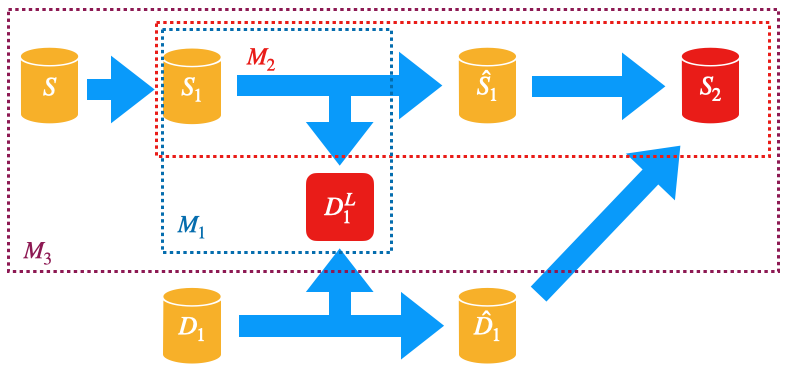}
    \caption{\label{fig:privacy_S} Privacy of the labeled sample $S$}
\end{figure}

Let $R'_1=\min(t,R_1)$. Note that $\mbox{View}^b_{{\cal B}, R'_1}$ is a prefix of $\mbox{View}^b_{{\cal B}, t}$ which includes the labels Algorithm \texttt{GenericBBL} produces in Step~\ref{GenericBBL_label_point} for the $R'_1$ first unlabeled points selected by $\cal B$. Let $S^b_2$ be the result of the first application of algorithm \texttt{LabelBoost} in Step~\ref{GenericBBL_LabelBoost} of \texttt{GenericBBL} (if $t< R_1$ we set $S^b_2$ as $\bot$). The creation of these random variables is depicted in Figure~\ref{fig:privacy_S}, where $D_1^L$ denotes the labels Algorithm~\texttt{GenericBBL} produces for the unlabeled points $D_1$.

Observe that $\mbox{View}^b_{{\cal B}, t}$ results from a post-processing (jointly by the adversary $\cal B$ and Algorithm~\texttt{GenericBBL}) of the random variable $\left(\mbox{View}^b_{{\cal B}, R'_1}, S^b_2\right)$, and hence it suffices to show that $\left(\mbox{View}^0_{{\cal B}, R'_1}, S^0_2\right)$ and $\left(\mbox{View}^1_{{\cal B}, R'_1}, S^1_2\right)$ are $(\varepsilon,\delta)$-indistinguishable. 

We follow the processes creating $\mbox{View}^b_{{\cal B}, t}$ and $S^b_2$ in Figure~\ref{fig:privacy_S}: 
(i) The mechanism $M_1$ corresponds to the loop in Step~\ref{GenericBBL_main_loop} of \texttt{GenericBBL} where labels are produced for the adversarially chosen points $D^b_1$.
By application of Lemma~\ref{lem:bt-privacy}, $M_1$ is $(1,\delta)$-differentially private.
(ii) The mechanism $M_2$, corresponds to the subsampling of $\hat{S}^b_1$ from $S^b_1$ and the application of procedure \texttt{LabelBoost} on the subsample in Step~\ref{GenericBBL_LabelBoost} of \texttt{GenericBBL} resulting in $S^b_2$. 
By application of Claim~\ref{claim:sub-sampling} and Lemma~\ref{lemma:LabelBoostPrivacy}, $M_2$ is $(\varepsilon,0)$-differentially private.
Thus $(M_1,M_2)$ is $(\varepsilon+1,\delta)$-differentially private. 
(iii) The mechanism $M_3$ with input of $S^b$ and output $\left(D_1^{b,L},S^b_2\right)=\left(\mbox{View}^b_{{\cal B}, R'_1},S^b_2\right)$ applies $(M_1,M_2)$ on the sub-sample $S^b_1$ obtained from $S^b$ in Step~\ref{GenericBBL_SubsampleS1} of \texttt{GenericBBL}. By application of Claim~\ref{claim:sub-sampling} $M_3$ is $(\varepsilon,\frac{4\varepsilon\delta}{3+\mbox{exp}(\varepsilon+1)})$-differentially private. Since $\frac{4\varepsilon\delta}{3+\mbox{exp}(\varepsilon+1)}\leq \delta$ for any $\varepsilon$, hence $\left(\mbox{View}^0_{{\cal B}, R'_1}, S^0_2\right)$ and $\left(\mbox{View}^1_{{\cal B}, R'_1}, S^1_2\right)$ are $(\varepsilon,\delta)$-indistinguishable
\end{proof}

\paragraph{Privacy of the unlabeled points $D$.}

Let $D^0,D^1 \in X^t$ be neighboring datasets of unlabeled examples and let $\mbox{View}^0_{{\cal B}, t}$ and $\mbox{View}^1_{{\cal B}, t}$ be as in Figure~\ref{fig:AdversarialExperiment} where $\left(x_1^0,\dots,x_t^0\right)=D^0$ and $\left(x_1^1,\dots,x_t^1\right)=D^1$.

\begin{claim}\label{clm:privacy-D}
For all adversaries $\cal B$, for all $t > 0$, and for any two neighbouring databases $D^0$ and $D^1$ selected by $\cal B$,
$\mbox{View}^0_{{\cal B}, t}$ and $\mbox{View}^1_{{\cal B}, t}$ are $(\eps,\delta)$-indistinguishable.
\end{claim}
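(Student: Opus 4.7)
The plan is to follow the decomposition used in the proof of Claim~\ref{clm:privacy-S}, but localised to the unique phase of \texttt{GenericBBL} in which the differing query is consumed. Let $j^*\in[t]$ be the single round at which $D^0$ and $D^1$ disagree, and let $i^*$ denote the phase that contains round $j^*$. For every phase $i<i^*$ the two executions are identical (same training set, same past queries, same randomness, same $S_i$ and $F_i$), so the adversary's view during those phases coincides. Moreover, by the Flag mechanism of Figure~\ref{fig:AdversarialExperiment}, once round $j^*$ occurs the adversary must supply $x_r^0=x_r^1$ at every later round, so the queries in phase $i^*$ at positions $r\ne j^*$ and in every phase $i>i^*$ are the same across the two executions.

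Let $R'_{i^*}$ be the number of rounds of phase $i^*$ contained in the first $t$ prediction rounds, and let $\mbox{View}^b_{{\cal B},R'_{i^*}}$ be the corresponding prefix of $\mbox{View}^b_{{\cal B},t}$. Since phases $i>i^*$ consume only (i) the labeled dataset $S_{i^*+1}^b$ produced at the end of phase $i^*$, (ii) the common later queries, and (iii) fresh randomness, the tail of $\mbox{View}^b_{{\cal B},t}$ is a (common) post-processing of $(\mbox{View}^b_{{\cal B},R'_{i^*}},S_{i^*+1}^b)$; it therefore suffices to establish $(\eps,\delta)$-indistinguishability of this pair. I will do so via a two-mechanism decomposition: $M'_1$ maps $D_{i^*}^b$ to the labels the adversary sees during phase $i^*$ by running \texttt{BetweenThresholds} on the common database $F_{i^*}$, and $M'_2$ maps $D_{i^*}^b$ to $S_{i^*+1}^b$ by subsampling $\hat D_{i^*}^b$ from $D_{i^*}^b$ at rate $\eps/(3+\exp(\eps+4))$, applying $\texttt{LabelBoost}(\hat S_{i^*}\circ\hat D_{i^*}^b)$, and subsampling from the labeled result. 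Exactly as in the treatment of $M_2$ in Claim~\ref{clm:privacy-S}, Claim~\ref{claim:sub-sampling} combined with Lemma~\ref{lemma:LabelBoostPrivacy} yields that $M'_2$ is $(\eps,0)$-differentially private with respect to the single differing entry of $D_{i^*}^b$.

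For $M'_1$ my plan is to couple the internal randomness of \texttt{BetweenThresholds} across the two executions --- identifying the noisy offset $\mu$ and all per-query noises $\nu_r$ --- so that at every round $r\ne j^*$ the noisy value $c_r=q_r(F_{i^*})+\nu_r$ is the same in both runs and \texttt{BetweenThresholds} emits the same $\leftsymb/\rightsymb/\haltsymb$ decision, unless one of the runs has already exceeded its $c_{i^*}$ budget of $\haltsymb$'s. The only round whose decision can disagree is round $j^*$ itself, so the $\haltsymb$-counts in the two runs differ by at most one; by Claim~\ref{betweenthresholds-fail-ctimes} and a union bound, the probability that either run reaches $c_{i^*}$ and causes \texttt{GenericBBL} to halt in phase $i^*$ is at most $2\beta_{i^*}$, and on the complementary event the adversary's view in phase $i^*$ is identical in the two runs (the potentially differing label at round $j^*$ is itself hidden by the Flag rule). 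This gives a $(0,2\beta_{i^*})$-indistinguishability bound for $M'_1$; combining with the $(\eps,0)$-DP of $M'_2$ by basic composition, and then propagating through the post-processing that produces the tail of the view, yields $(\eps,\delta)$-indistinguishability of $\mbox{View}^0_{{\cal B},t}$ and $\mbox{View}^1_{{\cal B},t}$.

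The delicate step is the analysis of $M'_1$: because the adversarial query $x_{j^*}$ is fed to \texttt{BetweenThresholds} directly, without the privacy-amplifying subsample that filters the training-set row in Claim~\ref{clm:privacy-S}, one cannot simply invoke Lemma~\ref{lem:bt-privacy} (whose guarantee is w.r.t.\ the database $F_{i^*}$ and not the query sequence). The coupling argument above compensates by exploiting two features peculiar to the prediction game --- the adversary never sees the label at the differing round, and Flag forces all later query pairs to agree --- at the cost of carefully matching the phase-$i^*$ failure probability $\beta_{i^*}$ against the global $\delta$ budget.
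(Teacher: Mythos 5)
Your reduction of the tail of the view to a post-processing of the pair $\bigl(\mbox{View}^b_{{\cal B},R'_{i^*}},S^b_{i^*+1}\bigr)$ is sound as a reduction, but the target it creates is not just unproven -- it is false, and this is where the proposal breaks. The map $M'_2\colon D_{i^*}\mapsto S_{i^*+1}$ is \emph{not} $(\eps,0)$-differentially private (nor $(\eps,\delta)$): in Step~\ref{GenericBBL_LabelBoost}, $S_{i^*+1}$ is a random subset of the relabeled $\hat D'_{i^*}$, i.e.\ it consists of the query points themselves, so with probability on the order of the subsampling rates (far exceeding $\delta$) it literally contains the differing query point, at which point the two distributions on $S_{i^*+1}$ are perfectly distinguishable. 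Lemma~\ref{lemma:LabelBoostPrivacy} offers no protection here: it only guarantees privacy of the composition of \texttt{LabelBoost} with a downstream algorithm that is \emph{itself} differentially private; \texttt{LabelBoost}'s raw output reproduces its input points verbatim. Your appeal to ``exactly as in the treatment of $M_2$ in Claim~\ref{clm:privacy-S}'' is precisely where the analogy fails: there the differing element sits in the labeled portion $\hat S_1$, which is discarded when forming $S_2$ (whose points come from the common $\hat D_1$), whereas here the differing element sits in $\hat D_{i^*}$, which is retained. This is why the paper's proof does not stop at $S_{i^*+1}$: its sufficient statistic is $\bigl(D_{i^*+1}^{b,L},S^b_{i^*+2}\bigr)$, i.e.\ it first composes the phase-$(i^*{+}1)$ run of \texttt{BetweenThresholds} -- $(1,\delta)$-DP with respect to its database $F_{i^*+1}$, hence with respect to $S_{i^*+1}$, by Lemma~\ref{lem:bt-privacy} and Observation~\ref{bt-obs} -- with the $(\eps,0)$ map producing $S_{i^*+2}$ (whose points come from the common $D_{i^*+1}$), obtaining an $(\eps+1,\delta)$-DP function of $S_{i^*+1}$; only then does Lemma~\ref{lemma:LabelBoostPrivacy} apply legitimately, giving $(\eps+4,4e\delta)$ with respect to $\hat S_{i^*}\circ\hat D_{i^*}$, and the subsampling of $\hat D_{i^*}$ from $D_{i^*}$ (Claim~\ref{claim:sub-sampling}) brings this down to $(\eps,\delta)$. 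In short, the leakage of the differing query into later phases must be routed through the DP mechanisms of the \emph{next} phase, not cut off at the (non-private) intermediate state $S_{i^*+1}$.

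The analysis of $M'_1$ is also not a valid privacy argument. Claim~\ref{betweenthresholds-fail-ctimes} is a utility statement: it assumes the queries are drawn i.i.d.\ from $\DDD$ and that most hypotheses in $F_{i^*}$ are accurate. The privacy adversary chooses queries adversarially and can drive the $\top$-count arbitrarily close to the budget $c_{i^*}$, so no bound of the form $(0,2\beta_{i^*})$ follows from it. Even if it did, $\beta_{i^*}=\beta/2^{i^*}$ is a learning parameter that is in general much larger than $\delta$, so your final guarantee would be $(\eps,\delta+2\beta_{i^*})$ rather than $(\eps,\delta)$; failure probabilities of the accuracy analysis cannot be charged against the privacy budget. (You are right to notice the halt-counter as a channel through which the hidden round can influence later visible outputs within phase $i^*$; the paper handles the phase-$i^*$ labels simply by asserting that at the non-differing rounds they are identically distributed across $b$, without paying for this channel -- but however one treats it, it cannot be bounded by the distributional failure probability $\beta_{i^*}$.)
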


\begin{proof}
\begin{figure}[h]
    \centering
    \includegraphics[width=0.9\textwidth]{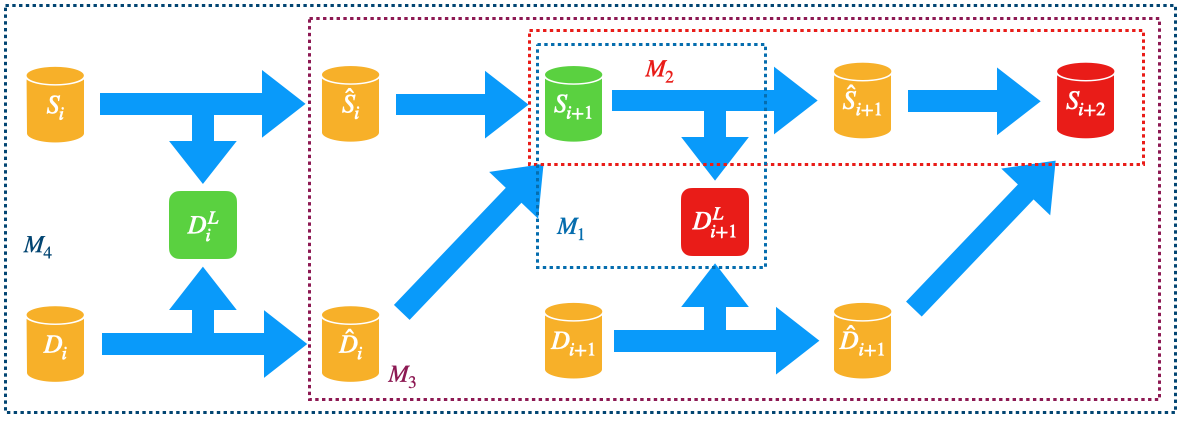}
    \caption{\label{fig:privacy_D}Privacy leakage of $D_i$}
\end{figure}

Let $D_1^0,D_2^0,\dots,D_k^0$ and $D_1^1,D_2^1,\dots,D_k^1$ be the set of unlabeled databases in step~\ref{GenericBBL_UnlabeledDatabases} of GenericBBL. Without loss of generality, we assume $D_i^0$ and $D_i^1$ differ on one entry. When $i=k$, $\mbox{View}^0_{{\cal B}, t} = \mbox{View}^1_{{\cal B}, t}$ because all selected hypothesis are the same. When $i<k$, let $R'=\min\left(\sum_{j=1}^{i+1}{R_j},t\right)$.

Similar to the analysis if Claim~\ref{clm:privacy-S}, $\mbox{View}^b_{{\cal B}, t}$ results from a post-processing of the random variable $(\mbox{View}^b_{{\cal B}, R'},S^b_{i+2})$ (if $t< \sum_{j=1}^{i+1}{R_j}$ we set $S^b_{i+2}$ as $\bot$). Note that $\mbox{View}^b_{{\cal B}, R'_1}=(D_1^{b,L},\dots,D_{i}^{b,L*},D_{i+1}^{b,L})$, and $(D_1^{b,L},\dots,D_{i-1}^{b,L},D_{i}^{b,L*})$ follow the same distribution for $b\in\{0,1\}$, where $D_{i}^{b,L*}$ is the labels of points in $D_i^b$ expect the different point. So that it suffices to show that $\left(D_{i+1}^{0,L}, S^0_2\right)$ and $\left(D_{i+1}^{1,L}, S^1_2\right)$ are $(\varepsilon,\delta)$-indistinguishable. 

We follow the processes creating $D_{i+1}^{b,L}$ and $S^b_{i+2}$ in Figure~\ref{fig:privacy_D}: 
(i) The mechanism $M_1$ corresponds to the loop in Step~\ref{GenericBBL_main_loop} of \texttt{GenericBBL} where labels are produced for the adversarially chosen points $D^b_{i+1}$.
By application of Lemma~\ref{lem:bt-privacy}, $M_1$ is $(1,\delta)$-differentially private.
(ii) The mechanism $M_2$, corresponds to the subsampling of $\hat{S}^b_{i+1}$ from $S^b_{i+1}$ and the application of procedure \texttt{LabelBoost} on the subsample in Step~\ref{GenericBBL_LabelBoost} of \texttt{GenericBBL} resulting in $S^b_{i+2}$. 
By application of Claim~\ref{claim:sub-sampling} and Lemma~\ref{lemma:LabelBoostPrivacy}, $M_2$ is $(\varepsilon,0)$-differentially private.
Thus $(M_1,M_2)$ is $(\varepsilon+1,\delta)$-differentially private. 
(iii) The mechanism $M_3$ with input of $\hat{D}_i^b$ and output $\left(D_{i+1}^{b,L},S^b_{i+2}\right)$ applies $(M_2,M_3)$ on $S_{i+1}$, which is generated from $\hat{D}_i^b$ and in Step~\ref{GenericBBL_LabelBoost} of \texttt{GenericBBL}. By application of Claim~\ref{lemma:LabelBoostPrivacy}, $M_3$ is $(\varepsilon+4,4\varepsilon\delta)$-differentially private. 
(iv) The mechanism $M_4$, corresponds to the subsampling $\hat{D}_i^b$ from $D^b_i$ and the application of $M_4$ on $\hat{D}_i^b$. By application of Claim~\ref{claim:sub-sampling}, $M_4$ is $(\varepsilon, \frac{16e\varepsilon\delta}{3+\mbox{exp}(\varepsilon+4)})$-differentially private. Since $\frac{16e\varepsilon}{3+\mbox{exp}(\varepsilon+4)}\leq 1$ for any $\varepsilon$,  $\left(D_{i+1}^{0,L}, S^0_2\right)$ and $\left( D_{i+1}^{1,L}, S^1_2\right)$ are $(\varepsilon,\delta)$-indistinguishable. 
\end{proof}

\begin{remark}
    The above proofs work on the adversarially selected $D$ because: 
    (i) Lemma~\ref{lem:bt-privacy} works on the adaptively selected queries. (We treat the hypothesis class $F_i$ as the database, the unlabelled points $x_{i,\ell}$ as the query parameters.)
    (ii) \texttt{LabelBoost} generates labels by applying one private hypothesis on points. The labels are differentially private by post-processing.
\end{remark}

\end{document}